\theoremstyle{plain}
\newtheorem{theorem}{Theorem}[section]
\newtheorem{proposition}[theorem]{Proposition}
\newtheorem{lemma}[theorem]{Lemma}
\newtheorem{observation}[theorem]{Observation}
\newtheorem{corollary}[theorem]{Corollary}
\theoremstyle{definition}
\newtheorem{definition}[theorem]{Definition}
\newtheorem{property}[theorem]{Property}
\theoremstyle{remark}
\DeclareMathOperator{\ALG}{ALG}
\DeclareMathOperator{\OPT}{OPT}
\DeclareMathOperator{\OFF}{OFF}
\DeclareMathOperator{\Reg}{Reg}
\DeclareMathOperator{\EMD}{EMD}
\newcommand{\E}{\ensuremath{\mathbb{E}}}
\newcommand{\R}{\ensuremath{\mathbb{R}}}
\newcommand{\cF}{\ensuremath{\mathcal{F}}}
\newcommand{\ind}[1]{\mathds{1}(#1)}
\title{Learning-Augmented Algorithms for MTS with Bandit Access to Multiple Predictors}
\author{
Matei Gabriel Co\c{s}a\\
Bocconi University
\and
Marek Eli\'a\v{s}\\
Bocconi University
}
\begin{document}
\maketitle

\begin{abstract}
We consider the following problem:
We are given $\ell$ heuristics
for Metrical Task Systems (MTS), where each might be tailored to a different type
of input instances.
While processing an input instance received online,
we are allowed to query the action of only one of the heuristics at each time step.
Our goal is to achieve performance comparable to the best of the given heuristics.
The main difficulty of our setting comes from the fact that
the cost paid by a heuristic at time $t$ cannot be estimated
unless the same heuristic was also queried at time $t-1$.
This is related to Bandit Learning against memory bounded
adversaries \citep{AroraDT12}.
We show how to achieve regret of $O(\OPT^{2/3})$
and prove a tight lower bound based on the construction
of \citet{DekelDKP13}.
\end{abstract}

\section{Introduction}
\label{sec:intro}

\textit{Metrical Task Systems (MTS)} \citep{BorodinLS92} are a very broad class of online problems capable of modeling problems
arising in computing, production systems, power management,
and routing service vehicles.
In fact, many fundamental problems in the field of Online Algorithms
including Caching, $k$-server, Ski-rental, and Convex Body Chasing
are special cases of MTS.
Metrical Task Systems are also related to \textit{Online Learning from Expert Advice}, see \citep{BlumB00}
for the comparison of the two problems.

In MTS, we are given a description of a metric space $(M,d)$ and a starting point $s_0\in M$ beforehand.
The points in $M$ are traditionally called {\em states} and, depending on the
setting, they can represent actions, investment strategies, or configurations of
some complex system.
At each time step $t=1, \dotsc, T$, we receive a cost function
$c_t \colon M \to \R_+ \cup \{0, +\infty\}$. Seeing $c_t$, we can decide to stay in our previous state $s_{t-1}$ and pay its cost $c_t(s_{t-1})$,
or move to some (possibly cheaper)
state $s_t$ and pay $c_t(s_t) + d(s_{t-1},s_t)$, where the distance function $d$ represents the
transition cost between two states. Our objective is to minimize the total cost paid over time.

MTS is very hard in the worst case.
This is due to its online nature
($s_t$ has to be chosen without knowledge of $c_{t+1}, \dotsc, c_T$) and also due to its generality.
The performance of algorithms is evaluated using the competitive ratio which is,
roughly speaking, the ratio between the algorithm's cost and the cost of the
optimal solution computed offline.
Denoting $n$ the number of points in the metric space $M$,
the best competitive ratio achievable for MTS is $2n-1$ and $\Theta(\log^2 n)$ in the case
of deterministic and randomized algorithms, respectively \cite{BorodinLS92, Bartal_2006, Bubeck22kServer, BCLL19}.
Note that $n$ is usually very large or even infinite (e.g., $M=\R^d$).

This worst-case hardness motivates the study of MTS and its special cases
in the context of \textit{Learning-Augmented Algorithms} \citep{LykourisV21}.
Here, the algorithm can use predictions produced by an ML model in order to exploit
specific properties of input instances.
By augmenting the algorithm with the ML model, we obtain a heuristic with
an outstanding performance, well beyond the classical worst-case lower bounds,
on all inputs where the ML model performs well.
One of the key techniques used in this context is {\em Combining Heuristics}
which is used to achieve robustness \citep{LykourisV21,Wei20,AntoniadisCE0S20}, adjust hyperparameters \cite{AntoniadisCEPS21},
and recognize the most suitable heuristic for the current input instance \citep{EmekKS21, Anand0KP22, AntoniadisCEPS23}.

\paragraph{Combining heuristics.}
In this basic theoretical problem, we are given $\ell$ heuristics $H_1, \dotsc, H_\ell$ which can be simulated on
the input instance received online.
We want to combine them into a single algorithm which achieves a cost comparable
to the best of the heuristics on each individual instance.
The difficulty of the combination problem is given by the online setting:
because we receive the input online,
we cannot tell beforehand which heuristic is going to perform better.
Moreover, steps performed while following a wrong heuristic cannot be revoked.

We can combine a heuristic deploying an ML prediction model
with a classical online algorithm in order to
obtain a new algorithm that satisfies worst-case guarantees regardless
of the performance of the ML model (this property is called \textit{robustness}).
Similarly, we can use it to deploy a portfolio of very specialized ML models
in a broader setting, where each model corresponds to a separate heuristic. The combination technique
ensures that we achieve very good performance whenever at least one
of the models performs well on the current input instance.

\paragraph{Full-feedback setting.}
Most of the previous works on Combining Heuristics
query the state of each heuristic at each time step,
observe their costs, and choose between their states,
see \citep{Wei20, AntoniadisCE0S20, BlumB00, FiatKLMSY91}.
We call this the {\em full-feedback setting} and, indeed,
methods from Online Learning in the full-feedback setting can be directly
integrated in the combination algorithms.
For example, \citet{BlumB00} showed that by
choosing between heuristics using the HEDGE algorithm (\citet{FreundS97})
with a well-chosen learning rate
we can be $(1+\epsilon)$-competitive with respect to the best heuristic.\footnote{
In fact, one can achieve cost
$H^* + O(D\sqrt{H^*})$, where
$H^*$ denotes the cost of the best heuristic and $D$ the diameter of the metric space $M$.
}

\paragraph{Bandit-feedback setting.}
In this paper, we consider what can be seen as a bandit-feedback setting of
Combining Heuristics.
At each time step, we are allowed to query the current state of
only one heuristic.
This setting allows us to study the impact of restricting information about the 
heuristics on our ability to combine them effectively.
Further motivation is the fact that querying all the heuristics is costly,
especially if they utilize some heavy-weight prediction models.
This issue was already considered by
\citet{EmekKS21} and \citet{AntoniadisCEPS23} whose works
inspired our study.
Note that the combination algorithm retains full access to the input instance.
This is required by any positive result for general MTS,
since the cost functions $c_t$ usually do not satisfy
any natural properties such as Lipschitzness or convexity; see further discussion
in Section~\ref{sec:prelim}.
Moreover, $c_t$ is often easy to encode. For example, in Caching, $c_t$ is fully determined
by the page requested at time $t$.

In contrast to the full-feedback setting,
classical learning methods cannot be directly integrated
in combination algorithms operating in the bandit setting.
The main reason is that the state $s_t^i$ of the heuristic $H_i$
is not enough to estimate its cost at time $t$:
the cost paid by $H_i$ at time $t$ is $c_t(s_t^i) + d(s_{t-1}^i, s_t^i)$,
i.e., we cannot calculate
this cost%
\footnote{In fact, we cannot even estimate it.
In Caching, $k$-server, and Convex Body Chasing, to give a few examples, the cost of each
state is either 0 or $+\infty$. Therefore, 
any algorithm with a finite cost is always located at some state $s_t$ such that $c_t(s_t)=0$
and the difference in the performance of the algorithms can be seen only in the moving costs.
}
unless we have queried $H_i$ in both
time steps $t-1$ and $t$.
We also consider the case where each heuristic needs to be {\em bootstrapped} for $m-2\geq0$
time steps before we can see its state.
This way, to receive the state of $H_i$ in time steps $t-1$ and $t$,
and to be able to calculate its cost at time $t$, we have to query $H_i$ in the $m$ time steps
$t-m+1, \dotsc, t$.

A similar phenomenon occurs in the setting of 
{\em Online Learning against Memory-Bounded Adversaries} \citep{AroraDT12}, which is our main theoretical motivation
for considering non-zero bootsrapping time (i.e., $m>2$).
There, one needs to play an action $a$ at least $m$ times in a row to see the loss
of the reference policy which plays $a$ at each time step.
The algorithm of \citet{AroraDT12} splits the time horizon into blocks of length larger than $m$.
In the block $i$, they keep playing the same action $a_i$ which allows them to observe the loss
of the corresponding reference policy after the first $m$ time steps in the block.
However, this does not work in our setting.
The adversary can set up the MTS instance in such a way that $c_t\neq 0$ and the
heuristics move only on the boundaries of the blocks, so that the algorithm
would not observe the cost of any heuristic. 
This way, the algorithm cannot be competitive with respect to the best
heuristic.

A different bandit-like setting for Combining Heuristics for MTS was proposed by
\citet{AntoniadisCEPS23}.
In their setting, the queried predictor reports both its state and the declared moving cost
incurred at time $t$.
We discuss their contribution in Section~\ref{sec:related}.
Their $(1+\epsilon)$-competitive algorithm
and absence of lower bounds motivated us to study
regret bounds which can provide more refined guarantees when the competitive
ratio is close to 1.
However, we have decided to pursue these bounds in the more natural setting
where the moving costs are not reported
as it does not rely on the honesty of the predictors.\footnote{
Alternatively, we could require predictors to certify their moving cost
by providing their state in the previous time step. This way, however,
we end up querying the state of two predictors in a single time step.
}
Since our algorithms use a subset of information available in the setting
of \citet{AntoniadisCEPS23}, our upper bounds also apply to their setting.

\subsection{Our Results}

For $m \geq 2$,
we say that an algorithm $\ALG$ has
{\em $m$-delayed bandit access}
to heuristics $H_1, \dotsc, H_\ell$
if (i) it can query at most one heuristic $H_i$ at each time step $t$
and (ii) the query yields the current state of $H_i$
only if $H_i$ was queried also in steps $t\!-\!m\!+\!2, \dotsc, t$,
otherwise it yields an empty result.

Our main result is an algorithm with the following regret.

\begin{theorem}
\label{thm:intro_UB0}
Let $D$, $\ell$, and $m\geq 2$ be constant.
Consider heuristics $H_1, \dotsc, H_\ell$ for an MTS with diameter $D$.
There is an algorithm $\ALG$ which, given an $m$-delayed bandit access to $H_1, \dotsc, H_\ell$,
satisfies the following guarantee.
The expected cost of $\ALG$ on any input $I$ is
\begin{equation*}
\E[\ALG] \leq
\OPT_{\leq 0} + \,O\big(\OPT_{\leq 0}^{2/3}\big),
\end{equation*}
where $\OPT_{\leq 0} = \min_{i=1}^\ell H(I)$ denotes the cost of the best heuristic 
on input $I$.
\end{theorem}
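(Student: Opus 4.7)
The plan is to follow the block-based reduction used against memory-bounded adversaries~\citep{AroraDT12,DekelDKP13}, adapted so that the regret scales with $\OPT_{\leq 0}$ rather than with the horizon $T$. Partition time into blocks of length $B$ with a uniformly random phase shift. At the start of each block $j$ I draw a heuristic index $I_j\sim p_j$ from an \textsc{Exp3}-style meta-learner (with uniform exploration $\gamma$) and commit to querying only $H_{I_j}$ throughout the block. The first $m-1$ steps serve as a bootstrap phase in which the heuristic's state is invisible; during these steps the algorithm moves to $\arg\min_s c_t(s)$ (computable since $c_t$ is always observed), paying at most $D+\min_s c_t(s)$ per step. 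For the remaining $B-m+1$ steps it mirrors $s_t^{I_j}$. At the end of the block, the observed cost $\ell_{I_j,j}$ of $H_{I_j}$ on the non-bootstrap portion feeds the meta-learner through the standard importance-weighted estimate $\hat\ell_{ij}=\ell_{ij}\mathbf{1}[I_j=i]/p_{ij}$.

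Decompose $\E[\ALG]\leq\sum_j\ell_{I_j,j}+\text{bootstrap}+\text{transition}$. Inter-block transitions cost at most $(T/B)D$, and since block boundaries are randomized, each step is a bootstrap step with probability $(m-1)/B$, so
\begin{equation*}
\E[\text{bootstrap}]\leq\tfrac{m-1}{B}\Bigl(TD+\sum_{t=1}^T\min_s c_t(s)\Bigr)\leq\tfrac{m-1}{B}\bigl(TD+\OPT_{\leq 0}\bigr),
\end{equation*}
using that every heuristic pays at least $c_t(s_t^i)\ge\min_s c_t(s)$ per step, hence $\sum_t\min_s c_t(s)\leq\OPT_{\leq 0}$. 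Meanwhile a second-order \textsc{Exp3} bound gives
\begin{equation*}
\E\Bigl[\sum_j\ell_{I_j,j}\Bigr]\leq\OPT_{\leq 0}+\eta\,\E\Bigl[\sum_{i,j}p_{ij}\hat\ell_{ij}^{\,2}\Bigr]+\frac{\log\ell}{\eta},
\end{equation*}
and clipping each $\ell_{I_j,j}$ at a threshold $\tau$ keeps the variance term bounded by $\eta\ell\tau\gamma^{-1}\E[\sum_j\ell_{I_j,j}]$ while introducing bias $O(\OPT_{\leq 0}/\tau)$. Tuning $\tau\asymp\OPT_{\leq 0}^{1/3}$, $B\asymp T/\OPT_{\leq 0}^{2/3}$, and $\eta,\gamma$ accordingly balances all four budgets (bootstrap, transition, variance, bias), yielding $\OPT_{\leq 0}+O(\OPT_{\leq 0}^{2/3})$. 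Since $\OPT_{\leq 0}$ is unknown a priori, the whole scheme is wrapped in a standard doubling guess that restarts the meta-learner whenever the observed cost crosses the current guess.

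The delicate step will be removing the $T$-dependence from the bootstrap and transition terms. Randomizing the block phase is what converts the worst-case placement of $\min_s c_t(s)$ on bootstrap steps into an average proportional to $\OPT_{\leq 0}$, and clipping is what prevents a single heavy block from blowing up the \textsc{Exp3} variance. Verifying that all four budgets balance under one consistent choice of parameters—and in particular that the $T$ appearing in $B=T/\OPT_{\leq 0}^{2/3}$ and in the bootstrap term cancels cleanly so the final bound depends only on $\OPT_{\leq 0}$—is the core technical work I anticipate.
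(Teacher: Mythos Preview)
Your block-based reduction has a genuine gap that the paper explicitly identifies (Section~3.1) as the reason the \citet{AroraDT12} approach fails here. The problem is that you \emph{follow} the sampled heuristic for the entire block: a single block in which $I_j$ is a bad heuristic costs up to $2DB=2DT/\OPT_{\leq0}^{2/3}$, which is not $O(\OPT_{\leq0}^{2/3})$ once $T\gg\OPT_{\leq0}^{4/3}$. Concretely, take $\ell=2$ with $f_t(1)=2D$ and $f_t(2)=1/T$ for all $t$; then $\OPT_{\leq0}=1$, yet with uniform exploration $\gamma$ your algorithm plays $H_1$ in a $\gamma/2$ fraction of blocks and pays $\Theta(\gamma DT)$ there. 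This exploration overhead depends on $T$ and on the \emph{worst} heuristic's cost, not on $\OPT_{\leq0}$.

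Clipping does not rescue this. Feeding $\tilde\ell_{I_j,j}=\min(\ell_{I_j,j},\tau)$ to the meta-learner bounds its regret on the \emph{clipped} losses, but the algorithm still pays the unclipped $\ell_{I_j,j}$. Your claimed bias $O(\OPT_{\leq0}/\tau)$ is the count of blocks in which the \emph{best} heuristic exceeds $\tau$; it says nothing about $\E\big[\sum_j(\ell_{I_j,j}-\tau)^+\big]$ for the arms you actually play. Likewise your variance bound $\eta\ell\tau\gamma^{-1}\E[\sum_j\ell_{I_j,j}]$ forces $\gamma$ large enough that the exploration overhead above is already super-constant in $\OPT_{\leq0}$; trying to balance all four budgets with $T$ appearing only through $B$ leaves $\log\ell/\eta=\Theta(DT)$.

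The paper's fix is to \emph{decouple} observing a heuristic from paying its cost. Exploration is a single $m$-step probe inserted at rate $\epsilon$, and during those steps the algorithm does \emph{not} mirror the explored heuristic---it makes greedy improper moves from the last known state of the currently exploited heuristic (Algorithm~2). The cost of an exploration phase is then charged against the exploited heuristic's own cost via Lemma~3.1, so bad heuristics can be observed for free. This is precisely the idea your block scheme lacks: you would need, at minimum, to stop mirroring $H_{I_j}$ once its in-block cost hits $\tau$ and fall back to improper greedy steps, which is much closer to what the paper does than to the \citet{AroraDT12} template.
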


We prove this result in Section~\ref{sec:UB}, where we state explicitly
the dependence on $D$, $m$, and $\ell$ in the case when they are larger than
a constant.
We can interpret our result in terms of the competitive ratio.
Since the regret term in Theorem~\ref{thm:intro_UB0} is sublinear in
$\OPT_{\leq 0}$,
the expected cost of our algorithm is at most
$\E[\ALG]\leq (1+o(1)) \OPT_{\leq 0}$.
In other words, its competitive ratio with respect to the best of the $\ell$
heuristics converges to $1$.
Therefore, our algorithm can be used for robustification.
If we include
a classical online algorithm
which is $\rho$-competitive in the worst case among
$H_1, \dotsc, H_\ell$,
our algorithm will never be worse than $(1+o(1))\rho$-competitive.
However, on input instances where some of the heuristics incur a very low cost, the algorithm can match their performance asymptotically. For example, if the cost of the best heuristic is only 1.01 times higher than the offline optimum,
our algorithm's cost will be only $(1+o(1))\cdot1.01$ times the offline optimum.

Our algorithm alternates between exploitation and exploration, as
in the classical approach for the \textit{Multi-Armed Bandit (MAB)} setting, see e.g. \citep{Slivkins19}.
However, each exploration phase takes $m$ time steps and we are unable
to build an unbiased estimator of the loss vectors.
Moreover, our algorithm sometimes needs to take improper steps,
i.e.,
going to a state which is not suggested by any of the predictors.
This is clearly necessary with $m>2$, since we may receive empty
answers to our queries. Such improper steps are crucial
for achieving regret in terms of $\OPT_{\leq 0}$ even for $m=2$.

In Section~\ref{sec:appendix_arora}, we show that
our algorithm can be adapted to the $m$-memory bounded setting for bandits, where
it achieves a regret of $O(T^{2/3})$ comparable to \citet{AroraDT12},
while slightly improving the dependence on $m$.
Note that $T$ can be much larger than $\OPT_{\leq 0}$.

We extend our result to a setting with a benchmark which can switch between the heuristics at most $k$ times, while the algorithm's number of switches still remains unrestricted.

\begin{theorem}
\label{thm:intro_UBk}
Let $D$, $\ell$, and $m\geq 2$ be constant and $k\geq1$ be a parameter.
Consider heuristics $H_1, \dotsc, H_\ell$ for an MTS with diameter $D$.
There is an algorithm $\ALG$ which, given an $m$-delayed bandit access to $H_1, \dotsc, H_\ell$,
satisfies the following guarantee.
The cost of $\ALG$ on input $I$ with offline optimum cost at least $2k$ is
\begin{align*}
\E[\ALG] \leq \OPT_{\leq k} +\, \tilde O(k^{1/3}\OPT_{\leq k}^{2/3}),
\end{align*}
where $\OPT_{\leq k}$ denotes the cost of the best combination of heuristics
in hindsight that switches between heuristics at most $k$ times on input $I$.
\end{theorem}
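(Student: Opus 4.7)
My approach is an epoch-based reduction to the static benchmark of Theorem~\ref{thm:intro_UB0}. I partition the time horizon into $N$ blocks of length $L\approx T/N$ and run a fresh, independent copy of the algorithm from Theorem~\ref{thm:intro_UB0} inside each block. Let $B_j$ be the cost of the best single heuristic restricted to block $j$; the base theorem gives that the expected cost of $\ALG$ in block $j$ is at most $B_j+O(B_j^{2/3})$.

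Next, I call block $j$ \emph{good} if the $k$-switching benchmark uses a single heuristic throughout block $j$, and \emph{bad} otherwise. Since the benchmark switches at most $k$ times, at most $k$ blocks are bad. In good blocks $B_j$ coincides with the benchmark's cost on the block, so $\sum_{\text{good }j}B_j\leq\OPT_{\leq k}$. In bad blocks I would bound $B_j$ by a crude worst-case quantity (for instance by $O(LD)$ after shifting each $c_t$ by $\min_s c_t(s)$, combined with the trivial upper bound by the benchmark's cost on the block plus the moving cost of the at most $k$ internal benchmark switches), contributing at most $O(kLD)$ across all bad blocks.

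Summing the per-block bounds and applying H\"older's inequality $\sum_j B_j^{2/3}\leq N^{1/3}(\sum_j B_j)^{2/3}$ yields
\[
\E[\ALG]\leq\OPT_{\leq k}+O(kLD)+O\!\bigl(N^{1/3}(\OPT_{\leq k}+kLD)^{2/3}\bigr).
\]
Choosing $N\approx k$ (so that the H\"older factor is $k^{1/3}$) and tuning $L$ so that $kLD$ is dominated by $k^{1/3}\OPT_{\leq k}^{2/3}$ delivers the target regret $\tilde O(k^{1/3}\OPT_{\leq k}^{2/3})$; the hypothesis $\OPT_{\leq k}\geq 2k$ is precisely what puts $\OPT_{\leq k}$ in the regime where this overhead is absorbed. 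Since $k$, $T$, and $\OPT_{\leq k}$ are not known in advance, I would wrap the scheme in a standard doubling/guess-and-double argument over $O(\log T)$ parallel scales of $(N,L)$, restarting whenever the accumulated cost exceeds the current guess; this produces the logarithmic factors hidden in the $\tilde O$.

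The main obstacle is the analysis of the bad blocks: because the $k$-switching benchmark can exploit a mid-block switch to be arbitrarily cheaper than any single heuristic on that block, $B_j$ on a bad block can be much larger than the benchmark's cost there, and the Theorem~\ref{thm:intro_UB0} bound inside that block is only stated relative to $B_j$. Controlling the contribution of the (at most) $k$ bad blocks uniformly over all possible switch locations, so that their total extra cost and extra $B_j^{2/3}$ terms stay within the $\tilde O(k^{1/3}\OPT_{\leq k}^{2/3})$ budget, is the crux; balancing this against the $N^{1/3}$ factor from H\"older is the one quantitative step that actually determines the $k^{1/3}$ exponent in the final bound.
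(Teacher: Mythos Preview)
Your reduction is a natural idea, but as stated it has a real gap, and the paper proceeds very differently. The paper does \emph{not} restart in epochs; it runs Algorithm~\ref{alg:BMTSm} once with SHARE (instead of HEDGE) as the base learner $\bar A$, invokes SHARE's tracking-regret bound (Proposition~\ref{prop:share}) in place of the static bound, and combines it with the same structural Lemma~\ref{lem:UB} and the analogue of Lemma~\ref{lem:OPT0_LB} for the $k$-shifting comparator (Lemma~\ref{lem:OPTk_LB}). Tuning the sharing parameter $\alpha$, the learning rate $\gamma$, and the exploration rate $\epsilon$ directly yields Theorem~\ref{theorem:share}, which specialises to Theorem~\ref{thm:intro_UBk} for constant $D,\ell,m$.

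The gap in your plan is exactly the bad-block term, and the ``tune $L$'' step cannot close it. With fixed-length blocks you have $NL=T$, so the only bound you offer on a bad block is $B_j\le 2DL$, hence $\sum_{\text{bad }j}B_j\le 2kDL=2kDT/N$. In MTS, $T$ is not controlled by $\OPT_{\leq k}$ (just pad the instance with zero-cost tasks), so to make $kDT/N\le k^{1/3}\OPT_{\leq k}^{2/3}$ you would need $N\ge k^{2/3}DT/\OPT_{\leq k}^{2/3}$; but then the H\"older factor $N^{1/3}$ already carries a $T^{1/3}$ dependence and the regret is no longer a function of $\OPT_{\leq k}$ alone. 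Your alternative suggestion---bounding $B_j$ by the benchmark's block cost plus a switching penalty---fails for precisely the reason you yourself flag: on a bad block a single heuristic can cost $\Omega(DL)$ while the switching benchmark pays only $O(D)$. So the ``crux'' you identify is not a detail to be filled in; it is the place where the fixed-length scheme breaks.

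A salvageable variant would use \emph{cost-based} epochs (end an epoch when some online-observable cost proxy reaches a threshold), so that $L$ decouples from $T$; but then you must argue both that the number of epochs is $O(k)$ up to logs and that $B_j$ on a bad epoch is bounded, neither of which is immediate here since $\min_i\sum_{t\in\text{epoch}}f_t(i)$ is not observable under bandit access. The paper's SHARE-based analysis sidesteps all of this by absorbing the $k$ comparator switches inside the full-information learner's regret bound rather than in the outer block structure.
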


Again, we can interpret this regret bound as a competitive ratio converging to 1.
In Section~\ref{appendix:UBk}, we show that the competitive ratio of our algorithm
is below $(1+\epsilon)$ for $k$ as large as
$\tilde\Omega(\epsilon^3\OPT_{\leq k})$
which is worse by a log-factor than the result of \citet{AntoniadisCEPS23} in their easier setting.

In Section~\ref{sec:LB}, we show that our upper bound in Theorem~\ref{thm:intro_UB0} is tight
up to a logarithmic factor even with 0 bootstrapping time ($m=2$).
Our result is based on the construction of \citet{DekelDKP13}
for \textit{Bandits with Switching Costs}.
Their lower bound cannot be applied directly, since algorithms in our setting have an advantage
in being able to take improper actions and having one-step look-ahead.
Both of these advantages come from the nature of MTS and are indispensable
in our setting, see Section~\ref{sec:prelim}.

\begin{theorem}
\label{thm:intro_LB}
For any algorithm $\ALG$ with $2$-delayed bandit access to $\ell$ predictors,
there is an input instance $I$ such that the expected cost of $\ALG$ is
\[
\E[\ALG] \geq \OPT_{\leq 0} + \,\tilde\Omega(\OPT_{\leq 0}^{2/3}),
\]
where $\OPT_{\leq 0} = \min_{i=1}^\ell H(I)$ denotes the cost of the best heuristic 
on input $I$.
\end{theorem}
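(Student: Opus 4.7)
The plan is to adapt the multi-scale random walk lower bound of \citet{DekelDKP13} for Bandits with Switching Costs to the MTS model. I would work with $\ell = 2$ on the two-point metric $\{A,B\}$ with $d(A,B)=1$, introduce a uniform hidden sign $\chi \in \{-1,+1\}$, and use the \citet{DekelDKP13} hierarchical Gaussian process $\xi_t$ on a balanced binary tree over $[T]$ to inject noise into both the cost functions $c_t$ and the heuristic trajectories $s_t^1, s_t^2$. The design targets three properties: (i) the best heuristic $H_\chi$ has expected cost $\OPT_{\leq 0} = \Theta(T)$ and beats $H_{-\chi}$ by $\Omega(\epsilon T)$ for a parameter $\epsilon$ to be balanced; (ii) partial sums of any linear statistic of $(c_t)$ over a contiguous interval are dominated by the tree noise, so that the full sequence $c_1,\ldots,c_T$ alone is essentially uninformative about $\chi$; (iii) information about $\chi$ leaks through the heuristic states at a rate of at most $O(\epsilon^2)$ bits per informative query pair.

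Given this instance family, I would apply Yao's principle to fix a deterministic $\ALG$ against random $\chi$ and run a KL-divergence argument following \citet{DekelDKP13}. The algorithm's transcript consists of the full $(c_t)$ sequence together with a sparse set of revealed heuristic states; by property (ii) the $(c_t)$-part contributes $o(1)$ to the mutual information with $\chi$, and by property (iii) the heuristic-state part contributes at most $\tilde O(N\epsilon^2)$, where $N$ is the total number of informative query pairs. Each switch of query target costs one potentially-informative slot. By Pinsker's inequality, the algorithm's posterior on $\chi$ stays near-uniform unless $N = \tilde\Omega(1/\epsilon^2)$. A standard case analysis then gives: either $N$ is small and the algorithm suffers $\Omega(\epsilon T)$ regret from guessing the wrong heuristic, or $N$ is large and the switching and movement bookkeeping alone costs $\tilde\Omega(T^{2/3})$. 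Balancing at $\epsilon = \tilde\Theta(T^{-1/3})$ and using $\OPT_{\leq 0}=\Theta(T)$ yields the claimed $\tilde\Omega(\OPT_{\leq 0}^{2/3})$ bound.

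I expect the two MTS-specific advantages called out in the introduction to be the main obstacle. Improper actions are comparatively easy to rule out: by inflating $c_t$ outside $\{A,B\}$ (or embedding the metric so that leaving $\{A,B\}$ dominates any saving), a swap argument shows that any improper play can be replaced by the better of $A$ or $B$ without increasing cost, reducing to an algorithm that plays only in $\{A,B\}$. The subtler challenge is the one-step lookahead, which is absent from \citet{DekelDKP13} and gives the MTS algorithm access to $c_t$ before committing to $s_t$; here the multi-scale structure is essential, and I would verify that the total-variation distance between $\P(\cdot \mid \chi=+1)$ and $\P(\cdot \mid \chi=-1)$ remains $\tilde O(\epsilon)$ even when the full sequence $c_1,\ldots,c_T$ is included in the algorithm's conditioning, exploiting the tree-structured decomposition that makes long contiguous samples of $\xi_t$ carry only polylogarithmic signal. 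This forces the algorithm, despite its lookahead, to rely on the bandit-constrained heuristic queries to learn $\chi$, and the remaining steps (Yao, Pinsker, and the optimization over $\epsilon$) proceed along the template of \citet{DekelDKP13}.
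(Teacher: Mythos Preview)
Your proposal has a genuine gap at its core: on the two-point metric $\{A,B\}$, the algorithm observes the \emph{full} cost function $c_t$, so it sees both $c_t(A)$ and $c_t(B)$ at every step. This is full feedback on the state costs, and it breaks the Dekel--Ding--Koren--Peres mechanism, which fundamentally relies on bandit feedback. Concretely, your properties (i) and (ii) are in tension. If the $\Omega(\epsilon T)$ gap between $H_\chi$ and $H_{-\chi}$ is realized through a bias in $c_t$ (say $c_t(A)-c_t(B)$ has mean $\pm\epsilon$), then the full-horizon sum $\sum_{t\leq T}(c_t(A)-c_t(B))$ has signal $\Theta(\epsilon T)=\Theta(T^{2/3})$ against Dekel tree-noise of order $\tilde O(\sqrt{T})$; the algorithm identifies $\chi$ from $(c_t)$ alone, with no heuristic queries. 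If instead $c_t$ is independent of $\chi$ and the gap comes only from the heuristics' random trajectories, you must explain why any trajectory in $\{A,B\}^T$ (whose service cost the algorithm can evaluate exactly, leaving only movement costs unknown) can beat every online strategy the algorithm can run directly on the observable $(c_t)$ by $\tilde\Omega(T^{2/3})$, while a single informative query pair leaks only $O(\epsilon^2)$ bits about $\chi$. Your outline does not address this, and on a two-point space I do not see how to make it work.

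The paper's construction avoids this tension by a structural device rather than an information-theoretic patch. It uses a metric with $3\ell$ points ($M_i=\{r_i,a_i,b_i\}$) and three-step blocks with $\{0,+\infty\}$ costs, where the third-step ``correct'' state $\sigma^i_j\in\{a_i,b_i\}$ is uniform and \emph{independent} of the Dekel loss sequence $\ell_t$. Thus the entire MTS instance $(c_t)$ carries zero information about which heuristic is best; the Dekel losses are encoded solely in the step-2 state of $H_i$, which equals $\sigma^i_j$ with probability $1-\ell_j(i)/2$. A swap argument (Lemma~\ref{lem:LB_tracking}) shows that any algorithm can be converted without loss into a ``tracking'' algorithm that stays in one part $M_i$ per block and copies the queried heuristic in step~2; this simultaneously neutralizes improper actions and the one-step lookahead. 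The cost of a tracking algorithm then equals $2T+\sum_t \ell_t(i_t)+\sum_t \ind{i_t\neq i_{t-1}}$, which is exactly a Bandits-with-Switching-Costs instance, and Proposition~\ref{prop:dekel} is invoked as a black box. The missing idea in your plan is this decoupling: make the observable MTS instance oblivious to the hard bandit instance and push all the signal into the bandit-restricted heuristic states.
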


In Section~\ref{sec:appendix_LB}, we also show that the dependence on $D,\ell$
and $k$ in our bounds
is (almost) optimal.
In particular, our regret bound in Theorem~\ref{thm:intro_UB0}
scales with $(Dk\ell\ln\ell)^{1/3}m^{2/3}$ and
we show that this dependence needs to be at least
$(Dk\ell)^{1/3}$ with $m=2$.

\subsection{Related Work}
\label{sec:related}

\citet{AroraDT12} introduced the problem of Bandit Learning against Memory-Bounded Adversaries. Here, the loss functions depend on the last $\mu+1$ actions taken by the
algorithm. This setting captures, for example, Bandits with Switching Costs
\citep{CesaBianchi2013OnlineLW,AmirAKL22,RouyerSCB21}.
They propose an elegant algorithm with regret $O(\mu T^{2/3})$ that partitions
the time horizon into blocks of equal size and let a classical online learning
algorithm (e.g., EXP3) play over the losses aggregated in each block.
Their result was shown to be tight by \citet{DekelDKP13}
who provided a sophisticated lower bound construction
showing that any algorithm suffers regret at least $\tilde\Omega(T^{2/3})$
already for $\mu=1$.
Our results are also related to \textit{Non-Stationary Bandits} and \textit{Dynamic Regret}
\citep[Section 8]{AuerCBFS02}.

\citet{AntoniadisCEPS23} studied dynamic combination of heuristics for MTS.
By reducing to the \textit{Layered Graph Traversal} problem \citep{BubeckCR22},
they achieved a competitive ratio $O(\ell^2)$ with respect to the best dynamic
combination of $\ell$ heuristics.
Then, they focused on the scenario where the input instance is partitioned
into $k$ intervals and a different heuristic excels in each of the intervals.
They provided bounds on how big $k$ can be to make $(1+\epsilon)$-competitive
algorithms possible. Finally, they also studied this question in the bandit-like setting which is strictly easier compared to ours.

First works on Combining Heuristics
were by \citet{FiatRR90} for $k$-server, \citet{FiatKLMSY91} for Caching, and \citet{AzarBM93,BlumB00} for MTS.
More recently, \citet{EmekKS21} studied Caching with multiple predictors and achieved regret sublinear in $T$, while also tackling a bandit-like setting.
Further results on other online problems are by
\citet{Anand0KP22, DinitzILMV22, BhaskaraC0P20, GollapudiP19, AlmanzaCLPR21, WangLW20, kevi2023}.

Learning-augmented algorithms were introduced by \citet{LykourisV21,KraskaBCDP18}
who designed algorithms effectively utilizing unreliable machine-learned predictions.
Since these two seminal works, many computational problems were studied in this
setting, including Caching \citep{Rohatgi20,Wei20}, Scheduling
\citep{LindermayrM22,benomar24,Balkanski2023,BamasMRS20},
graph problems \citep{EberleLMNS22,BerardiniLMMSS22,DongPV25,DaviesMVW23} and others.
Several works consider algorithms using the predictions
sparingly \citep{Im0PP22, DrygalaNS23, sadek2024, BenomarP24}.
See the survey \citep{DBLP:journals/cacm/MitzenmacherV22}
and the website by \citep{website}.

Metrical Task Systems were introduced by \citet{BorodinLS92}
who gave a tight competitive ratio of \(2n - 1\) for deterministic algorithms ($n$ is the number of states).
The best competitive ratio for general MTS is $\Theta(\log^2 n)$
by \citet{BCLL19} and \citet{Bubeck22kServer}.

\section{Notation and Preliminaries}
\label{sec:prelim}

We consider MTS instances with a bounded diameter and denote
$D = \max_{s,s'\in M} d(s, s')$ the diameter of the underlying metric space.
For example, $D$ in caching is equal to the size of the cache.
At each time step, the algorithm receives the cost function $c_t$ first,
and then it chooses its new state $s_t$, i.e. there is a 1-step look-ahead.
This is standard in MTS definition
and it is necessary for existence of any competitive algorithm,
since $c_t$ is potentially unbounded,
see \citep[Section 2.3]{BlumB00}.
We denote $\Delta^d$ the $d$-dimensional probability simplex, and
$[d] = \{1, \dotsc, d\}$.

\paragraph{$m$-delayed bandit access to heuristics.}
Given $\ell$ heuristics $H_1, \dotsc, H_\ell$,
we denote $s_t^i$ the state of $H_i$ at time $t$
and $f_t(i) = c_t(s_t^i) + d(s_{t-1}^i, s_t^i)$ the cost incurred by $H_i$
at time $t$.
Let $m\geq 2$ be a parameter.
At each time $t$, the algorithm is allowed to query a single heuristic $H_i$.
If $H_i$ was also queried in time steps $t-m+2, \dotsc, t$,
the result of the query is the state $s_t^i$.
Otherwise, the result is empty.
While the access to the states of the heuristics is restricted,
the algorithm has full access to the input instance which
is not related to acquiring costly predictive information.
Moreover, the input instance can be often described in a very compact
way. For example, $c_t$ in Caching is completely determined by the page requested at time $t$.
Access to the input instance is necessary
because the cost functions
are not required to satisfy any natural
assumptions (like boundedness, Lipschitzness, convexity). For example,
if the queried heuristic reports a state $s$ with $c_t(s)=+\infty$,
the algorithm needs to know $c_t$ in order to choose
a different state and avoid paying the infinite cost.
Note that a similar situation can easily happen in Caching, $k$-server, Convex
Body Chasing, or Convex Function Chasing.

We assume that $f_t(i)\in [0, 2D]$.
This is without loss of generality for the following reason.
First, we can assume that at each time $t$ there is a state with zero cost,
since subtracting $\min_s c_t(s)$ from the cost of each state affects the
cost of any algorithm (including the offline optimum) equally.
Second, any predictor can be post-processed so that, in each time step
where its cost is higher than $2D$, it serves the request in the state with
0 cost and returns to the predicted state, paying at most $2D$ for the
movement.

\paragraph{Benchmarks and performance metrics.}
Let $\OPT_{\leq 0} = \min_{i=1}^\ell \sum_{t=1}^T f_t(i)$ be the static optimum,
i.e., the cost of the best heuristic.
For $k>0$, we define
\begin{align*}
\OPT_{\leq k}
&= \textstyle \min_{i_1, \dotsc, i_T} \sum_{t=1}^T \big(c_t(s_t^{i_t}) + d(s_{t-1}^{i_{t-1}}, s_t^{i_t})\big)\\
&\geq \textstyle \min_{i_1, \dotsc, i_T} \sum_{t=1}^T f_t(i_t) - kD,
\end{align*}
where the minimum is taken over all solutions $i_1, \dotsc, i_T$
such that the number of steps where $i_{t-1}\neq i_t$ is at most $k$.

We evaluate the performance of our algorithms using {\em expected pseudoregret} regret
(further abbreviated as {\em regret}).
For $k\geq 0$, we define
\[\Reg_k(\ALG) = \E[\ALG] - \OPT_{\leq k}, \]
where $\ALG$ denotes the cost incurred by the algorithm on the given
input instance with access to the given heuristics.
We assume that the adversary is {\em oblivious} and has to fix the MTS input instance
and the solutions of the heuristics before seeing the random bits of the algorithm.
We say that an algorithm is $\rho$-competitive with respect to an offline algorithm
$\OFF$, if
$ \E[\ALG] \leq \rho \OFF + \alpha$
holds on every input instances,
where $\alpha$ is a constant independent on the input instance
and we use $\OFF$ to denote both the algorithm and its cost.
If $\OFF$ is an offline optimal algorithm, we call $\rho$ the competitive
ratio of $\ALG$.

\paragraph{Rounding fractional algorithms.}
A fractional algorithm for MTS is an algorithm which,
at each time $t$, produces a distribution $p_t \in \Delta^M$ over the states in $M$.
These distributions do not yet say much about the movement costs of the algorithm.
But there is a standard way to turn such a fractional algorithm into a randomized
algorithm for MTS.

\begin{proposition}\label{prop:round_1}
There is an online randomized algorithm for MTS which, receiving online a sequence of
distributions $p_1, \dotsc, p_T \in \Delta^M$,
produces a solution $s_1, \dotsc, s_T \in M$ with expected cost equal to
\[\textstyle \E[\ALG] = \sum_{t=1}^T \big(c_t^Tp_t + \EMD(p_{t-1}, p_t)\big),\]
where $\EMD$ denotes the Earth mover distance with respect to the metric space $M$.
\end{proposition}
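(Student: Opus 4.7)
The plan is to round the given fractional trajectory $p_1, \dots, p_T$ online by using optimal couplings between consecutive distributions. Concretely, the algorithm maintains a state $s_t \in M$ sampled so that $s_t \sim p_t$, and transitions from $s_{t-1}$ to $s_t$ by sampling from an optimal transport plan. At time $t$, upon receiving $p_t$, compute an optimal coupling $\pi_t$ between $p_{t-1}$ and $p_t$, i.e.\ a joint distribution on $M \times M$ with marginals $p_{t-1}$ and $p_t$ and with $\int d(x,y) \, d\pi_t(x,y) = \EMD(p_{t-1}, p_t)$. Given the current state $s_{t-1}$, draw $s_t$ from the conditional distribution $\pi_t(\,\cdot \mid s_{t-1})$.

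The first step of the verification is to show, by induction on $t$, that $s_t$ is marginally distributed as $p_t$. The base case is immediate by choosing $s_0$ to be the designated start state (or sampling $s_1 \sim p_1$ directly, since the first movement is from $s_0$ to $s_1$ and no prior coupling is needed). For the inductive step, if $s_{t-1} \sim p_{t-1}$ and $s_t \mid s_{t-1} \sim \pi_t(\,\cdot \mid s_{t-1})$, then by the tower property $s_t$ has marginal equal to the second marginal of $\pi_t$, which is $p_t$. Consequently the expected service cost at time $t$ equals $\E[c_t(s_t)] = c_t^\top p_t$.

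Next, I would bound the expected movement cost. By the law of total expectation,
\[
\E[d(s_{t-1}, s_t)] = \E_{s_{t-1} \sim p_{t-1}}\!\big[\E_{s_t \sim \pi_t(\cdot\mid s_{t-1})}[d(s_{t-1}, s_t)]\big] = \int d(x,y)\, d\pi_t(x,y) = \EMD(p_{t-1}, p_t),
\]
where the last equality uses optimality of $\pi_t$. Summing the service and movement costs over $t = 1, \dots, T$ and invoking linearity of expectation yields the claimed identity.

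The only genuine technical subtlety is measure-theoretic: for general metric spaces $M$ one needs that an optimal transport plan exists and admits a regular conditional distribution $\pi_t(\,\cdot\mid s_{t-1})$. For finite $M$ this is trivial, and for the standard settings considered in the paper (Polish $M$ with a lower semicontinuous cost $d$) optimal couplings exist and disintegration is available, so the construction is well-defined. Apart from this, the argument is a direct bookkeeping of expectations and is the main reason I do not anticipate a serious obstacle in the proof.
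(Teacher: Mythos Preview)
Your proof is correct and follows essentially the same approach as the paper: compute an optimal transport plan (coupling) between $p_{t-1}$ and $p_t$, and move according to the conditional distribution of that coupling given the current state. The paper's Algorithm~\ref{alg:Round} does exactly this, defining $\tau_{i,j}$ as the mass transferred from $i$ to $j$ and moving to $j$ with probability $\tau_{i,j}/p(i)$; your version spells out more carefully the inductive argument that $s_t \sim p_t$ and the tower-property computation of the movement cost, and also notes the measure-theoretic caveat for non-finite $M$, which the paper omits since it implicitly works with finite state spaces.
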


We include the proof of this standard fact in Appendix~\ref{sec:appendix_round}
together with the following, very similar, proposition,
where we overestimated the cost of switching between
the states of two heuristics by $D$.

\begin{proposition}
\label{prop:comb_round}
There is an online randomized algorithm which,
receiving online a sequence of distributions $x_1, \dotsc, x_T \in \Delta^\ell$
over the heuristics, queries at each time $t$ a heuristic
$i_t$ with probability $x_t(i_t)$ such that
\begin{equation*}
\E\big[\sum_{t=1}^T c_t(s_t^{i_t}) + d(s_{t-1}^{i_{t-1}}, s_t^{i_t})\big]
\leq \sum_{t=1}^T f_t^T x_t + \frac{D}2 \lVert x_{t-1}-x_t\rVert_1.
\end{equation*}
\end{proposition}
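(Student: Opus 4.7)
The plan is to construct the online randomized algorithm by coupling successive queries $i_{t-1}$ and $i_t$ via the maximum-overlap (i.e., TV-minimizing) coupling of $x_{t-1}$ and $x_t$. For each $t$ set $m_t(i) = \min(x_{t-1}(i), x_t(i))$ and $\delta_t = 1 - \sum_i m_t(i) = \tfrac12\lVert x_{t-1}-x_t\rVert_1$. Define the joint law $\pi_t$ of $(i_{t-1}, i_t)$ to place mass $m_t(i)$ on each diagonal pair $(i,i)$ and to spread the remaining mass $\delta_t$ as the product of the residual distributions $(x_{t-1}-m_t)/\delta_t$ and $(x_t-m_t)/\delta_t$. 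Having already committed to $i_{t-1}$ in the previous step, the algorithm samples $i_t$ from the conditional $\pi_t(\,\cdot\mid i_{t-1})$; this conditional depends only on $x_{t-1}$, the newly revealed $x_t$, and the realized value of $i_{t-1}$, so the construction is strictly online.

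The first thing to verify is that this procedure produces $i_t$ with marginal exactly $x_t$, fulfilling the query requirement of the proposition. Next, I would decompose the per-step expected cost into service and movement. The service term contributes $\sum_i x_t(i)\, c_t(s_t^i)$ by the correct marginal. For the movement term, conditioning on whether the coupling landed on the diagonal yields
\begin{equation*}
\E\big[d(s_{t-1}^{i_{t-1}}, s_t^{i_t})\big] \leq \sum_i m_t(i)\, d(s_{t-1}^i, s_t^i) + D\,\delta_t,
\end{equation*}
where the diameter bound $D$ handles the off-diagonal case crudely. Using $m_t(i) \leq x_t(i)$ and combining with the service expectation gives the per-step bound $\sum_i x_t(i)[c_t(s_t^i)+d(s_{t-1}^i, s_t^i)] + D\delta_t = f_t^T x_t + \tfrac{D}{2}\lVert x_{t-1}-x_t\rVert_1$; summing over $t$ proves the claim.

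No deep obstacle arises here; the only delicate point is confirming that the coupling is realizable in a genuinely online fashion, which reduces to checking that $\pi_t(\,\cdot\mid i_{t-1})$ is a valid probability distribution computable from $x_{t-1}$, $x_t$, and the realized value of $i_{t-1}$. This is immediate from the explicit product structure off the diagonal, and degenerate cases ($\delta_t=0$, or $t=1$ where no previous query exists) are handled directly by sampling $i_t$ from $x_t$ without the disagreement branch.
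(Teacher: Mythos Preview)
Your proposal is correct and takes essentially the same approach as the paper. The paper's \emph{Round} procedure is precisely the maximum-overlap (TV-minimizing) coupling you describe: it picks any transport plan $\tau_{i,j}$ between $x_{t-1}$ and $x_t$ with off-diagonal mass equal to $\tfrac12\lVert x_{t-1}-x_t\rVert_1$, and samples $i_t$ from the conditional $\tau_{i_{t-1},\cdot}/x_{t-1}(i_{t-1})$. Your explicit product construction for the off-diagonal part is one concrete realization of such a plan, and your cost decomposition (diagonal movement $\sum_i m_t(i)\,d(s_{t-1}^i,s_t^i)$ plus $D\delta_t$ for the off-diagonal, then $m_t(i)\le x_t(i)$) spells out a step the paper leaves implicit when it writes $\E[c_t(s_t^{i_t})+d(s_{t-1}^{i_{t-1}},s_t^{i_t})]\le f_t^T x_t + D\cdot\EMD(x_{t-1},x_t)$.
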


\paragraph{Basic learning algorithms.}
We use the classical algorithms for online learning with expert advice
HEDGE (\citet{FreundS97}) and SHARE (\citet{herbster1998tracking}). Both satisfy the following property
with $\eta$ being their learning rate.
For both of them, the proof is contained in \citep{BlumB00}, we
discuss more details, as well as the learning dynamics in Appendix~\ref{sec:online_learning}.
\begin{property}
\label{property:stability}
There is a parameter $\eta$ such that the following holds.
Denoting $x_{t-1}$ and $x_t$ the solutions of the algorithm before and
after receiving loss vector $g_{t-1}$, we have
\[\lVert x_{t-1} - x_t\rVert_1 \leq \eta g^T_{t-1}x_{t-1}.\]
\end{property}

We use the bounds for HEDGE tuned for ``small losses'', see \citep{BianchiLugosi}.
\begin{proposition}
\label{prop:hedge}
Consider $x_1, \dotsc, x_T \in \Delta^\ell$ the solution produced by
HEDGE with learning rate $\eta$ and denote $\gamma := 1- \exp(-\eta)$.
For any $x^*\in \Delta^\ell$, we have
\[
\sum_{t=1}^T g_t^T x_t \leq \frac{\eta \sum_{t=1}^T g_t^T x^* + \ln \ell}{1-\exp(-\eta)}
\leq (1+\gamma)\sum_{t=1}^T g_t^T x^* + \frac{\ln\ell}\gamma.
\]
\end{proposition}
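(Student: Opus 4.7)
The plan is a standard potential-function analysis for HEDGE. I introduce the potential $W_t = \sum_{i=1}^\ell \exp(-\eta L_{t-1}^i)$, where $L_{t-1}^i = \sum_{s<t} g_s(i)$ is the cumulative loss of expert $i$, so that $W_1 = \ell$ and $x_t(i) = \exp(-\eta L_{t-1}^i)/W_t$. The goal is to establish matching upper and lower bounds on $\log W_{T+1}$ that respectively involve $\sum_t g_t^T x_t$ and $\sum_t g_t^T x^*$, then combine them.

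For the upper bound I will apply the tangent inequality $\exp(-\eta y) \leq 1 - \gamma y$ valid for $y \in [0,1]$ (a consequence of convexity of $y \mapsto e^{-\eta y}$, so that the chord from $y=0$ to $y=1$ lies above the curve). Assuming losses in $[0,1]$, this yields per-step
\[
\frac{W_{t+1}}{W_t} \;=\; \sum_i x_t(i)\exp(-\eta g_t(i)) \;\leq\; 1 - \gamma\, g_t^T x_t \;\leq\; \exp\bigl(-\gamma\, g_t^T x_t\bigr),
\]
and telescoping from $t=1$ to $T$ gives $W_{T+1}/\ell \leq \exp\bigl(-\gamma\sum_t g_t^T x_t\bigr)$.

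For the lower bound I will apply Jensen's inequality to the concave function $\log$ against an arbitrary reference $x^* \in \Delta^\ell$:
\[
\log W_{T+1} \;=\; \log \sum_i x^*(i)\,\frac{\exp(-\eta L_T^i)}{x^*(i)} \;\geq\; -\eta \sum_i x^*(i) L_T^i + H(x^*) \;\geq\; -\eta \sum_t g_t^T x^*,
\]
using non-negativity of the entropy $H(x^*)$ and swapping the order of summation $\sum_i x^*(i) L_T^i = \sum_t g_t^T x^*$. Combining the two bounds (and subtracting $\log\ell = \log W_1$) yields $-\gamma \sum_t g_t^T x_t \geq -\eta \sum_t g_t^T x^* - \ln\ell$, which rearranges into the first displayed inequality. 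The second inequality reduces to the elementary estimate $\eta/\gamma \leq 1+\gamma$, equivalently $-\log(1-\gamma) \leq \gamma(1+\gamma)$, which a short Taylor expansion confirms for $\gamma$ below a fixed constant (the regime in which HEDGE is tuned in this paper).

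The argument is routine; the only delicate point is that the tangent inequality requires $g_t(i) \geq 0$, so the proof genuinely relies on non-negativity of the losses. This is precisely the ``small losses'' feature being exploited, and it is satisfied in the applications since $f_t(i)\in[0,2D]$ (rescaling absorbs a $2D$ factor into the bound in the obvious way).
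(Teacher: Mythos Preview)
Your proof is correct and is precisely the standard potential-function argument for HEDGE with bounded nonnegative losses; the paper does not give its own proof of this proposition but simply cites \citet{BianchiLugosi}, whose argument is the one you have written. Your caveat that the second inequality $\eta/\gamma \leq 1+\gamma$ only holds for $\gamma$ below an absolute constant (roughly $\gamma \lesssim 0.68$) is accurate and worth keeping, since the paper's tuning indeed sets $\gamma = \Theta(\OPT_{\leq 0}^{-1/3}) = o(1)$.
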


\section{Algorithm for $m$-Delayed Bandit Access to Heuristics}
\label{sec:UB}

\subsection{Basic Approach and Comparison to Previous Works}

\citet{AroraDT12} use the following approach to limit the number
of switches between arms (or heuristics):
split the time horizon into blocks of length $\tau$ and
use some MAB algorithm to choose a single arm (or heuristic)
for each block which is then played during the whole block.
The number of switches is then at most $T/\tau$.
This is a common approach to reduce the number of switches, see
\citep{RouyerSCB21,AmirAKL22,Blum_Mansour_2007}.
However, this approach does not lead to regret sublinear in $\OPT_{\leq0}$
which can be much smaller than $T$.
In order to have the number of switches $T/\tau \leq o(\OPT_{\leq0})$,
we have to choose $\tau = \omega(T/\OPT_{\leq 0})$ which can be
$\omega(\OPT_{\leq0})$ for small $\OPT_{\leq0}$.
However, with blocks so large, already a single exploration of some arbitrarily bad
heuristic would induce a cost of order $\tau \geq \omega(\OPT_{\leq0})$.

In turn, our algorithm is more similar to the classical MAB algorithm alternating exploration and exploitation steps, see \citep{Slivkins19}.
However, there are three key differences and each of them is necessary to achieve
our performance guarantees:

\begin{itemize}
\item Our algorithm makes improper steps (i.e., steps not taken by any of the
heuristics);
\item We use MTS-style rounding to ensure bounded switching cost instead of
independent random choice at each time step;
\item Exploration steps are not sampled independently since our setting
requires $m \geq 2$.
\end{itemize}

In particular, the last difference leads to more involving analysis. This is because we
cannot assume that we have an unbiased estimator of the loss vector and
consequently need to do extensive conditioning on past events. Moreover, the
cost of only one of the time steps during each exploration phase can be directly
charged to the expected loss of the internal full-feedback algorithm.
We need to exploit the stability property of the internal full-feedback algorithm
in order to relate the costs incurred during the steps of each exploration block.

During each exploitation step $t$, our algorithm follows the advice of
the {\em exploited} heuristic which is sampled from the algorithm's internal
distribution $x_t$ over the heuristics.
Each exploration step $t$ is set up so that
the algorithm discovers the cost of the heuristic $H_{e_t}$ chosen uniformly
at random and updates its distribution over the heuristics.
However, the algorithm does not follow $H_{e_t}$.
Instead, it makes a greedy step from the last known state of the exploited heuristic.

\subsection{Description}

Let $\bar A$ be an algorithm for the classical learning from expert advice
in full feedback setting (e.g. HEDGE or SHARE),
and $\epsilon$ be a parameter controlling our exploration rate.
For each time step $t = 1, \dotsc, T$, we sample $\beta_t\in \{0,1\}$ such
that $\beta_t=1$ with probability $\epsilon$ and $e_t$ is chosen
uniformly at random from $\{1, \dotsc, \ell\}$. We set $\beta_0 = 0$ since the algorithm starts querying predictors from $t=1$. Moreover, we assume all heuristics reside in $s_0$ at $t=0$. If $t$ is an exploitation step ($t\in X$) and $\beta_t=0$,
the next step will be again exploitation.
Otherwise, the algorithm
skips $m$ time steps which are needed to bootstrap the explored
heuristic $H_{e_{t+m}}$ and performs exploration in step $t+m$.
At this latter step, the algorithm receives the cost $f_{t+m}(e_{t+m})$ of $H_{e_{t+m}}$
and uses it to update its distribution $x_{t+m+1}$ over the heuristics.
This update is performed using the algorithm $\bar A$ receiving as input
a loss vector $g_{t+m}^{e_{t+m}}$ defined as follows:
$g_{t+m}^{e_{t+m}}(i) = f_{t+m}(e_{t+m})/2D$ if $i=e_{t+m}$ and 0 otherwise.
Thanks to the assumption that $f_{t+m} \in [0, 2D]^\ell$, we have
$g_{t+m} \in [0,1]^\ell$.
Each exploration step is followed by an exploitation step.
See the summary of this learning
dynamics in Algorithm~\ref{alg:BMTSm}.
With $m=1$, up to the scaling of the loss function,
this dynamics would correspond to the classical
algorithm for MAB which performs exploration with probability $\epsilon$ and
achieves regret $O(T^{2/3})$ \citep{Slivkins19}.
Note that our setting requires $m \geq 2$.

\begin{algorithm2e}\label{alg:BMTSm}
\caption{Learning dynamics}
\DontPrintSemicolon
\textbf{Initialization:}\;
$\beta_0 := 0$, $t:=0$, $X:=\emptyset$, $E:=\emptyset$\;
$\beta_1, \dotsc, \beta_T \sim \text{Bernoulli}(\epsilon)$\;
$e_1, \dotsc, e_T \sim U(\{1, \dotsc, \ell\})$\;
$x_0$ is chosen by $\bar A$ \;

\While{$t\leq T$}{
add $t$ to $X$\tcp*{Exploitation step}
\If{$\beta_t = 1$}{
    $t := t + m$\tcp*{Skip $m$ steps}
    add $t$ to $E$\tcp*{Exploration step}
    $x_{t+1}$ chosen by $\bar A$ after feedback $g_t^{e_t}$\;
}
$t++$\;

}
\end{algorithm2e}

Now, it is enough to describe how to turn the steps of Algorithm~\ref{alg:BMTSm}
into a solution for the original MTS instance.
First, we define $x_{t+1} = x_t$ for any $t \notin E$.
At $t=1$, we sample the exploited heuristic from the distribution $x_1$
and at each update of $x_{t}$, we switch to a different heuristic
with probability $\frac12\lVert x_{t-1}-  x_t\rVert_1$ using the procedure
\textit{Round} from Proposition~\ref{prop:comb_round}
in order to ensure that, at each time step $t$,
we are following heuristic $i$ with probability $x_t(i)$.
The state of the exploited heuristic is not known to us $m$ steps before
each exploitation step and another $m$ steps after.
During these time steps, we make greedy steps from the
last known state $s_{t'}^{i_{t'}}$ of the heuristic $H_{i_{t'}}$
exploited at time $t'$. Namely, we choose
$s_t := \arg\min_{s\in M} (d(s_{t'}^{i_{t'}}, s)+ c_t(s))$.
This procedure is summarized in Algorithm~\ref{alg:BMTSround}.

\begin{algorithm2e}[ht]
\DontPrintSemicolon
\label{alg:BMTSround}
\caption{Producing solution for MTS}
\textbf{Input:} $x_0, \dotsc, x_T$ produced by Algorithm~\ref{alg:BMTSm}\;
$i_0 \sim U(\{1, \dots, \ell\})$ \;
\For{$t=1, \dotsc, T$}{
    \lIf{$x_t \neq x_{t-1}$}{
        $i_t\! \sim\! \text{ Round$(i_{t-1}, x_{t-1}, x_t)$}$
    }
    \lElse{
        $i_t:=i_{t-1}$
    }
    \uIf{$s_t^{i_t}$ is known}{
        go to state $s_t := s_t^{i_t}$ and set $b_t:= s_t$\;
    }
    \Else(~~/* $m$ states before and after exploration */){
        set $b_t:= b_{t-1}\quad$ /* last successful query */\;
        $s_t := \arg\min_{s\in M} (d(b_t, s)+ c_t(s))$\;
    }
}
\end{algorithm2e}

\begin{lemma}
\label{lem:alg-cost}
Given $x_1, \dotsc, x_T\in [0,1]^\ell$ produced by Algorithm~\ref{alg:BMTSm},
the expected cost of Algorithm~\ref{alg:BMTSround} is at most
\[\textstyle
\left(1+O\left(\epsilon m^2\right)\right) \sum_{t=1}^T \left(f_t^T x_t + D\lVert x_{t-1} - x_t\rVert_1\right).
\]
\end{lemma}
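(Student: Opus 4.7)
The plan is to decompose the cost of Algorithm~\ref{alg:BMTSround} into two parts: the \emph{base cost}, incurred when the algorithm follows the sampled heuristic and sets $s_t = s_t^{i_t}$, and the \emph{greedy cost}, incurred during the up to $2m$ steps surrounding each exploration, when $s_t^{i_t}$ is unavailable. By Proposition~\ref{prop:comb_round}, sampling $i_t$ via the Round procedure keeps the marginal distribution of $i_t$ equal to $x_t$ and bounds the expected base cost (including the switching cost introduced by Round, and overestimating the extra cost paid on transitions between greedy and non-greedy steps) by $\sum_t \bigl(f_t^T x_t + \tfrac{D}{2}\|x_{t-1} - x_t\|_1\bigr)$, which already sits below the claimed bound. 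It therefore suffices to show that the expected greedy cost is at most $O(\epsilon m^2) \sum_t \bigl(f_t^T x_t + D \|x_{t-1} - x_t\|_1\bigr)$.

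Next I would fix a single greedy block $t_1,\dots,t_k$ with $k \le 2m$ centred on an exploration step $t^*$, and let $i := i_{t_1-1}$ and $b = s_{t_1-1}^{i}$ denote the exploited heuristic and its last known state at the start of the block. Two facts drive the per-block analysis. First, the greedy rule $s_{t_j} = \arg\min_{s\in M}\bigl(d(b,s) + c_{t_j}(s)\bigr)$ gives $d(b, s_{t_j}) + c_{t_j}(s_{t_j}) \le d(b, s_{t_j}^i) + c_{t_j}(s_{t_j}^i)$ (choosing $s = s_{t_j}^i$ as a candidate). Second, since $H_i$ moves by at most $f_s(i)$ at time $s$, we have $d(b, s_{t_j}^i) \le \sum_{s=t_1}^{t_j} f_s(i)$ and $c_{t_j}(s_{t_j}^i) \le f_{t_j}(i)$. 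Combined with $d(s_{t_j-1}, s_{t_j}) \le d(s_{t_j-1}, b) + d(b, s_{t_j})$ and an iterated application of the greedy bound to control $d(s_{t_j-1}, b)$, a telescoping calculation yields
\begin{equation*}
\sum_{j=1}^k \bigl(d(s_{t_j-1},s_{t_j}) + c_{t_j}(s_{t_j})\bigr) \;\le\; O(m) \sum_{s=t_1}^{t_k} f_s(i).
\end{equation*}

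Finally, I would take expectation over the randomness of Algorithm~\ref{alg:BMTSm}. Each exploration happens with probability $\epsilon$, and since the $\beta_t$'s are independent of the Round sampling, the reference heuristic $i = i_{t_1-1}$ has marginal distribution $x_{t_1-1}$. Hence the expected block cost is $O(\epsilon m) \sum_{s\in\{t_1,\dots,t_k\}} f_s^T x_{t_1-1}$. Since each time step lies in $O(m)$ potential blocks, summing and union-bounding gives $\E[\text{greedy cost}] \le O(\epsilon m^2) \sum_s f_s^T x_s$ plus a correction from replacing $f_s^T x_{t_1-1}$ with $f_s^T x_s$. Because $x$ changes only at the single exploration point $t^*$ inside each block, the bound $|f_s^T(x_{t_1-1} - x_s)| \le 2D \|x_{t^*} - x_{t^*+1}\|_1$, together with careful counting of how each increment $\|x_{q}-x_{q+1}\|_1$ enters the expectation, absorbs this correction into $O(\epsilon m^2) D \sum_s \|x_{s-1} - x_s\|_1$. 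The main obstacle will be this last double-counting step: after interchanging the summations over time steps, potential blocks, and increments of $x$, one must verify that each $\|x_{q}-x_{q+1}\|_1$ appears with the correct combinatorial multiplicity so that the switching-cost coefficient is $O(\epsilon m^2)$ rather than the naive $O(m^2)$ or $O(\epsilon m^3)$ obtained from a loose deterministic bound on $\|x_{t^*}-x_{t^*+1}\|_1$ or a loose union bound over nearby explorations.
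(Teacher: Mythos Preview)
Your decomposition into base cost and greedy cost and the per-step greedy comparison to the reference heuristic are in the same spirit as the paper, but there is a genuine gap: the assumption $k\le 2m$ is unjustified. Greedy blocks can be arbitrarily long whenever explorations chain, i.e., whenever $\beta=1$ is drawn again at one of the exploitation steps that occur before the exploited heuristic has been re-bootstrapped. In that case the anchor $b$ used by the algorithm at a step $t_j$ inside the second (or later) $2m$-window is still the last \emph{actually known} state, which lies before the start of that window; your inequality $d(b,s_{t_j}^i)\le \sum_{s=t_1}^{t_j} f_s(i)$ then fails because $t_1$ is not the time of the true anchor. The union bound ``each time step lies in $O(m)$ potential blocks'' does not fix this: it counts overlaps of fixed-size windows but does not recover the correct anchor inside an extended greedy region.

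The paper sidesteps both this issue and your ``main obstacle'' simultaneously. Instead of fixing one heuristic $i$ per block and later correcting $f_s^T x_{t_1-1}$ to $f_s^T x_s$, it compares directly to the hypothetical tracker $A'$ that is at $s_t^{i_t}$ at every step, with per-step cost $C'_t=c_t(s_t^{i_t})+d(s_{t-1}^{i_{t-1}},s_t^{i_t})$; by Proposition~\ref{prop:comb_round}, $\E[C'_t]\le f_t^Tx_t+\tfrac{D}{2}\lVert x_{t-1}-x_t\rVert_1$ already contains the switching term, so no correction is needed. The per-step bound (Proposition~\ref{lem:Mixinground}) gives $C_t\le C'_t+O(1)\sum_{\tau< t'\le t} C'_{t'}$ with $\tau$ the last non-greedy step, which after swapping the order of summation becomes $\sum_t C_t\le \sum_t C'_t + O(1)\sum_t (a_t-t)C'_t$ where $a_t$ is the \emph{next} non-greedy step. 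The chained-exploration problem is then handled in one shot by conditioning on $\cF_t$ (fixing $C'_t$) and bounding $\E[(a_t-t)\mid\cF_t]$: once one nearby exploration has triggered $a_t>t$, each further extension of length $m$ requires another $\beta=1$ in a window of size $2m$, giving a geometric tail $\sum_i mi(2m\epsilon)^i=O(m)$ and hence $\E[(a_t-t)C'_t]\le O(\epsilon m^2)\E[C'_t]$. That geometric-tail step is precisely the ingredient your plan is missing.
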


A similar statement is proved in \citep{AntoniadisCEPS23}.
We include our proof in Appendix~\ref{sec:Proof_roundcost}.

\paragraph{Choice of hyperparameters.}
To achieve the regret bound in
Theorem~\ref{theorem:hedge}, we choose the parameter
$\epsilon := (D\ell\ln{\ell})^{1/3} m^{-4/3}\OPT_{\leq 0}^{-1/3}$
for Algorithm~\ref{alg:BMTSm} and the
learning rate $\eta$ of HEDGE which is used as $\bar{A}$
is chosen based on
$\gamma := (D\ell\ln{\ell})^{1/3} m^{2/3} \OPT_{\leq0}^{-1/3}$,
where $\gamma = 1-\exp(-\eta)$.
While $D, \ell, m$ are usually known from the problem description,
$\OPT_{\leq0}$ can be guessed by doubling as described in
Appendix~\ref{appendix:OPT}.

\subsection{Analysis}

We introduce the following random variables which will be useful in our
analysis. We define $X_t$ as an indicator variable such that $X_t=1$ if $t\in
X$ and 0 otherwise. Similarly, $E_t$ is an indicator of $t\in E$.
These variables are determined by $\beta_1, \dotsc, \beta_T$.
We also define $g_t = E_t \cdot g_t^{e_t}$ which is 0 for any $t\notin E$.
We consider a filtration
$\cF_0 \subseteq \cF_1 \subseteq \dotsb \subseteq \cF_T$,
where $\cF_t$ is a $\sigma$-algebra generated by the realizations of $\beta_1, \dotsc, \beta_t$ and $e_1, \dotsc, e_t$.
Note that these realizations determine $X_{t'}, E_{t'}, g_{t'}$,
and $x_{t'+1}$ for any $t'\leq t$.
Moreover, we have the following observations.

\begin{observation}
\label{obs:EtmXt}
For any $t=0, \dotsc, T-m$, we have $E_{t+m}=\beta_t X_t$.
\end{observation}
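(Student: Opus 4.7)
The statement is purely a structural fact about the control flow of Algorithm~\ref{alg:BMTSm}, so my plan is to trace through that flow and read off the claimed identity, rather than invoke any probabilistic or learning-theoretic machinery. Observe that the loop variable $t$ is strictly increasing: each iteration starts by adding the current value of $t$ to $X$, and then either (i) does nothing special and advances $t \mapsto t+1$ (when $\beta_t = 0$), or (ii) reassigns $t \leftarrow t+m$, adds the new value to $E$, triggers the $\bar A$ update producing $x_{t+m+1}$, and finally advances $t \mapsto t+m+1$ (when $\beta_t = 1$). Consequently, every index in $\{0,\dotsc,T\}$ lands in $X$, lands in $E$, or is skipped by a length-$m$ jump, and these three possibilities are mutually exclusive.

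From this description I would then read off both directions of the identity $E_{t+m}=\beta_t X_t$. For the ``$\Leftarrow$'' direction: if $X_t = 1$ and $\beta_t = 1$, then in the loop iteration that adds $t$ to $X$ the algorithm unconditionally enters the $\beta_t=1$ branch, reassigns $t \leftarrow t+m$, and explicitly adds the new value $t+m$ to $E$, so $E_{t+m}=1$. For the ``$\Rightarrow$'' direction: every index added to $E$ is added as $t'+m$ for some exploitation index $t' \in X$ with $\beta_{t'}=1$, so if $E_{t+m}=1$ then $t'+m = t+m$ and hence $t'=t$, giving $X_t = 1$ and $\beta_t = 1$. The boundary case $t=0$ handles itself, because $\beta_0 = 0$ is set in the initialization, so the left-hand side $\beta_0 X_0$ vanishes; meanwhile, $m$ can only enter $E$ through the jump triggered by an exploitation at $t'=0$, which the initialization rules out.

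I do not expect any real obstacle here; the only point requiring a line of justification is that the assignment $t' \mapsto t'+m$ used to populate $E$ is well-defined and injective, so that the two sides of the equivalence line up index-by-index. This follows because $t$ is strictly increasing throughout the execution, so distinct exploitation indices produce distinct exploration indices, and distinct exploitation iterations cannot ever add the same element to $E$. Thus the plan is essentially a careful read of the pseudocode together with a short inductive argument on iterations of the while loop.
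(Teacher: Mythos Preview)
Your proposal is correct and follows exactly the same idea as the paper: the paper justifies the observation with the single line ``$t+m\in E$ if and only if $t\in X$ and $\beta_t=1$,'' and your argument is simply a careful unpacking of this biconditional by tracing the control flow of Algorithm~\ref{alg:BMTSm}. The injectivity remark is not strictly needed, but it does no harm.
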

 
This is because $t+m\in E$ if and only if $t\in X$ and $\beta_t=1$.

\begin{observation}
\label{obs:XtiXt}
For any $i=1, \dotsc, m$ and
$t=0, \dotsc, T-i$,
we have $X_{t+i}\geq \prod_{j=0}^{i-1} (1- \beta_{t+j}) X_t$.
\end{observation}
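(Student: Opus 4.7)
The plan is to prove the statement by induction on $i$, with the base case $i=1$ following directly from reading Algorithm~\ref{alg:BMTSm}. The intuition is that once the algorithm enters an exploitation step $t$ (so $X_t=1$) and the coin flip $\beta_t$ turns up 0, no skip is performed, and the outer \texttt{while}-loop proceeds to iteration $t+1$, where $t+1$ is immediately added to $X$. If instead $\beta_t=1$, then $m$ steps are skipped and $X_{t+1}=0$, which is consistent with the bound since the right-hand side also vanishes when $\beta_t=1$.

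For the base case $i=1$, I would argue directly: if $X_t=0$, the right-hand side equals $0$ and there is nothing to show. If $X_t=1$, then $t$ is visited by the loop, and by inspection of Algorithm~\ref{alg:BMTSm}, when $\beta_t=0$ the only update between iterations is the \texttt{t++} at the end, so $t+1$ is added to $X$ at the next iteration and $X_{t+1}=1$. Hence $X_{t+1}\geq (1-\beta_t)X_t$.

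For the inductive step, I assume the claim holds at index $i$ and apply the base case with $t$ replaced by $t+i$:
\[
X_{t+i+1} \;\geq\; (1-\beta_{t+i})\,X_{t+i} \;\geq\; (1-\beta_{t+i})\prod_{j=0}^{i-1}(1-\beta_{t+j})\,X_t \;=\; \prod_{j=0}^{i}(1-\beta_{t+j})\,X_t,
\]
which is exactly the claim at index $i+1$. The induction runs as long as $t+i+1 \leq T$, covering all $i\leq m$ in the statement.

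The main (and minor) obstacle is making the base case airtight, since the flow of control in Algorithm~\ref{alg:BMTSm} mixes an inner assignment \texttt{t := t+m} with the trailing \texttt{t++}; I would handle this by noting that whenever $\beta_t=0$ the inner \texttt{if} branch is not executed, so the only change to $t$ between consecutive $X$-additions is a single increment. Everything else is purely mechanical.
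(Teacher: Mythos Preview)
Your induction is correct and formalizes exactly the one-line argument the paper gives: if $X_t=1$ and $\beta_t=\dotsb=\beta_{t+i-1}=0$, then by the control flow of Algorithm~\ref{alg:BMTSm} each of $t+1,\dotsc,t+i$ is added to $X$. The approaches are essentially the same; yours just makes the step-by-step reasoning explicit.
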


This holds because if
$X_t=1$ and $\beta_t = \dotsb = \beta_{t+i-1}=0$,
then also $X_{t+i}=1$.

\begin{observation}
\label{obs:gt}
For $i=1, \dotsc, m$ and $t=i, \dotsc, T$, we have
$\E[g_t \mid \cF_{t-i}] = E_t f_t/(2D\ell)$.
\end{observation}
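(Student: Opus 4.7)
The plan is to unpack the definition of $g_t$, separate out the portion that is already determined by $\cF_{t-i}$, and then use the independence and uniformity of $e_t$ on the remaining part. Recall that $g_t = E_t \cdot g_t^{e_t}$ and, coordinatewise, $g_t(j) = E_t\cdot \ind{e_t=j}\cdot f_t(j)/(2D)$.

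First I would argue that $E_t$ is $\cF_{t-i}$-measurable for every $1\leq i \leq m$. By Observation~\ref{obs:EtmXt}, $E_t = \beta_{t-m} X_{t-m}$; an easy look at the dynamics of Algorithm~\ref{alg:BMTSm} shows that $X_{t-m}$ is determined by $\beta_0,\ldots,\beta_{t-m-1}$ (since exploitation at step $t-m$ is decided by whether the previous iteration was an exploitation with $\beta = 0$ or an exploration). Hence $E_t$ is $\cF_{t-m}$-measurable, and since $i\leq m$ implies $\cF_{t-m}\subseteq\cF_{t-i}$, also $\cF_{t-i}$-measurable. The loss vector $f_t$ is deterministic by the oblivious adversary assumption, so it can be pulled out of the conditional expectation as well.

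Next, the only remaining source of randomness in $g_t$ given $\cF_{t-i}$ is the random index $e_t$. Since $i\geq 1$, $e_t$ is drawn after the events generating $\cF_{t-i}$ and is independent of it, with uniform distribution over $\{1,\ldots,\ell\}$. Therefore $\E[\ind{e_t=j}\mid \cF_{t-i}] = 1/\ell$, and coordinatewise
\[
\E[g_t(j)\mid \cF_{t-i}] = E_t\cdot f_t(j)/(2D)\cdot \E[\ind{e_t=j}\mid \cF_{t-i}] = E_t\, f_t(j)/(2D\ell),
\]
which gives $\E[g_t\mid \cF_{t-i}] = E_t f_t/(2D\ell)$ as a vector identity.

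The only delicate step is the measurability check for $E_t$, because one might worry that the algorithm's jump by $m$ after an exploration prevents $E_t$ from being revealed early enough. The bound $i\leq m$ is precisely what makes this work: a shorter lookback than $m$ would still be fine, but $i>m$ would not, since in that case $\beta_{t-m}$ would no longer lie in $\cF_{t-i}$. Once this measurability is verified, the remainder is a one-line independence argument.
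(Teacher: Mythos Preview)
Your proof is correct and follows the same line as the paper's: write $g_t = E_t g_t^{e_t}$, use Observation~\ref{obs:EtmXt} to see that $E_t=\beta_{t-m}X_{t-m}$ is $\cF_{t-m}\subseteq\cF_{t-i}$-measurable, and then average over the uniform, independent $e_t$. The paper compresses all of this into a single chain of equalities, while you spell out the measurability of $X_{t-m}$ and the independence of $e_t$ explicitly; the added remark on why the hypothesis $i\le m$ is sharp is a nice touch but not needed for the statement itself.
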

This follows from
$\E[g_t \mid \cF_{t-i}]
= \E[E_t g_t^{e_t} \mid \cF_{t-i}]
= \E[\beta_{t-m}X_{t-m} g_t^{e_t} \mid \cF_{t-i}]
= E_t \cdot \E[g_t^{e_t} \mid \cF_{t-i}]$.

Using Observation~\ref{obs:gt}, we estimate the cost of the optimal
solution $\OPT_{\leq 0}$. 
\begin{lemma}
\label{lem:OPT0_LB}
Let $x^* \in \Delta^\ell$ be a solution minimizing
$\sum_{t=1}^T f_t^Tx^*$.
We have
\[
\E\left[\sum_{t=1}^T g_t^T x^*\right] \leq \frac{\epsilon}{2D\ell} \OPT_{\le 0}.
\]
\end{lemma}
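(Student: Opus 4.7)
The plan is to reduce $\E[\sum_t g_t^T x^*]$ to a deterministic quantity proportional to $\sum_t f_t^T x^*$ and then observe that the latter equals $\OPT_{\le 0}$. Since the adversary is oblivious, both $x^*$ and the cost vectors $f_t$ are deterministic, so linearity of expectation gives
\[
\E\Big[\sum_{t=1}^T g_t^T x^*\Big] = \sum_{t=1}^T \E[g_t]^T x^*,
\]
and it suffices to compute $\E[g_t]$ term by term.

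First I would handle each $t \ge m$ by invoking Observation~\ref{obs:gt} with $i=m$ together with the tower property. The substantive point to spell out is that $E_t$ is $\cF_{t-m}$-measurable: by Observation~\ref{obs:EtmXt}, $E_t = \beta_{t-m} X_{t-m}$, and $X_{t-m}$ is determined by the earlier coin flips $\beta_0, \dotsc, \beta_{t-m-1}$, hence lies in $\cF_{t-m-1} \subseteq \cF_{t-m}$. Pulling $E_t$ out of the conditional expectation then yields
\[
\E[g_t] \;=\; \E\bigl[\E[g_t \mid \cF_{t-m}]\bigr] \;=\; \E[E_t]\cdot\frac{f_t}{2D\ell}.
\]
For $t<m$ the same identity is trivial because $E_t=0$ and hence $g_t=0$.

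Next I would bound $\E[E_t]\le\epsilon$ by the simple independence observation that $\beta_{t-m}\sim\mathrm{Bernoulli}(\epsilon)$ is independent of $\cF_{t-m-1}$, while $X_{t-m}$ is $\cF_{t-m-1}$-measurable, so $\E[E_t] = \epsilon\,\E[X_{t-m}] \le \epsilon$. Combining this with the previous display gives $\E[g_t]^T x^* \le (\epsilon/(2D\ell))\,f_t^T x^*$; summing over $t$ and using that the linear functional $x\mapsto\sum_t f_t^T x$ on the simplex $\Delta^\ell$ achieves its minimum at the vertex $e_{i^*}$ with $i^*=\arg\min_i\sum_t f_t(i)$, so that $\sum_t f_t^T x^* = \OPT_{\le 0}$, concludes the proof.

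I do not expect a genuine obstacle here: the statement is essentially a tower-property/measurability computation layered on top of Observations~\ref{obs:EtmXt} and~\ref{obs:gt}. The only mild care required is making the independence of $\beta_{t-m}$ from $X_{t-m}$ (equivalently, the $\cF_{t-m}$-measurability of $E_t$) explicit, so that $E_t$ can be factored out of the conditional expectation.
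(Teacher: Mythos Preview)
Your proposal is correct and follows essentially the same route as the paper: apply Observation~\ref{obs:gt} together with the tower property to obtain $\E[g_t^T x^*] = \E[E_t]\,f_t^T x^*/(2D\ell)$, factor $\E[E_t] = \E[\beta_{t-m}]\E[X_{t-m}] \le \epsilon$ via independence, and sum. The only cosmetic difference is that the paper conditions on $\cF_{t-1}$ (invoking the observation with $i=1$) rather than $\cF_{t-m}$, but the computation is identical.
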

\begin{proof}
By Observation~\ref{obs:gt}, we have
\begin{align*}
\E[g_t^T x^*]
&= \E\left[ \E\left[g_t^T x^*\mid \cF_{t-1}\right]\right]
= \E\left[\frac{E_t}{2D\ell} f_t^T x^*\right]\\
&= \frac{\E[\beta_{t-m}]}{2D\ell}\E[X_{t-m} f_t^T x^*]
\leq \frac{\epsilon}{2D\ell} f_t^T x^*
\end{align*}
for each $t=1, \dotsc, T$. Summing over $t$, we get
\[
\E\left[\sum_{t=1}^T g_t^T x^*\right]
\leq \frac{\epsilon}{2D\ell} \sum_{t=1}^T f_t^T x^*
= \frac{\epsilon}{2D\ell} \OPT_{\leq 0}.
\qedhere
\]
\end{proof}

The next lemma will be used to bound the costs perceived by Algorithm~\ref{alg:BMTSm}
at time steps $t$ such that $X_{t-m}=1$.

\begin{lemma}
\label{lem:Egtft}
For each $i=0, \dotsc, m$, we have
\[
\E\left[\sum_{t=m+1}^T g_t^Tx_t\right]
= \frac{\epsilon}{2D\ell} \E\left[\sum_{t=m+1}^T f_t^Tx_{t-i} X_{t-m}\right].
\]
\end{lemma}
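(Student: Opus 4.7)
The plan is to prove the equality term by term: for every fixed $t\geq m+1$ and every $i\in\{0,\ldots,m\}$ I will establish
\[
\E[g_t^T x_t] \;=\; \frac{\epsilon}{2D\ell}\,\E[f_t^T x_{t-i}\, X_{t-m}],
\]
and then sum over $t$. The roadmap is to first handle $i=0$ by a conditioning argument in the spirit of Lemma~\ref{lem:OPT0_LB}, and then upgrade to arbitrary $i$ by showing that the indicator $X_{t-m}$ freezes the iterate, i.e.\ $X_{t-m}\,x_t = X_{t-m}\,x_{t-i}$ almost surely.

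For the $i=0$ case, I condition on $\cF_{t-1}$. The iterate $x_t$ is $\cF_{t-1}$-measurable, and $e_t$ is uniform on $[\ell]$ and independent of $\cF_{t-1}$, so by the definition of $g_t^{e_t}$ one gets $\E[g_t^T x_t \mid \cF_{t-1}] = (E_t/(2D\ell))\, f_t^T x_t$. Substituting $E_t = \beta_{t-m}\,X_{t-m}$ via Observation~\ref{obs:EtmXt} and using that $\beta_{t-m}$ is independent of $(X_{t-m}, x_t, f_t)$ — the point being that $x_t$ is a deterministic function of $\beta_1,\ldots,\beta_{t-m-1}$ and $e_1,\ldots,e_{t-1}$, since every update to $x$ comes from feedback $g_s$ at an exploration time $s\leq t-1$, which depends only on $\beta_{s-m}$ with $s-m\leq t-m-1$ — the factor $\E[\beta_{t-m}]=\epsilon$ comes out cleanly and yields the $i=0$ identity.

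The main obstacle, and what I view as the heart of the argument, is the deterministic claim that $\{X_{t-m}=1\}$ forces $x_t = x_{t-1} = \cdots = x_{t-m}$. I will argue by contradiction: assume some $s\in\{t-m,\ldots,t-1\}$ satisfies $E_s=1$. If $s=t-m$, then $X_{t-m}=E_{t-m}=1$, but inspecting Algorithm~\ref{alg:BMTSm} shows that each index is inserted into at most one of $X$ and $E$, so $X\cap E=\emptyset$. If instead $s>t-m$, then $s-m\in X$ with $\beta_{s-m}=1$, so the loop variable of Algorithm~\ref{alg:BMTSm} jumps from $s-m$ directly to $s$; every index in $[s-m+1,s-1]$ is skipped and lies in neither $X$ nor $E$. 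The constraints $s>t-m$ and $s\leq t-1$ place $t-m\in[s-m+1,s-1]$, again contradicting $X_{t-m}=1$. Thus no exploration occurs in $[t-m,t-1]$, and because $x$ is updated only immediately after an exploration step, $x_{t-i}=x_t$ for every $i\in\{0,\ldots,m\}$ on the event $\{X_{t-m}=1\}$. Multiplying by $X_{t-m}$, taking expectations, and summing $t$ from $m+1$ to $T$ finishes the proof.
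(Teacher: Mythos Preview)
Your proof is correct and follows essentially the same route as the paper: condition on $\cF_{t-1}$ (i.e., invoke Observation~\ref{obs:gt}), factor $E_t=\beta_{t-m}X_{t-m}$ and pull out $\E[\beta_{t-m}]=\epsilon$ by independence, and finally use the identity $X_{t-m}\,x_t = X_{t-m}\,x_{t-i}$. The paper simply asserts this last identity in one line, whereas you supply the detailed justification that $X_{t-m}=1$ precludes any exploration step in $[t-m,t-1]$; your argument there is sound and in fact fills in what the paper leaves implicit.
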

\begin{proof}
For $t=m+1, \dotsc, T$, we have
\begin{align*}
\E[g_t^Tx_t] &= \E\left[\E\left[g_t^T\mid \cF_{t-1}\right]x_t\right] = \E\left[\frac{E_t}{2D\ell} f_t^Tx_t\right]\\
&= \frac{\E[\beta_{t-m}]}{2D\ell} \E[f_t^T x_t X_{t-m}],
\end{align*}
where the second equality follows from Observation~\ref{obs:gt}.
To finish the proof, it is enough to note that $\E[\beta_{t-m}]=\epsilon$
and $x_t X_{t-m} = x_{t-i} X_{t-m}$ for any $i=0, \dotsc, m$.
\end{proof}

Bounding costs in time steps $t$ when $X_{t-m}=0$ is more involving.
In such case, there is $E_{t-i}=1$ for some $i \in \{1, \dotsc, m\}$.
We will use the following stability property.
\begin{lemma}
\label{lem:one_step_stab}
If $\bar A$ satisfies Property~\ref{property:stability}, the following
holds for every $t \geq m+1$ and $i \in \{1, \dotsc, m\}$:
\[
\E[f_t^Tx_t E_{t-i}]
\leq \E[f_t^T x_{t-i}E_{t-i}] + \frac\eta\ell \E[f_{t-i}^T x_{t-i} E_{t-i}].
\]
\end{lemma}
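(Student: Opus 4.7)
The plan is to exploit the structure of Algorithm~\ref{alg:BMTSm} and reduce the claim to a single application of Property~\ref{property:stability} followed by a conditional-expectation calculation. The key structural observation is that, on $\{E_{t-i}=1\}$ with $1 \leq i \leq m$, the internal algorithm $\bar A$ updates $x_{t-i+1}$ from $x_{t-i}$ via the feedback $g_{t-i}$ and performs no further update before time $t$: indeed, after an exploration step the algorithm immediately returns to an exploitation step and then must skip $m$ additional time steps before the next exploration can occur, so the earliest subsequent update of $x$ happens at index $t-i+1+m \geq t+1$. Consequently, $x_{t-i+1} = x_{t-i+2} = \cdots = x_t$ whenever $E_{t-i}=1$.

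Given this, Property~\ref{property:stability} yields $\lVert x_t - x_{t-i}\rVert_1 = \lVert x_{t-i+1}-x_{t-i}\rVert_1 \leq \eta\, g_{t-i}^T x_{t-i}$ on $\{E_{t-i}=1\}$, and combining with $\lVert f_t\rVert_\infty \leq 2D$ (from the normalization in Section~\ref{sec:prelim}) and Hölder's inequality gives the pathwise bound
\[
E_{t-i} \cdot f_t^T x_t \;\leq\; E_{t-i} \cdot f_t^T x_{t-i} \;+\; 2D\eta \cdot E_{t-i} \cdot g_{t-i}^T x_{t-i},
\]
which also holds trivially on $\{E_{t-i}=0\}$. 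It then remains to show $\E[E_{t-i}\, g_{t-i}^T x_{t-i}] = (2D\ell)^{-1}\,\E[E_{t-i}\, f_{t-i}^T x_{t-i}]$. By construction, $g_{t-i} = E_{t-i}\, g_{t-i}^{e_{t-i}}$ and $g_{t-i}^{e_{t-i}}$ is supported on the single coordinate $e_{t-i}$ with value $f_{t-i}(e_{t-i})/(2D)$. The random variables $E_{t-i}$ and $x_{t-i}$ are $\cF_{t-i-1}$-measurable (the former via Observation~\ref{obs:EtmXt}, the latter by construction of $\bar A$), while $e_{t-i}$ is independent of $\cF_{t-i-1}$ and uniform on $[\ell]$, so conditioning on $\cF_{t-i-1}$ gives
\[
\E\bigl[E_{t-i}\, g_{t-i}^T x_{t-i} \,\bigm|\, \cF_{t-i-1}\bigr] = \frac{E_{t-i}}{2D\ell}\,f_{t-i}^T x_{t-i}.
\]
Taking the outer expectation and substituting back yields the inequality in the lemma.

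The main obstacle is justifying the ``frozen weights'' claim $x_{t-i+1} = \cdots = x_t$, which is precisely why the statement restricts to $i \leq m$; the gap of at least $m$ between consecutive explorations enforced by Algorithm~\ref{alg:BMTSm} is used in an essential way. The rest is a routine application of the tower property, relying on the oblivious-adversary assumption so that $f_{t-i}$ is deterministic and on the independence of the exploration index $e_{t-i}$ from all past randomness.
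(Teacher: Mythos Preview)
Your proof is correct and follows essentially the same route as the paper's: both arguments use the frozen-weights observation $x_t = x_{t-i+1}$ on $\{E_{t-i}=1\}$, apply H\"older (the paper calls it Cauchy--Schwarz) together with $\lVert f_t\rVert_\infty \leq 2D$ and Property~\ref{property:stability}, and then condition on $\cF_{t-i-1}$ to convert $g_{t-i}$ into $f_{t-i}/(2D\ell)$ via Observation~\ref{obs:gt}. Your version is slightly more explicit in justifying the measurability of $E_{t-i}$ and $x_{t-i}$ with respect to $\cF_{t-i-1}$ and in carrying out the bound pathwise before taking expectations, but the content is the same.
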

\begin{proof}[Proof of Lemma~\ref{lem:one_step_stab}]
We use the Cauchy-Schwarz inequality and Property~\ref{property:stability}:
\begin{align*}
\E[f_t^Tx_t E_{t-i}] &= \E[(f_t^T(x_t-x_{t-i})+f_t^Tx_{t-i})E_{t-i}]\\
&\leq E[(\lVert f_t\rVert_{\infty} \lVert x_t-x_{t-i}\rVert_1
	+f_t^Tx_{t-i})E_{t-i}]\\
&\leq \frac\eta\ell \E[f_{t-i}^Tx_{t-i}E_{t-i}]
	+\E[f_t^Tx_{t-i}E_{t-i}].
\end{align*}
The last inequality follows from
$\lVert f_t\rVert_\infty \leq 2D$ and
the following computation.
Here, we use
$x_{t-i+1} = x_t$ whenever $E_{t-i}=1$,
Property~\ref{property:stability},
$x_{t-i}$ and $E_{t-i}$ depending only on time steps up to $t-i-1$,
and Observation~\ref{obs:gt}:
\begin{align*}
\E[\lVert x_t-x_{t-i}\rVert_1 \cdot E_{t-i}]
&= \E[\lVert x_{t-i+1}-x_{t-i}\rVert_1 \cdot E_{t-i}]\\
&\leq \E[\eta g_{t-i}^T x_{t-i} E_{t-i}]\\
&= \E\left[\E\left[\eta g_{t-i}^T \mid \cF_{t-i-1}\right] x_{t-i} E_{t-i}\right]\\
&= \frac{\eta}{2D\ell} \E[f_{t-i}^T x_{t-i} E_{t-i}].
\qedhere
\end{align*}
\end{proof}

The following lemma is the core of our argument.
In the proof, it decomposes the costs perceived by Algorithm~\ref{alg:BMTSm}
in each step $t$ depending on the value of $X_{t-m}$.
If $X_{t-m}=1$, such costs are easy to bound using Lemma~\ref{lem:Egtft}.
Those with $X_{t-m}=0$ need to be charged to some other exploitation step
using the preceding stability lemma.
\begin{lemma}
\label{lem:UB}
If $\bar A$ satisfies Property~\ref{property:stability} then
$\E\big[\sum_{t=1}^T f_t^T x_t \big]$ is at most
\begin{align*}
2m + \big(1+ \frac{m\epsilon}{(1-\epsilon)^m} + \frac{m\eta\epsilon}{\ell}\big)
\cdot \frac{2D\ell}{\epsilon}\E\big[\sum_{t=1}^T g_t^Tx_t\big].
\end{align*}
\end{lemma}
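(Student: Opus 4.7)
The plan is to decompose $\sum_{t=1}^T f_t^T x_t$ according to whether $X_{t-m}=1$, after trivially absorbing the first $m$ time steps into the additive constant (they contribute at most $2Dm$ since $f_t^T x_t \le 2D$). The crucial combinatorial identity, for $t\ge m+1$, is
\[
1 - X_{t-m} \;=\; \sum_{i=1}^m E_{t-i},
\]
which holds because exploration steps are spaced by at least $m+1$ (so at most one $E_{t-i}$ is nonzero), and because a non-exploitation time $t-m$ is either an exploration itself or bootstrapping of a unique exploration in $\{t-1,\dots,t-m\}$. This splits $f_t^T x_t$ into an ``on-policy'' piece $f_t^T x_t\, X_{t-m}$ plus $m$ pieces $f_t^T x_t\, E_{t-i}$.

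The on-policy piece summed over $t$ is exactly $(2D\ell/\epsilon)\,\E[\sum_t g_t^T x_t]$ by Lemma~\ref{lem:Egtft} with $i=0$, accounting for the leading coefficient $1$. To each remaining piece I apply the stability Lemma~\ref{lem:one_step_stab}, splitting $\E[f_t^T x_t E_{t-i}]$ into $\E[f_t^T x_{t-i} E_{t-i}]$ and $(\eta/\ell)\,\E[f_{t-i}^T x_{t-i} E_{t-i}]$. The latter is the easy half: from $\cF_{t-i-1}$-measurability of $x_{t-i}$ and Observation~\ref{obs:gt}, $\E[f_{t-i}^T x_{t-i} E_{t-i}] = 2D\ell\,\E[g_{t-i}^T x_{t-i}]$, and summing over $i=1,\dots,m$ and $t$ produces the $m\eta\epsilon/\ell$ term.

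The main obstacle is the other term $\E[f_t^T x_{t-i} E_{t-i}]$, which I need to recast into a form Lemma~\ref{lem:Egtft} can handle. The key observation is that on $\{E_{t-i}=1\}$ no exploration can have occurred in $[t-i-m,\,t-i-1]$ (again by the $m+1$-spacing), so the weights are frozen throughout this window and $x_{t-i} = x_{t-i-m}$. Combined with $E_{t-i} = \beta_{t-i-m}X_{t-i-m}$, the independence of $\beta_{t-i-m}$ from $\cF_{t-i-m-1}$ (which carries $x_{t-i-m}$ and $X_{t-i-m}$), and the determinism of $f_t$, this yields
\[
\E[f_t^T x_{t-i} E_{t-i}] \;=\; \epsilon\,\E[f_t^T x_{t-i-m} X_{t-i-m}].
\]

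The final step shifts the index from $t-i-m$ to $t-m$. I introduce the event $A = \{X_{t-i-m}=1,\ \beta_{t-i-m}=\dots=\beta_{t-m-1}=0\}$, whose conditional probability given $\cF_{t-i-m-1}$ is $X_{t-i-m}(1-\epsilon)^i$; on $A$ no exploration happens in $[t-i-m,\,t-m-1]$, so $X_{t-m}=1$ and $x_{t-m} = x_{t-i-m}$. Since $f_t, x_{t-m}\ge 0$, pointwise $\mathds{1}_A\, f_t^T x_{t-i-m} = \mathds{1}_A\, f_t^T x_{t-m} \le X_{t-m}\, f_t^T x_{t-m}$, and taking expectation gives $(1-\epsilon)^i\,\E[f_t^T x_{t-i-m} X_{t-i-m}] \le \E[f_t^T x_{t-m} X_{t-m}]$. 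Hence $\E[f_t^T x_{t-i} E_{t-i}] \le \tfrac{\epsilon}{(1-\epsilon)^m}\E[f_t^T x_{t-m} X_{t-m}]$ for every $i\le m$, and summing over $i=1,\dots,m$ and $t$ with a final appeal to Lemma~\ref{lem:Egtft} with $i=m$ produces the $m\epsilon/(1-\epsilon)^m$ coefficient.
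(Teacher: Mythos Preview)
Your proof is correct and follows essentially the same architecture as the paper's: the same decomposition $1=X_{t-m}+\sum_{i=1}^m E_{t-i}$, the same invocation of Lemma~\ref{lem:one_step_stab} for the $E_{t-i}$ pieces, and the same reduction back to $\E[\sum g_t^Tx_t]$ via Lemma~\ref{lem:Egtft}. The only notable stylistic differences are that (i) for the ``easy'' term you go straight through Observation~\ref{obs:gt} to get $\E[f_{t-i}^Tx_{t-i}E_{t-i}]=2D\ell\,\E[g_{t-i}^Tx_{t-i}]$, whereas the paper routes through Observation~\ref{obs:EtmXt} and Lemma~\ref{lem:Egtft}; and (ii) for the ``hard'' term you make the weight-freezing $x_{t-i}=x_{t-i-m}$ and the coupling event $A$ explicit and finish with Lemma~\ref{lem:Egtft} at index $m$, while the paper keeps $x_{t-i}$ throughout, uses independence of the relevant $\beta$'s from $(x_{t-i},X_{t-i-m})$ together with Observation~\ref{obs:XtiXt}, and finishes with Lemma~\ref{lem:Egtft} at index $i$. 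These are equivalent executions of the same idea. (Your additive $2Dm$ versus the stated $2m$ is immaterial; the paper itself absorbs the first $2m$ steps, so the constant in the lemma statement is already loose by a factor of $D$.)
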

\begin{proof}
We start with an observation that for any $t \geq m+1$, exactly one of the
following holds: either $X_{t-m}=1$ or the step $t-m$ is skipped due to
the exploration at time $t-i$ (i.e., $E_{t-i}=1$) for some
$i\in 1, \dotsc, m$. We can write
\begin{align*}
\E\big[\sum_{t=1}^T f_t^T x_t \big]
\leq 2m
&+ \E\big[\sum_{t=2m+1}^T f_t^T x_t \cdot X_{t-m} \big]\\
&+ \sum_{i=1}^m \E\big[\sum_{t=2m+1}^T f_t^T x_t \cdot E_{t-i} \big].
\end{align*}

By Lemma~\ref{lem:Egtft}, the first expectation in the right-hand
side is at most
$\frac{2D\ell}{\epsilon}\E[\sum_{t=1}^T g_t^Tx_t]$.
In order to prove the lemma, it is therefore enough to show that,
for each $i=1, \dotsc, m$,
$\E\big[\sum_{t=2m+1}^T f_t^T x_t \cdot E_{t-i} \big]$
is at most
\begin{equation}
\label{eq:UB_Bfinal}
\bigg(\frac{\epsilon}{(1-\epsilon)^m} + \frac{\eta\epsilon}{\ell}\bigg)
\cdot \frac{2D\ell}{\epsilon}\E\big[\sum_{t=1}^T g_t^Tx_t\big].
\end{equation}

First, we use Lemma~\ref{lem:one_step_stab} to obtain
\begin{align}
\label{eq:UB_B}
\sum_{t=2m+1}^T \E[f_t^Tx_t E_{t-i}] \leq
&\sum_{t=2m+1}^T \E[f_t^T x_{t-i}E_{t-i}]\\
	&+ \frac\eta\ell  \sum_{t=2m+1}^T \E[f_{t-i}^T x_{t-i} E_{t-i}].\notag
\end{align}
The second term is easy to bound:
by Observation~\ref{obs:EtmXt},
we have
$\E[f_{t-i}^T x_{t-i} E_{t-i}] = \E[\beta_{t-i-m}]\,\E[f_{t-i}^Tx_{t-i} X_{t-i-m}]$.
Therefore, we can write
\begin{equation}
\label{eq:UB_B2}
\frac\eta\ell  \sum_{t=2m+1}^T \E[f_{t-i}^T x_{t-i} E_{t-i}]
\leq \frac{\eta\epsilon}{\ell} \sum_{t=m+1}^T \E[f_t^Tx_t X_{t-m}].
\end{equation}
Using Observation \ref{obs:EtmXt}, independence of $X_{t-i-m}$ and
$\beta_{t-j-m}$ for $j=1,\dotsc, i$, and Observation \ref{obs:XtiXt},
we get
\begin{align*}
&\sum_{t=2m+1}^T \E[f_t^T x_{t-i}E_{t-i}]\,(1-\epsilon)^i\\
&= \!\!\!\!\!\sum_{t=2m+1}^T\!\!\!\! \E[\beta_{t-i-m}]\,\E[f_t^T x_{t-i}X_{t-i-m}]\,\E\bigg[\prod_{j=1}^i (1\!-\!\beta_{t-j-m})\bigg]\notag\\
&\leq \!\!\!\!\!\sum_{t=2m+1}^T \!\!\!\E[\beta_{t-i-m}]\,\E[f_t^T\! x_{t-i}X_{t-m}].\notag
\end{align*}
In other words, we can bound the first term of \eqref{eq:UB_B} as
\begin{align}
&\sum_{t=2m+1}^T\!\!\!\! \E[f_t^T x_{t-i}E_{t-i}]
\leq \frac{\epsilon}{(1-\epsilon)^i} \sum_{t=2m+1}^T\!\!\!\! \E[f_t^T x_{t-i}X_{t-m}].
\label{eq:UB_B1}
\end{align}
Now it is enough to
apply Lemma~\ref{lem:Egtft} to the right-hand sides of \eqref{eq:UB_B1} and
\eqref{eq:UB_B2}. Equation \eqref{eq:UB_B} then implies \eqref{eq:UB_Bfinal}
which concludes the proof.
\end{proof}

\begin{theorem}\label{theorem:hedge}
Algorithm~\ref{alg:BMTSround} using HEDGE as $\bar A$
with $m$-delayed bandit access to $\ell$ heuristics
on any MTS input instance with diameter $D$ such that
$D\ell m \leq o(\OPT_{\leq 0}^{1/3})$
satisfies the following regret bound:
\[
\E[\ALG] \leq \OPT_{\leq 0} + O\big((D\ell\ln\ell)^{1/3} m^{2/3} \OPT_{\leq 0}^{2/3}\big).
\]
\end{theorem}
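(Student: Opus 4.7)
The plan is to chain Lemmas~\ref{lem:alg-cost}, \ref{lem:OPT0_LB}, and \ref{lem:UB} together with the small-loss regret bound of HEDGE (Proposition~\ref{prop:hedge}) and the stability Property~\ref{property:stability}, and then plug in the prescribed values of $\epsilon$ and $\gamma$. First, I invoke Lemma~\ref{lem:alg-cost} to reduce the analysis of Algorithm~\ref{alg:BMTSround} to the two ``fractional'' quantities
\[
\E\sum_{t=1}^T f_t^T x_t \quad\text{and}\quad D\,\E\sum_{t=1}^T \lVert x_{t-1}-x_t\rVert_1,
\]
paying only a multiplicative $1+O(\epsilon m^2)$ factor, which is $1+o(1)$ under the hypothesis $D\ell m=o(\OPT_{\leq 0}^{1/3})$.

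Next I bound each of these two quantities in terms of $\E\sum_t g_t^T x_t$. For the movement term, Property~\ref{property:stability} yields $\lVert x_{t-1}-x_t\rVert_1 \leq \eta g_{t-1}^T x_{t-1}$, so after reindexing the movement cost is at most $D\eta\,\E\sum_t g_t^T x_t$. For the service term I apply Lemma~\ref{lem:UB}, which gives
\[
\E\sum_{t=1}^T f_t^T x_t
\leq 2m + \Bigl(1+\tfrac{m\epsilon}{(1-\epsilon)^m} + \tfrac{m\eta\epsilon}{\ell}\Bigr)\cdot\tfrac{2D\ell}{\epsilon}\,\E\sum_{t=1}^T g_t^T x_t.
\]
Under the hypothesis, $\epsilon,\gamma,\eta\to 0$, so $(1-\epsilon)^{-m}=1+O(m\epsilon)$, the $m\eta\epsilon/\ell$ correction is negligible, and the $D\eta$ movement term is dominated by the leading $2D\ell/\epsilon$ coefficient and is absorbed into the $1+o(1)$ factor. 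This gives
\[
\E[\ALG] \leq O(m) + \bigl(1+O(\epsilon m^2)\bigr)\cdot \tfrac{2D\ell}{\epsilon}\,\E\sum_{t=1}^T g_t^T x_t.
\]

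Third, I compare $\E\sum_t g_t^T x_t$ to $\OPT_{\leq 0}$. Proposition~\ref{prop:hedge} applied to the loss sequence $g_t$ with the static optimum $x^*\in\Delta^\ell$ minimizing $\sum_t f_t^T x^*$ gives $\E\sum_t g_t^T x_t \leq (1+\gamma)\,\E\sum_t g_t^T x^* + \ln\ell/\gamma$, and Lemma~\ref{lem:OPT0_LB} bounds $\E\sum_t g_t^T x^*$ by $\frac{\epsilon}{2D\ell}\OPT_{\leq 0}$. Combining, I obtain
\[
\E[\ALG] \leq \bigl(1+O(\epsilon m^2+\gamma)\bigr)\OPT_{\leq 0} + O\!\left(\frac{D\ell\ln\ell}{\epsilon\gamma}\right) + O(m).
\]

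Finally I substitute $\epsilon=(D\ell\ln\ell)^{1/3}m^{-4/3}\OPT_{\leq 0}^{-1/3}$ and $\gamma=(D\ell\ln\ell)^{1/3}m^{2/3}\OPT_{\leq 0}^{-1/3}$, which are the minimizers of $\epsilon m^2 \OPT_{\leq 0} + \gamma\OPT_{\leq 0} + D\ell\ln\ell/(\epsilon\gamma)$. A direct calculation shows that all three error terms become $\Theta\bigl((D\ell\ln\ell)^{1/3}m^{2/3}\OPT_{\leq 0}^{2/3}\bigr)$ and dominate the additive $O(m)$, giving the claimed bound. The only genuine obstacle, as opposed to routine bookkeeping, is the step that uses Lemma~\ref{lem:UB}: one must verify that the various $o(1)$ prefactors coming from $(1-\epsilon)^{-m}$, $\eta/\ell$, and the stability-based absorption of the movement term do not inflate the leading $(1+\gamma)\OPT_{\leq 0}$ contribution beyond the target regret, which is precisely why the hypothesis $D\ell m=o(\OPT_{\leq 0}^{1/3})$ is imposed.
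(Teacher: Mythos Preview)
Your proposal is correct and follows essentially the same approach as the paper: chain Lemma~\ref{lem:alg-cost}, Property~\ref{property:stability}, Lemma~\ref{lem:UB}, Proposition~\ref{prop:hedge}, and Lemma~\ref{lem:OPT0_LB}, then plug in the stated $\epsilon$ and $\gamma$. The only cosmetic difference is that the paper absorbs the $D\eta$ movement term explicitly via $D\eta \leq m\epsilon\cdot\eta\cdot\tfrac{2D\ell}{\epsilon}$ into the $(1+O(m\epsilon)(1+\eta))$ prefactor, whereas you argue it is $o(1)$ relative to $2D\ell/\epsilon$; both are valid under the hypothesis.
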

\begin{proof}
By Lemma~\ref{lem:alg-cost}, $\E[\ALG]$ is at most
\[
(1+O(m^2\epsilon))\bigg( \sum_{t=1}^T \E[f_t^T x_t] + \sum_{t=1}^T \E[ D\lVert x_{t-1} - x_t\rVert_1]\bigg).
\]
The first term in the parenthesis can be bounded using Lemma~\ref{lem:UB}.
The second term can be bounded using
Property~\ref{property:stability}: denoting $\eta$ the learning rate
of $\bar A$, we have
\[ \sum_{t=1}^T \E[ D\lVert x_{t-1} - x_t\rVert_1] \leq D\eta \sum_{t=1}^T \E[g_t^T x_t].
\]
Altogether, using $(1-\epsilon)^m > 1/2$ and
$D\eta \leq m\epsilon\cdot \eta\cdot \frac{2D\ell}{\epsilon}$, we get
that $\E[\ALG]$ is at most
\begin{align*}
\big(1\!+\!O(m^2\epsilon)\big)\bigg(2m
	+ \big(1\!+\!O(m\epsilon)(1\!+\!\eta)\big) \frac{2D\ell}{\epsilon} \sum_{t=1}^T \E[g_t^T\!x_t]\bigg).
\end{align*}
Now we use the regret bound of $\bar A$ (Proposition~\ref{prop:hedge}).
Let $H_{i^*}$ be the best heuristic and
$x^* \in [0,1]^\ell$ be the vector such that $x^*(i^*) = 1$ and
$x^*(i) = 0$ for every $i\neq i^*$.
We have
\begin{align*} 
\frac{2D\ell}{\epsilon} \sum_{t=1}^T \E[g_t^Tx_t]
&\leq (1+\gamma) \frac{2D\ell}{\epsilon} \sum_{t=1}^T \E[g_t^Tx^*] + \frac{2D\ell \ln \ell}{\epsilon\gamma}\\
&\leq (1+\gamma) \OPT_{\leq 0} + \frac{2D\ell \ln \ell}{\epsilon\gamma},
\end{align*}
where the second inequality used Lemma~\ref{lem:OPTk_LB}.
In total, $\E[\ALG]$ is at most
\[
\OPT_{\leq 0} + O(m^2\epsilon + \gamma + m^2\epsilon\gamma )\OPT_{\leq 0}
	+ O\bigg( \frac{2D\ell\ln \ell}{\epsilon\gamma}\bigg).
\]
It is enough to choose $\epsilon := (D\ell\ln{\ell})^{1/3} m^{-4/3}\OPT_{\leq 0}^{-1/3}$
and $\gamma := (D\ell\ln{\ell})^{1/3} m^{2/3} \OPT_{\leq0}^{-1/3}$ to get
the desired bound.
\end{proof}

\section{Tight Lower Bound: Proof of Theorem~\ref{thm:intro_LB}}
\label{sec:LB}

In this section, we prove a lower bound matching Theorem~\ref{theorem:hedge}.
Our proof is based on the construction of \citet{DekelDKP13} for Bandits with
Switching Costs, which is a special case of Bandit Learning against a $1$-Memory
Bounded Adversary.
In order to use their construction in our setting, we need to address additional
challenges due to the algorithm having more power in our setting:
\begin{enumerate} 
\item The algorithm can see the full MTS input instance as it arrives online,
and therefore it can take actions on its own regardless of the advice of the
queried heuristic.
\item
The algorithm (as it is common in MTS setting) has $1$-step lookahead, i.e., it observes the cost function
$c_t$ first and only then chooses the state $s_t$.
\end{enumerate}
Note that both properties are indispensable in our model, see
Section~\ref{sec:prelim}, and any non-trivial positive result would be impossible without them.

We create an MTS input instance circumventing the advantages of the algorithm mentioned above.
Our instance is
generated at random and is oblivious to the construction of \citet{DekelDKP13}.
However, we make a correspondence between the timeline in our MTS instance
and in their Bandit instance: our instance consists of blocks of length three,
where each block corresponds
to a single time step in their Bandit instance.
The construction of \citet{DekelDKP13} itself comes into play when generating
the solutions of the heuristics.
We create these solutions in such a way that the cost of the heuristic
$H_i$ can be related to the cost of the $i$th arm in \citep{DekelDKP13}.
Moreover, the state of $H_i$ depends only on the loss of the $i$th arm
at given time.

\begin{proposition}[\citet{DekelDKP13}]
\label{prop:dekel}
There is a stochastic instance $\ell_1, \dotsc, \ell_T$ of Bandits with Switching Costs such that the expected
regret of any deterministic algorithm producing solution
$i_1, \dotsc, i_T\in [\ell]$ is
\[ \E\big[\sum_{t=1}^T (\ell_t(i_t) + \ind{i_t\neq i_{t-1}})\big] - \min_{i\in[\ell]} \sum_{t=1}^T \ell_t(i) \geq
\tilde\Omega(\ell^{\frac13}T^{\frac23}).\]
\end{proposition}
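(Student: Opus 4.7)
The plan is to adopt the information-theoretic two-point argument of \citet{DekelDKP13}. I would define a stochastic oblivious family of instances indexed by a hidden ``good arm'' $i^* \in [\ell]$ drawn uniformly at random, and construct correlated loss sequences so that distinguishing $i^*$ from the other arms requires either many switches (each one costing $1$ directly) or a large regret contribution from the planted gap.

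Concretely, I would set $\ell_t(i) = \tfrac12\bigl(1 + W_t - \chi\,\ind{i=i^*}\bigr)$ (clipped to $[0,1]$), where $\chi = \Theta\bigl((\ell/T)^{1/3}(\log T)^{-1/2}\bigr)$ is the planted gap and $W_t$ is a \emph{multi-scale random walk} shared across all arms. The walk is built over a dyadic partition of the horizon of depth $\log_2 T$: to each internal node of the binary tree I would attach an independent Gaussian increment of variance tuned to the level, and define $W_t$ as the sum of the increments along the root-to-leaf path ending at time $t$. The point of this construction is that on any single arm the sequence $\ell_t(i)$ looks like a random walk whose fluctuations swamp $\chi$, so that observations collected on a single arm carry essentially no signal about $i^*$; only differencing observations taken from two distinct arms straddling some dyadic cut can cancel the common walk and reveal the gap, and each switch of the algorithm allows at most one such comparison across the cut where it occurs.

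The key technical step is a KL chain-rule computation along the dyadic tree showing that, for any algorithm producing an expected $s$ switches, the KL divergence between its transcript law under $i^*=i$ and under a uniform reference law is $O(\chi^2\, s\, \log T)$. By Pinsker and a Fano-style argument over the $\ell$ hypotheses, identifying $i^*$ with constant probability forces $s \gtrsim (\log \ell)/(\chi^2 \log T)$, while failing to identify $i^*$ costs pseudo-regret $\Omega(T\chi)$ from the planted gap. Balancing the switching term $s$, the signal term $T\chi$, and the identification requirement with the stated choice of $\chi$ yields $\tilde\Omega(\ell^{1/3} T^{2/3})$ regret in both regimes.

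The main obstacle, and the reason a naive Le~Cam argument yields only $\Omega(\sqrt{T})$, is that the algorithm may allocate its switches adaptively and concentrate them on ``informative'' time intervals. Defeating this adaptivity is exactly what the multi-scale construction accomplishes: because the informative scale is hidden inside a $\log T$-deep hierarchy, an adaptive strategy must effectively probe all scales, and the recursive KL decomposition along the dyadic tree quantifies this. I would expect most of the technical work to lie in that recursive decomposition, together with the routine but careful handling of the Gaussian clipping needed to keep $\ell_t(i) \in [0,1]$.
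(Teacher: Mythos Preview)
The paper does not prove Proposition~\ref{prop:dekel}; it is stated as a black-box citation of \citet{DekelDKP13} and then invoked in the proof of Theorem~\ref{thm:LB}. So there is no ``paper's own proof'' to compare against. Your proposal is a faithful high-level sketch of the original \citet{DekelDKP13} argument: the multi-scale (dyadic) random-walk drift shared across arms, a planted gap $\chi$ on a uniformly random arm $i^*$, and a KL chain-rule bound showing that an algorithm making $s$ switches gathers at most $O(\chi^2 s \log T)$ information about $i^*$. The balance you describe between $s$, $T\chi$, and the identification requirement is the right tradeoff that produces $\tilde\Omega(\ell^{1/3}T^{2/3})$.

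Two caveats worth flagging if you intend to flesh this out rather than cite it. First, the delicate part of the original proof is not Pinsker or Fano but the \emph{adaptive} KL bound: because the algorithm's switch times are random and depend on the transcript, the decomposition along the dyadic tree must condition carefully on the past, and one has to argue that each switch contributes $O(\chi^2)$ information only at the single scale where it crosses a cut. Your sketch names this step but underestimates the work; the original paper handles it via a careful martingale-style argument rather than a direct chain rule. Second, the $\ell$-arm dependence in \citet{DekelDKP13} does come from a Fano-type averaging over $i^*\in[\ell]$, but making the clipping and the variance tuning compatible with that averaging requires some care (the walk amplitude must be small enough that clipping events are negligible yet large enough to mask $\chi$ at every scale). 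None of this is a gap in your plan, just a warning that ``routine but careful'' is doing real work here.
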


\paragraph{Description of the MTS instance.}
The metric space $M$ consists of $\ell$ parts $M_1, \dotsc, M_\ell$,
where $M_i = \{r_i, a_i, b_i\}$.
The distances are chosen as follows:
For $j\neq i$, we have
$d(r_i, r_j)=1$, $d(r_i, a_j)=d(r_i, b_j)=2$,
and $d(r_i, a_i)=d(r_i, b_i)=1$.
We have $d(a_i, b_j) = 2$ if $i=j$ and 3 otherwise. See Figure~\ref{fig:LB_M} for an illustration.
\begin{figure}
    \centering
    \includegraphics[width=0.4\linewidth]{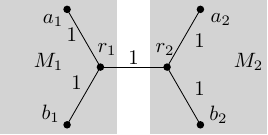}
    \caption{With $\ell=2$, $M$ is the metric closure of this graph.}
    \label{fig:LB_M}
\end{figure}

The input sequence consists of blocks of length three. We choose
the cost functions in block $j$ as follows.
For $i=1, \dotsc, \ell$, we choose $\sigma_j^i \in \{a_i, b_i\}$ uniformly at random.
For the first step of the block $j$, we define the cost function
$c_j'(s)$ such that $c_j'(s) = 0$ if $s\in \{r_1, \dotsc, r_\ell\}$
and $c_j'(s)=+\infty$ otherwise.
In the second time step, we issue
$c_j''(s) = +\infty$ if $s\in \{r_1, \dotsc, r_\ell\}$
and $c_j''(s)=0$ otherwise.
In the third time step, we issue
$c_j'''(s) = 0$ if $s\in \{\sigma_j^1, \dotsc, \sigma_j^\ell\}$
and $c_j'''(s)=+\infty$ otherwise\footnote{
We use infinite costs for a cleaner presentation. But choosing $2D=6$ instead
of $+\infty$ does the same job, see discussion in Section~\ref{sec:prelim}.
}.

Here is the intuition behind this construction.
During block $j$,
any reasonable algorithm stays in $s' \in \{r_1, \dotsc, r_\ell\}$ in the first step
and in $s'''\in \{\sigma_j^1, \dotsc, \sigma_j^\ell\}$ in the third step.
Ideally, the algorithm would move to $s'''$ already in the second step.
However, it does not yet know $\sigma_j^1, \dotsc, \sigma_j^\ell$
and therefore needs the advice of the heuristics.
In the first step of each block, the algorithm pays 1 if it is staying
in the same part $M_i$ of the metric space (for returning to $r_i$).
If it is moving from $M_j$ to $M_i$,
it has to pay 2, i.e., an additional unit cost.

\paragraph{Description of the heuristics.}
For each $i$, the heuristic $H_i$ remains in $M_i$ throughout the whole input instance.
In the first step of the block $t$, it moves to the state $r_i$.
In the third step, it moves to $\sigma_t^i$.
Its position in the second step is derived from the Bandit instance.
With probability $(1-\frac{\ell_t(i)}{2})$, it is $\sigma_t^i$,
and with probability $\frac{\ell_t(i)}{2}$
it is the other point, i.e., $\{a_i,b_i\} \setminus \{\sigma_t^i\}$.
In the block $t$, the heuristic $H_i$ pays
$2$ for the movements in the steps 1 and 2 and,
with probability $\frac{\ell_t(i)}{2}$, another 2
for the movement in the step 3.
Therefore, we have the following observation.

\begin{observation}
\label{obs:LB_heurcost}
The expected cost of the heuristic $H_i$ is equal to
$2T + \sum_{t=1}^T \ell_t(i)$.
\end{observation}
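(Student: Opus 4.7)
The plan is to compute the expected cost incurred by $H_i$ in a single block $t$ and then sum over $t = 1, \dotsc, T$. Since $H_i$ stays inside $M_i = \{r_i, a_i, b_i\}$ throughout the instance, and the three cost functions $c_t', c_t'', c_t'''$ issued in block $t$ vanish exactly on $r_i$, on $\{a_i, b_i\}$, and on $\sigma_t^i$ respectively, the only nonzero contribution to the cost of $H_i$ is the movement cost inside the triangle $M_i$, and no infinite service cost is ever paid.

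I would analyse the three moves inside block $t$ separately. Assuming $H_i$ enters the block at $\sigma_{t-1}^i \in \{a_i, b_i\}$, its step~1 move to $r_i$ costs $d(\sigma_{t-1}^i, r_i) = 1$, and its step~2 move to a point of $\{a_i, b_i\}$ again costs $1$, since $d(r_i, a_i) = d(r_i, b_i) = 1$. In step~3 it moves to $\sigma_t^i$: by the definition of $H_i$, the step~2 point coincides with $\sigma_t^i$ with probability $1 - \ell_t(i)/2$ (movement $0$) and equals $\{a_i, b_i\} \setminus \{\sigma_t^i\}$ with probability $\ell_t(i)/2$ (movement $d(a_i, b_i) = 2$). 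Hence the expected step~3 contribution is $\ell_t(i)$, and the expected per-block cost of $H_i$ is $1 + 1 + \ell_t(i) = 2 + \ell_t(i)$. Summing over the $T$ blocks yields $2T + \sum_{t=1}^T \ell_t(i)$, up to an additive $O(1)$ correction from the first block, which starts at $s_0$ rather than at $\sigma_0^i$ and is harmless for the subsequent $\tilde\Omega(T^{2/3})$ argument.

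There is no real obstacle here; the statement is direct bookkeeping dictated by the geometry of $M_i$ and by the way each $c_t', c_t'', c_t'''$ was set up to be zero on exactly the part of $M_i$ visited by $H_i$ in that sub-step. The two items worth a sanity check are (i) that $H_i$ indeed never enters an infinite-cost state, which follows because at every time step its position lies in the zero set of the corresponding $c_\cdot$, and (ii) that the coin flip determining the step~2 position is independent of the draw of $\sigma_t^i$, so that $\ell_t(i)/2$ is exactly the probability of paying the additional $2$ in step~3 and linearity of expectation delivers the claimed closed form.
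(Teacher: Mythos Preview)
Your proof is correct and follows the same approach as the paper, which simply notes in the paragraph preceding the observation that $H_i$ pays $2$ for the movements in steps~1 and~2 and, with probability $\ell_t(i)/2$, another~2 in step~3. Your extra remarks about the $O(1)$ boundary effect at the initial block and the sanity checks are reasonable additions, though the paper does not spell them out.
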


We consider the following special class of algorithms for our MTS instance
called {\em tracking} algorithms.
This class has two important properties: firstly,
any algorithm on our MTS instance can easily be converted to a tracking
algorithm without increasing its cost. Secondly, solutions produced
by tracking algorithms can be naturally converted online to the problem of Bandits with Switching Cost.

\begin{definition}
Consider an algorithm $A$ with a bandit access to heuristics $H_1, \dotsc, H_\ell$.
We say that $A$ is a {\em tracking algorithm} if,
while processing the MTS instance described above,
satisfies the following condition in every block.
Let $H_i$ be the heuristic queried in the second step.
Then the algorithm resides in $M_i$ during the whole block
and moves to the state of $H_i$ in its second step.
\end{definition}

\begin{lemma}
\label{lem:LB_tracking}
Any algorithm $A$ with bandit access to $H_1, \dotsc, H_\ell$
can be converted to
a tracking algorithm $\bar A$ without increasing its expected cost.
\end{lemma}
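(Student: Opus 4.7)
The plan is to build $\bar A$ explicitly from $A$ and then compare expected costs block by block, using an amortized argument to absorb the discrepancy in starting states.

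For the construction, in each block $t$ let $i_t$ denote the heuristic that $A$ queries in step 2; this is a random variable depending on $A$'s history. Define $\bar A$ to query $H_{i_t}$ in \emph{both} step 1 and step 2 of block $t$, so that the $2$-delayed query chain reveals $H_{i_t}$'s step-2 state. Place $\bar A$ at $r_{i_t}$ in step 1, at $H_{i_t}$'s step-2 state in step 2, and at $\sigma_t^{i_t}$ in step 3. This clearly satisfies the tracking definition. To define $i_t$ from $\bar A$'s own observations, I have $\bar A$ run an internal simulation of $A$: whenever the simulated $A$ queries some $H_j$ that $\bar A$ does not actually query, $\bar A$ supplies a synthetic response sampled from the known marginal distribution of $H_j$'s state (coupled with the true instance randomness), so that the simulated $i_t$ has the same distribution as under the real execution of $A$.

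For the cost comparison I use the potential $\Phi_t = d(s_t^A, s_t^{\bar A})$ on end-of-block states, and aim to show the amortized per-block inequality
\begin{equation*}
\E\bigl[\text{cost}_{\bar A,t} + \Phi_t - \Phi_{t-1}\bigr] \,\leq\, \E\bigl[\text{cost}_{A,t}\bigr].
\end{equation*}
Since $\Phi_0=0$ and $\Phi_T\geq 0$, summing yields $\E[\bar A]\leq \E[A]$. To establish the per-block inequality, I exploit the metric structure: any valid path through a block must pass through $R=\{r_1,\dots,r_\ell\}$ in step 1, leave it in step 2 (so $d(s_1^A,s_2^A)\geq 1$), and land on some $\sigma_t^{k}$ in step 3. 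If $A$'s step-2 location lies in $M_{i_t}$, it gains no advantage over $\bar A$ in placing $s_2^A$ since $\sigma_t^{i_t}$ is uniform on $\{a_{i_t},b_{i_t}\}$ and $A$'s only informative source is the same query chain $\bar A$ uses; if it lies in a different $M_{i'}$, the cheapest step-3 move to a zero-cost state still costs $2$ in expectation or forces a cross-part jump of $3$. In either case the expected step-3 cost of $A$ is at least $\ell_t(i_t)$, matching $\bar A$'s.

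The main obstacle is the bookkeeping for the amortized inequality when $A$'s step-2 part $i^A$ differs from $i_t$ and therefore $s_t^A=\sigma_t^{k^A}$ lies in a different part than $s_t^{\bar A}=\sigma_t^{i_t}$. Here $\Phi_t$ can jump by up to $3$, but this is compensated by $A$ paying an extra unit in step 1 or step 2 (for crossing parts) and by the strictly higher expected step-3 cost incurred by $A$'s worse-informed step-2 choice; case analysis on $(j_1^A,i^A,k^A)$ versus $i_t$ together with the triangle inequality for the distance between $r$-states and $\sigma$-states gives the required bound. Routine simplification then yields $\E[\bar A]\leq \E[A]$, as claimed.
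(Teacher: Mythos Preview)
Your potential-function argument has a genuine gap: the amortized per-block inequality
\[
\E\bigl[\text{cost}_{\bar A,t} + \Phi_t - \Phi_{t-1}\bigr] \leq \E\bigl[\text{cost}_{A,t}\bigr]
\]
is \emph{false} with $\Phi_t = d(s_t^A, s_t^{\bar A})$. The end-of-block states $\sigma_t^{i}$ and $\sigma_t^{j}$ with $i\neq j$ are at distance $3$, so $\Phi_t$ can jump by $3$, while the compensation you identify --- one extra unit for $A$ crossing parts plus the step-3 gap, which is at most $1-\ell_t(i_t)$ --- totals at most $2$. Concretely, take $s_0=r_1$ and let $A$ query $H_1$ in every step-2 (so $i_t=1$) but always reside in $M_2$. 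In block $1$, $\bar A$ pays $0+1+\ell_1(1)$ in expectation, $A$ pays $1+1+1=3$, and $\Phi_1-\Phi_0=3-0$; the amortized left side is $4+\ell_1(1)>3$. So the ``case analysis together with the triangle inequality'' you promise cannot close this gap with this potential.

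The paper avoids this entirely by a sequence of \emph{local} transformations rather than a global potential: first it argues that any cross-part move in step~2 can be shifted to step~1 at no extra cost (the cost functions $c'_t,c''_t$ are deterministic, so one can simulate ahead); then that a cross-part move in step~3 (cost~3) plus the next block's step-1 return (cost~1) is never cheaper than staying in the current part (cost $\leq 2$) and crossing at the next step~1 (cost~2). After these two steps $A$ is confined to one part per block, and the step-3 comparison (your ``worse-informed'' observation) finishes it without any end-of-block bookkeeping. If you want to salvage a potential approach, you would need $\Phi_t$ that captures only the \emph{step-1 cost differential} in the next block --- essentially $\mathbb{1}[s_t^A, s_t^{\bar A}\text{ in different parts}]$, which is at most $1$ --- rather than the full metric distance, and then redo the case analysis. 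As a side note, your ``synthetic responses'' mechanism is unnecessary here: since $\bar A$ makes the same step-2 query as $A$ (and steps~1,~3 yield only information already contained in $c'_t, c'''_t$), $\bar A$ can simulate $A$ exactly, not merely in distribution.
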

\begin{proof}
Consider a block such that the algorithm $A$ moves to another
part of $M$ in the second step.
The cost functions in the first two steps are deterministic,
so we can simulate what part the algorithm would go to in the second
step and move there already in the first step for the same cost.

Consider a block such that the algorithm moves to another
part of $M$ in the third step, i.e.,
moves from $s\in \{a_i,b_i\}$ to $s'\in \{a_j,b_j\}$.
This costs 3 in the third step and the subsequent
move to $r_j$ in the beginning of the next block will cost $1$.
Instead, staying in $M_i$ costs at most 2 in the third step and
moving to $r_j$ in the beginning of the next block will cost 2.

Therefore, we can make sure that the algorithm moves between
different parts of $M$ only in the first step of each block.

Consider the step 2 of block $t$ where $A$ queries the state $s_t^i$ of $H_i$.
If $A$ moves to $s\in \{a_j,b_j\}$, where $j\neq i$, its expected cost will
be at least 1, since $\sigma_t^j$ is chosen from $\{a_j,b_j\}$ uniformly at random.

If $A$
moves to
$s\in \{a_i, b_i\} \setminus \{s_t^i\}$,
Then the expected cost of $A$ in step 3 will be
\[
\left(1-\frac{\ell_t(i)}2\right) 2 = 2 - \ell_t(i) \geq 1 \geq \ell_t(i) = 2 \frac{\ell_t(i)}2,
\]
where the right-hand side corresponds to the cost of algorithm $\bar{A}$ which
moves to $s_t^i$ instead.
\end{proof}

\begin{lemma}
Consider a tracking algorithm $A$. For each block $t$, define
$i_t$ such that $A$ is located at $s_t^{i_t}$ in its second step.
The expected cost of $A$ is $2T + \sum_{t=1}^T \ell_t(i_t) + \sum_{t=1}^T 1(i_t\neq i_{t-1})$.
\end{lemma}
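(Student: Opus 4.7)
The plan is to decompose the total cost of the tracking algorithm block by block, analyze the three types of moves inside each block using the metric, and take expectations using the random choice of $\sigma_t^{i_t}$ and of the heuristic's state in the second step. Since the algorithm is tracking, its location is fully constrained: at the boundary between blocks $t-1$ and $t$ it is at $\sigma_{t-1}^{i_{t-1}} \in \{a_{i_{t-1}}, b_{i_{t-1}}\}$, then moves to $r_{i_t}$ in step 1, to $s_t^{i_t} \in \{a_{i_t}, b_{i_t}\}$ in step 2, and finally to $\sigma_t^{i_t}$ in step 3 (the latter forced by the infinite cost function $c_t'''$).

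Concretely, I would bound the cost contribution of block $t$ as follows. For step 1, the movement is from $\sigma_{t-1}^{i_{t-1}}$ to $r_{i_t}$; by the definition of $d$, this equals $1$ if $i_t = i_{t-1}$ and $2$ if $i_t \neq i_{t-1}$, i.e.\ exactly $1 + \mathbf{1}(i_t \neq i_{t-1})$. For step 2, the movement is from $r_{i_t}$ to a point of $\{a_{i_t}, b_{i_t}\}$, which costs $1$ regardless of which of the two it is. For step 3, the movement is from $s_t^{i_t}$ to $\sigma_t^{i_t}$ inside $M_{i_t}$, costing either $0$ (if $s_t^{i_t} = \sigma_t^{i_t}$) or $2$ (if $s_t^{i_t} \neq \sigma_t^{i_t}$, since $d(a_{i_t}, b_{i_t}) = 2$). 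By the definition of $H_{i_t}$, the probability that $s_t^{i_t} \neq \sigma_t^{i_t}$ is $\ell_t(i_t)/2$, so the expected step-3 cost equals $\ell_t(i_t)$. Summing the three steps in block $t$ gives an expected block cost of $2 + \ell_t(i_t) + \mathbf{1}(i_t \neq i_{t-1})$, and summing over all $T$ blocks yields exactly $2T + \sum_{t=1}^T \ell_t(i_t) + \sum_{t=1}^T \mathbf{1}(i_t \neq i_{t-1})$.

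The only subtle point is linearity of expectation across the three random sources: the blocks' $\sigma_t^i$'s, the heuristics' internal randomness, and the algorithm's own randomness. Since step 3's cost in block $t$ depends on $\sigma_t^{i_t}$ and on $H_{i_t}$'s internal coin, but the marginal probability that $s_t^{i_t} \neq \sigma_t^{i_t}$ is $\ell_t(i_t)/2$ conditional on any realization of $i_t$ (the heuristic's second-step position is determined only by $\sigma_t^i$ and the coin flip described in the construction, independently of the algorithm's randomness), I can take the conditional expectation over this coin flip first and then take the outer expectation over $i_t$. The initial block requires a minor convention—one can either treat $i_0$ as undefined and note that step 1 of block 1 costs at most $2$ starting from $s_0$, absorbing the constant, or formally set $i_0$ so that $\mathbf{1}(i_1 \neq i_0)$ accounts for the starting-location difference. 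I do not expect a real obstacle here; the main check is just that the tracking constraint really does pin down the location at every time step, which it does by construction.
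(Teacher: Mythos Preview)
Your proposal is correct and follows essentially the same argument as the paper's proof: decompose each block into the three moves, get cost $2 + \mathbf{1}(i_t\neq i_{t-1})$ for steps~1--2 and expected cost $2\cdot \tfrac{\ell_t(i_t)}{2}=\ell_t(i_t)$ for step~3, then sum. You simply spell out the per-step distances and the conditioning more explicitly than the paper does.
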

\begin{proof}
In each block $t=1,\dotsc, T$, the algorithm pays $2 + \ind{i_t\neq i_{t-1}}$ in the first two steps.
In the third step, it pays 2 only if $s_t^{i_t}\neq \sigma_t^{i_t}$ which
happens with probability $\frac{\ell_t(i_t)}2$.
\end{proof}

Theorem~\ref{thm:intro_LB} is derived from the following statement
by Yao's principle and from the fact that we can
pad the constructed instance with zero cost vectors to
get an input instance of length higher than $\OPT_{\leq 0}$.
\begin{theorem}\label{thm:LB}
There is a stochastic instance $I$ of MTS of length $3T$ with heuristics $H_1, \dotsc, H_\ell$
such that any deterministic algorithm with bandit access to $H_1, \dotsc, H_\ell$
suffers expected regret at least $\tilde{\Omega}(\ell^{1/3}T^{2/3})$.
\end{theorem}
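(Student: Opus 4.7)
The plan is to reduce any deterministic MTS algorithm with bandit access to the constructed heuristics to a (randomized) algorithm for the Bandits with Switching Costs instance of \citet{DekelDKP13}, and then invoke Proposition~\ref{prop:dekel}. By Lemma~\ref{lem:LB_tracking}, I may assume the algorithm $A$ is tracking without loss of generality; the transformation in that lemma is deterministic, so a deterministic $A$ yields a deterministic tracking $\bar A$. For a tracking $A$, let $i_t$ be the (random) heuristic index queried in block $t$. The cost formula established just above the theorem then gives
$$\E[\ALG] = 2T + \E\bigg[\sum_{t=1}^T \ell_t(i_t) + \sum_{t=1}^T \ind{i_t \neq i_{t-1}}\bigg],$$
where the expectation is over the Dekel losses $\ell_t$, the masks $\sigma_t^j$, and the Bernoulli samples defining the heuristics' step-2 states.

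Next I would bound $\E[\OPT_{\leq 0}]$ from above by the static bandit optimum. Observation~\ref{obs:LB_heurcost} yields $\E[\cost(H_i)] = 2T + \E[\sum_t \ell_t(i)]$ for every $i$, and since $\OPT_{\leq 0} \leq \cost(H_i)$ on every realization, $\E[\OPT_{\leq 0}] \leq 2T + \min_{i \in [\ell]} \E[\sum_t \ell_t(i)]$. Subtracting, the expected MTS regret of $A$ is bounded below by the expected pseudoregret of the sequence $(i_t)$ for Bandits with Switching Costs on the Dekel instance. It therefore suffices to show this latter quantity is $\tilde\Omega(\ell^{1/3}T^{2/3})$.

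For the remaining step I would build a randomized bandit algorithm $B$ realizing exactly the sequence $(i_t)$. At each round $t$, $B$ plays the arm $i_t$ that the internally simulated $A$ would query in block $t$, observes $\ell_t(i_t)$, and uses fresh internal randomness to draw the Bernoulli observation $s_t^{i_t}$ (with mean $1-\ell_t(i_t)/2$) together with the independent masks $\sigma_t^j$ for all $j \in [\ell]$ that $A$ would otherwise read off $c_t'''$. These simulated observations are fed back to the deterministic $A$, which produces the next query. Because the auxiliary randomness is independent of $\ell$, the joint law of $(i_t)$ under $B$'s run on Dekel's instance coincides with its law under $A$'s run on the MTS instance, so the two expected pseudoregrets are equal. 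Since a randomized bandit algorithm is a convex mixture of deterministic ones, Proposition~\ref{prop:dekel} applied to each realization of $B$'s internal randomness yields the desired $\tilde\Omega(\ell^{1/3}T^{2/3})$ lower bound.

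The main obstacle I expect is formalizing this simulation: the MTS algorithm enjoys capabilities (full access to the cost functions $c_t$, one-step lookahead, and the freedom to take improper actions) that the bandit learner lacks, and one must argue that none of these leak information about $\ell_t(j)$ for $j \neq i_t$. The tracking reduction already neutralizes improper moves and absorbs the lookahead into the deterministic step-1 and step-2 costs, while the remaining information visible to $A$ (essentially the $\sigma_t^j$) is drawn independently of the Dekel losses and is faithfully reproduced inside $B$'s internal randomness. Writing the underlying probability space as a product of Dekel's randomness with the MTS-specific randomness, and checking that the view supplied to $A$ inside $B$ is distributionally identical to $A$'s view on the real MTS instance, is the principal bookkeeping step of the proof.
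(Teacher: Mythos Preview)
Your approach is essentially the paper's: reduce to a tracking algorithm via Lemma~\ref{lem:LB_tracking}, use the cost formula for tracking algorithms, and map the step-2 queries $(i_t)$ to a bandit strategy so that Proposition~\ref{prop:dekel} applies. The paper does the same thing slightly more tersely, observing directly that the answer to the step-2 query depends only on $\ell_t(i_t)$ while the answers in steps~1 and~3 are independent of $\ell$; you spell out the simulation that turns this into a randomized bandit algorithm $B$.

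There is one small slip. From $\OPT_{\leq 0}\leq \cost(H_i)$ you deduce $\E[\OPT_{\leq 0}]\leq 2T+\min_i\E\big[\sum_t\ell_t(i)\big]$, which leaves you needing a lower bound on the bandit \emph{pseudoregret} of $B$. But Proposition~\ref{prop:dekel}, and hence your averaging over $B$'s internal coins, only lower-bounds the \emph{expected regret}, which in general is at least the pseudoregret, so the implication goes the wrong way. The fix is immediate: condition on the Dekel losses first and apply Observation~\ref{obs:LB_heurcost} with Jensen to get
\[
\E[\OPT_{\leq 0}]
=\E_\ell\big[\E[\min_i \cost(H_i)\mid \ell]\big]
\leq \E_\ell\big[\min_i \E[\cost(H_i)\mid \ell]\big]
= 2T+\E_\ell\Big[\min_i\sum_t\ell_t(i)\Big].
\]
With this sharper bound the MTS regret is bounded below by the bandit \emph{expected} regret, and your mixture argument via Proposition~\ref{prop:dekel} goes through as written.
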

\begin{proof}
Let $I$ be the input instance constructed above from $\ell_1, \dotsc, \ell_T$
in Proposition~\ref{prop:dekel}.
Consider a fixed deterministic algorithm $A$. Firstly, we consider queries made by $A$.
The queries in the first and the third step of each block have
a trivial answer which are independent of the loss sequence $\ell_1, \dotsc, \ell_T$:
if $A$ queries $H_i$, then the answers are $r_i$ and $\sigma_t^i$ respectively.
For each block $t=1, \dotsc, T$, we denote $i_t$ the heuristic queried in its second step.
Note that the result of the query
depends only on $\ell_t(i_t)$.

We can assume that $A$ is tracking, since
making it tracking would only decrease its expected cost. Therefore, the expected cost of
$A$ can be written as
\begin{align*}
&\sum_{t=1}^T \E\big[\ind{i_t\neq i_{t-1}} + 2 + 2\cdot \ind{s_t^{i_t}\neq \sigma_t^i} \big]\\
&= 2T + \sum_{t=1}^T \E\bigg[\ind{i_t\neq i_{t-1}} + 2\E[\ind{s_t^{i_t}\neq \sigma_t^i}\mid \ell_t(i_t)]\bigg]\\
&= 2T + \sum_{t=1}^T \E[\ind{i_t\neq i_{t-1}} + \ell_t(i_t)].
\end{align*}
Let $H_{i^*}$ denote the best heuristic. Using Observation~\ref{obs:LB_heurcost}, the regret of $A$
with respect to $H_{i^*}$ is equal to
\begin{align*}
\sum_{t=1}^T \E[\ind{i_t\neq i_{t-1}} + \ell_t(i_t)] - \sum_{t=1}^T \E[ \ell_t(i^*)].
\end{align*}
This is equal to the expected regret of the strategy playing arms $i_1, \dotsc, i_T$
on the sequence $\ell_1, \dotsc, \ell_T$. By Proposition~\ref{prop:dekel},
this regret is at least $\tilde\Omega(\ell^{1/3}T^{2/3})$.
\end{proof}

\subsection{Tight dependence on parameters $k$ and $D$}

\label{sec:appendix_LB}

We elaborate on the asymptotic dependence on $k$ and $D$ for the lower bound with $2$-delayed exploration starting from the construction in Section \ref{sec:LB}.

The optimality of the dependence on $D$ can be seen from scaling.
If we scale the input instance in Theorem~\ref{thm:LB} by a factor $D$,
i.e. we multiply all distances in the metric space and all cost vectors by $D$,
the cost of any algorithm including $\OPT_{\leq0}$ will be scaled by $D$.
Therefore, the lower bound in Therorem~\ref{thm:LB}
becomes
\[
D \cdot \E[\ALG] \leq D\OPT_{\leq 0} + D\tilde\Omega(\ell^{1/3}\OPT_{\leq 0}^{2/3})
=
\OPT_{\leq 0}' + D^{1/3}\tilde\Omega(\ell^{1/3}(\OPT_{\leq 0}')^{2/3}),
\]
where $\OPT_{\leq0}' = \OPT_{\leq0}$ is the new value after the scaling.

Now, we show tightness of $k$ in Theorem~\ref{thm:intro_UBk}.

\begin{theorem}\label{thm:LB_ext}
There exists a stochastic instance $I$ of MTS of length $3T$ with heuristics $H_1, \dotsc, H_\ell$
such that any deterministic algorithm with bandit access to $H_1, \dotsc, H_\ell$
suffers expected regret at least $\tilde{\Omega}((k \ell)^{1/3}T^{2/3})$.
\end{theorem}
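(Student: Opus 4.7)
The plan is to stack $K = k+1$ independent copies of the hard instance from Theorem~\ref{thm:LB}, so that the best single heuristic within each copy is (potentially) different, but the $k$-switch benchmark can follow the winner in each copy using only $k$ switches.

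Concretely, I partition the time horizon into $K=k+1$ \emph{phases}, each consisting of $T/K$ consecutive blocks (so $3T/K$ MTS steps) of the construction in Section~\ref{sec:LB}. The metric space $M$ and the $\ell$ heuristics $H_1,\dotsc,H_\ell$ are the same throughout, but for each phase $p \in [K]$ I draw an \emph{independent} stochastic bandit loss sequence $(\ell_t^{(p)})_{t\leq T/K}$ as in Proposition~\ref{prop:dekel} together with independent random labels $\sigma_j^{i,(p)}\in\{a_i,b_i\}$ used by the cost functions $c_j', c_j'', c_j'''$. Inside phase $p$, each heuristic $H_i$ behaves exactly as in the construction of Theorem~\ref{thm:LB} driven by $\ell^{(p)}$ and $\sigma^{(p)}$.

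To upper bound the benchmark, let $i_p^*$ denote the best heuristic in phase $p$ in hindsight. The combination that plays $H_{i_p^*}$ throughout phase $p$ uses at most $K-1=k$ inter-phase switches, and each such switch moves between two $r_i$-states at distance at most $2$, so by Observation~\ref{obs:LB_heurcost} applied to each phase,
\[
\OPT_{\leq k}\ \leq\ 2T\ +\ \sum_{p=1}^K\min_{i\in[\ell]}\sum_{t\in\text{phase }p}\ell_t^{(p)}(i)\ +\ 2k.
\]

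For the lower bound on the algorithm, I first invoke Lemma~\ref{lem:LB_tracking} to assume $A$ is tracking without increasing its expected cost (the proof of that lemma is local to each block, hence still applies). Fix a phase $p$ and condition on all randomness in phases $1,\dotsc,p-1$. Under this conditioning $A$ becomes a deterministic strategy on the phase-$p$ stochastic instance, which is independent of the conditioning. Repeating the computation at the end of the proof of Theorem~\ref{thm:LB} inside phase $p$, the expected cost of $A$ in that phase equals
\[
2T/K\ +\ \sum_{t\in\text{phase }p}\E\!\left[\mathds{1}(i_t\neq i_{t-1})+\ell_t^{(p)}(i_t)\right],
\]
and Proposition~\ref{prop:dekel} applied to the $T/K$ loss vectors of phase $p$ gives
\[
\sum_{t\in\text{phase }p}\E\!\left[\mathds{1}(i_t\neq i_{t-1})+\ell_t^{(p)}(i_t)\right]\ -\ \min_{i}\sum_{t\in\text{phase }p}\ell_t^{(p)}(i)\ \geq\ \tilde\Omega\!\left(\ell^{1/3}(T/K)^{2/3}\right).
\]
Taking expectation over the history and summing over the $K$ phases, the per-phase regrets add up to
\[
K\cdot\tilde\Omega\!\left(\ell^{1/3}(T/K)^{2/3}\right)\ =\ \tilde\Omega\!\left(K^{1/3}\ell^{1/3}T^{2/3}\right)\ =\ \tilde\Omega\!\left((k\ell)^{1/3}T^{2/3}\right),
\]
which dominates the additive $2k$ switching cost of $\OPT_{\leq k}$ (it suffices to take $T$ large enough that $T^{2/3}\gg k^{2/3}$).

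The main obstacle is the per-phase lower bound: a priori the algorithm's behavior inside phase $p$ could carry useful information from earlier phases. The remedy is exactly the conditioning argument above, which exploits that the randomness defining phase $p$ (both the adversarial Dekel--Ding--Koren--Peres sequence $\ell^{(p)}$ and the labels $\sigma^{(p)}$) is drawn independently of everything before phase $p$; conditionally, $A$ is just a deterministic player against a fresh copy of the hard bandit instance, so Proposition~\ref{prop:dekel} applies verbatim. Yao's principle then converts the deterministic-algorithm lower bound into one for randomized algorithms, and padding with zero-cost steps lets us express $T$ in terms of $\OPT_{\leq k}$ exactly as in the derivation of Theorem~\ref{thm:intro_LB}.
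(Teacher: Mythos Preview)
Your proposal is correct and follows essentially the same approach as the paper: partition the horizon into $\Theta(k)$ equal-length segments, apply the single-phase lower bound (Theorem~\ref{thm:LB}) inside each one, and sum. The paper's own proof is terser---it uses $k$ segments rather than $k{+}1$ and simply invokes Theorem~\ref{thm:LB} per segment without spelling out the conditioning argument---whereas you are more explicit about the independence of the per-phase randomness and why Proposition~\ref{prop:dekel} applies conditionally; this extra care is justified but does not constitute a different route.
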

\begin{proof}
Let $\ALG$ be any deterministic algorithm for MTS with bandit access to $H_1, \dots, H_{\ell}$.  
We assume without loss of generality that there exists $\tau \in \mathbb{N}$ multiple of $3$ such that $3 \cdot T = k \cdot \tau$. We split the time horizon into $k$ segments of size $\tau$. 
By Theorem~\ref{thm:LB}, for each $j \in [k]$ we have
\begin{align*}
    \E\left[\sum_{t=\tau(j-1)+1}^{\tau j} \left(f_t^T x_t + d(x_t, x_{t-1})\right)\right] \ge c(H_j^*) + \Omega((\ell)^{1/3} \tau^{2/3}),
\end{align*}
where $c(H_j^*)$ represents the cost of the best heuristic in the segment $j$. It follows that
\begin{align*}
    \E[\ALG] &\ge \sum_{j=1}^{k} c(H_j^*) + \Omega(k \ell^{1/3} \tau^{2/3}) \\
    &\ge \OPT_{\le k} + \Omega((k\ell)^{1/3} T^{2/3}).\qedhere
\end{align*}
\end{proof}

\section{Upper Bound for $m$-Delayed Bandit Access to Heuristics against $\OPT_{\le k}$}\label{appendix:UBk}

In what follows, we prove an upper bound for Algorithm \ref{alg:BMTSm} using SHARE as $\bar{A}$ against $\OPT_{\le k}$. 

Firstly, SHARE satisfies the following performance bound found in \citep{BianchiLugosi}. 
\begin{proposition}
\label{prop:share}
Consider $x_1, \dotsc, x_T \in [0,1]^\ell$ the solution produced by
SHARE with learning rate $\eta$, sharing parameter $\alpha$, and denote $\gamma := 1- \exp(-\eta)$.
For any solution $i_1, \dotsc, i_T$
such that the number of time steps where $i_{t-1}\neq i_t$ is at most $k$,
we have
\[
\sum_{t=1}^T g_t^T x_t \le \frac{\ln{\frac1{1-\gamma}}}{\gamma(1-\alpha)} \sum_{t=1}^T g_t(i_t) + k\frac{\ln{(\ell/\alpha)}}{\gamma(1-\alpha)}.
\]
\end{proposition}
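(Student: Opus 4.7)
The bound is the standard regret guarantee for the fixed-share / SHARE algorithm, and my plan is the textbook potential-function proof underlying \citet{BianchiLugosi}. Recall that SHARE maintains a distribution $x_t \in \Delta^\ell$ produced by an exponential-weights step $v_{t+1}(i) = x_t(i) e^{-\eta g_t(i)}$ with normalizer $W_{t+1} = \sum_j v_{t+1}(j)$, followed by a sharing step $x_{t+1}(i) = (1-\alpha)\,v_{t+1}(i)/W_{t+1} + \alpha/\ell$. Fix a comparator sequence $(i_1, \ldots, i_T)$ with at most $k$ switches; this partitions the horizon into $k{+}1$ constant-comparator segments. The argument tracks the potential $\Phi_t := -\ln x_t(i_t)$.

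The heart of the proof is a two-regime bound on $\Phi_{t+1} - \Phi_t$. Inside a segment ($i_{t+1} = i_t$), the two key ingredients are the share lower bound $x_{t+1}(i_t) \ge (1-\alpha)\,v_{t+1}(i_t)/W_{t+1}$ and the convexity inequality $e^{-\eta y} \le 1 - \gamma y$ for $y \in [0,1]$ (with $\gamma = 1 - e^{-\eta}$), which gives $\ln W_{t+1} \le -\gamma\, g_t^T x_t$. Plugging these in and taking logarithms yields
\[
\Phi_{t+1} - \Phi_t \;\le\; -\ln(1-\alpha) \,+\, \eta\, g_t(i_t) \,-\, \gamma\, g_t^T x_t.
\]
At a switching round ($i_{t+1} \neq i_t$), the share step directly forces $x_{t+1}(i_{t+1}) \ge \alpha/\ell$, so $\Phi_{t+1} \le \ln(\ell/\alpha)$ independently of $\Phi_t$; this lets me pay $\ln(\ell/\alpha)$ once per switch rather than chaining the potential.

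Summing the inside-segment bound within each of the $k{+}1$ segments, charging at most $\ln(\ell/\alpha)$ at each of the $k$ boundary switches, and using $\Phi_1 \le \ln \ell$ (uniform initialization) together with $\Phi_{T+1} \ge 0$, I would arrive at an inequality of the shape
\[
\gamma \sum_{t=1}^T g_t^T x_t \;\le\; \eta \sum_{t=1}^T g_t(i_t) \,+\, k\ln(\ell/\alpha) \,+\, \ln\ell \,+\, (T-k)\ln\tfrac{1}{1-\alpha}.
\]
The step I expect to require the most care is absorbing the linear-in-$T$ drift $(T-k)\ln(1/(1-\alpha))$, which must not appear in the final bound. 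The standard maneuver, which produces exactly the prefactor in the proposition, is to carry the $(1-\alpha)$ factor through the telescoping rather than discarding it, yielding $\gamma(1-\alpha)\sum g_t^T x_t$ on the left-hand side. Dividing by $\gamma(1-\alpha)$ and using $\eta = \ln\bigl(1/(1-\gamma)\bigr)$ then produces the stated prefactor $\tfrac{\ln(1/(1-\gamma))}{\gamma(1-\alpha)}$ on both the comparator loss $\sum g_t(i_t)$ and the switching term $k$, while the $\ln\ell/(\gamma(1-\alpha))$ contribution from the uniform initialization is absorbed into the $k\ln(\ell/\alpha)/(\gamma(1-\alpha))$ term by treating the first segment as if preceded by a switch.
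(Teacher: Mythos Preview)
The paper does not prove this proposition at all: it simply states the bound and attributes it to \citet{BianchiLugosi}, so there is no ``paper proof'' to compare against. Your sketch is precisely the standard potential argument one finds in that reference, so in that sense you are aligned with what the paper intends.

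One point deserves sharpening. With the update you wrote down---normalize the exponential weights and then mix with the uniform distribution, $x_{t+1}=(1-\alpha)v_{t+1}/W_{t+1}+\alpha/\ell$---the per-step telescoping genuinely produces the additive $\ln\tfrac{1}{1-\alpha}$ drift you flagged, and there is no honest way to ``carry it to the left-hand side.'' The variant the paper actually uses (Algorithm~\ref{alg:share} and equation~\eqref{eq:share}) is different: the shared mass is $\alpha\Delta/\ell$ with $\Delta=\sum_j w_t(j)\bigl(1-e^{-\eta g_t(j)}\bigr)$, and there is no renormalization. Then the total weight satisfies $W_{t+1}=W_t-(1-\alpha)\Delta$, so
\[
\ln\frac{W_{t+1}}{W_t}\;\le\;\ln\!\bigl(1-(1-\alpha)\gamma\, g_t^{T}x_t\bigr)\;\le\;-(1-\alpha)\gamma\, g_t^{T}x_t,
\]
which puts the factor $(1-\alpha)\gamma$ directly on $\sum_t g_t^{T}x_t$ with no residual $T$-linear term. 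Combining this with the lower bound on $w_T(i_T)$ obtained by tracking the comparator through the $k$ switches (each switch costing at most $\ln(\ell/\alpha)$) and dividing by $\gamma(1-\alpha)$ gives exactly the stated inequality with prefactor $\eta/(\gamma(1-\alpha))=\ln\!\bigl(\tfrac{1}{1-\gamma}\bigr)/(\gamma(1-\alpha))$. So your plan is right, but the mechanism that removes the $T\ln\tfrac{1}{1-\alpha}$ term is the specific form of the sharing step, not a bookkeeping trick in the telescoping.
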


The following is a generalization of Lemma \ref{lem:OPT0_LB} that holds for $k \ge 0$. 
\begin{lemma}
\label{lem:OPTk_LB}
Consider $k\geq 0$. Let $i_1, \dotsc, i_T \in [\ell]$ be a solution minimizing
$\sum_{t=1}^T \big(c_t(s_t^{i_t}) + d(s_{t-1}^{i_{t-1}}, s_t^{i_t})\big)$ such that $i_{t-1} \neq i_t$ holds in at most $k$ time steps $t$. For each $t$, we define $x_t\in [0,1]^\ell$ such that
$x_t(i_t) = 1$ and $x_t(i)=0$ for each $i\neq i_t$.
We have
\[
\E\left[\sum_{t=1}^T g_t^T x_t\right] \leq \frac{\epsilon k}{2\ell} + \frac{\epsilon}{2D\ell}OPT_{\le k}.
\]
\end{lemma}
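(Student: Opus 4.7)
The plan is to mirror the proof of Lemma~\ref{lem:OPT0_LB} almost verbatim, with two adjustments: (i) the benchmark $x^*$ is replaced by the time-varying sequence $x_t$ of indicators of the optimal switching schedule $i_1,\dotsc,i_T$, and (ii) the extra $k D$ slack coming from the triangle-inequality gap between $f_t(i_t)$ and the actual switching cost $d(s_{t-1}^{i_{t-1}}, s_t^{i_t})$ has to be accounted for, explaining the first term $\epsilon k/(2\ell)$.

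First I would note that since the adversary is oblivious, the sequence $i_1,\dotsc,i_T$ is fixed before any randomness of the algorithm is drawn, so each $x_t$ is a deterministic vector. Then, exactly as in Lemma~\ref{lem:OPT0_LB}, I would condition on $\cF_{t-1}$ and invoke Observation~\ref{obs:gt} to get
\[
\E[g_t^T x_t] \;=\; \E\!\left[\E[g_t^T x_t \mid \cF_{t-1}]\right] \;=\; \E\!\left[\frac{E_t}{2D\ell}\, f_t^T x_t\right] \;=\; \frac{\E[\beta_{t-m} X_{t-m}]}{2D\ell}\, f_t(i_t) \;\leq\; \frac{\epsilon}{2D\ell}\, f_t(i_t),
\]
using $\E[\beta_{t-m}]=\epsilon$ and $X_{t-m}\leq 1$. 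Summing over $t$ gives
$\E\!\left[\sum_{t=1}^T g_t^T x_t\right] \leq \frac{\epsilon}{2D\ell}\sum_{t=1}^T f_t(i_t)$.

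The remaining step, which is the main obstacle (though it is essentially bookkeeping already foreshadowed by the definition of $\OPT_{\leq k}$ in Section~\ref{sec:prelim}), is to convert $\sum_{t} f_t(i_t)$ into $\OPT_{\leq k} + kD$. Recall that $f_t(i_t) = c_t(s_t^{i_t}) + d(s_{t-1}^{i_t}, s_t^{i_t})$ while the actual cost of the switching benchmark at step $t$ is $c_t(s_t^{i_t}) + d(s_{t-1}^{i_{t-1}}, s_t^{i_t})$. When $i_{t-1}=i_t$ these agree, and when they differ the triangle inequality bounds the discrepancy by $d(s_{t-1}^{i_{t-1}}, s_{t-1}^{i_t})\leq D$. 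Since there are at most $k$ such switching steps, $\sum_{t=1}^T f_t(i_t) \leq \OPT_{\leq k} + kD$.

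Plugging this back yields
\[
\E\!\left[\sum_{t=1}^T g_t^T x_t\right] \;\leq\; \frac{\epsilon}{2D\ell}\,(\OPT_{\leq k} + kD) \;=\; \frac{\epsilon k}{2\ell} + \frac{\epsilon}{2D\ell}\,\OPT_{\leq k},
\]
which is exactly the claimed bound. The case $k=0$ recovers Lemma~\ref{lem:OPT0_LB}, consistent with the fact that no switching slack is needed when $x_t$ is constant.
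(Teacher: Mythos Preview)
Your proposal is correct and follows essentially the same approach as the paper's own proof: condition on $\cF_{t-1}$, apply Observation~\ref{obs:gt} to get $\E[g_t^T x_t]\leq \frac{\epsilon}{2D\ell}f_t(i_t)$, sum, and then bound $\sum_t f_t(i_t)\leq \OPT_{\leq k}+kD$ via the triangle inequality since the discrepancy $d(s_{t-1}^{i_t},s_t^{i_t})-d(s_{t-1}^{i_{t-1}},s_t^{i_t})\leq D$ is nonzero in at most $k$ steps. The paper records this discrepancy as $\delta_t$ but the argument is identical.
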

\begin{proof}
By Observation~\ref{obs:gt}, we have
\begin{align*}
\E[g_t^T x_t]
&= \E\left[ \E\left[g_t^T x_t\mid \cF_{t-1}\right]\right]
= \E\left[\frac{E_t}{2D\ell} f_t^T x_t\right]\\
&= \frac{\epsilon}{2D\ell}\E[X_{t-m} f_t^T x_t]
\leq \frac{\epsilon}{2D\ell} f_t^T x_t
\end{align*}
for each $t=1, \dotsc, T$. Let $\delta_t := d(s_{t-1}^{i_{t}}, s_t^{i_t}) - d(s_{t-1}^{i_{t-1}}, s_t^{i_t}) \leq D$. Summing over $t$, we get
\begin{align*}
\E\left[\sum_{t=1}^T g_t^T x_t\right]
&\leq \frac{\epsilon}{2D\ell} \sum_{t=1}^T f_t^T x_t \\
&= \frac{\epsilon}{2D\ell} \sum_{t=1}^T \left( c_t(s_t^{i_t}) + d(s_{t-1}^{i_{t-1}}, s_t^{i_t}) + \delta_t \right) \\
&\le \frac{\epsilon}{2D\ell} \left(\OPT_{\le k} + k D \right),
\end{align*}
since $\delta_t > 0$ for at most $k$ time steps.
\end{proof}

We are now ready to prove the following upper bound.

\begin{theorem}\label{theorem:share}
Algorithm~\ref{alg:BMTSround} with SHARE as $\bar A$
and $m$-delayed bandit access to $\ell$ heuristics
on any MTS instance with diameter $D$
with offline optimum cost at least $2k$
such that $D\ell m \leq o(\OPT_{\leq k}^{1/3})$
achieves $\Reg_k(\ALG)$ at most
\begin{align*}
         O \left((D \ell k)^{1/3} m^{2/3} \OPT_{\le k}^{2/3} \ln \frac{{\ell}^{1/3} (\OPT_{\le k})^{2/3}}{(Dk)^{2/3} m ^{4/3}}\right).
\end{align*}
\end{theorem}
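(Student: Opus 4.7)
The plan is to follow the proof of Theorem~\ref{theorem:hedge} line-by-line, substituting SHARE for HEDGE in two precise places and paying a $k$-dependent price each time. SHARE satisfies Property~\ref{property:stability} \citep{BlumB00}, and Lemma~\ref{lem:UB} was stated for any $\bar A$ with this property; hence the chain
\[
\E[\ALG] \;\le\; (1+O(m^2\epsilon))\Big(\textstyle\sum_{t}\E[f_t^T x_t] + D\sum_{t}\E[\lVert x_{t-1}-x_t\rVert_1]\Big)
\]
followed by Lemma~\ref{lem:UB} and the stability bound $D\sum_t \E[\lVert x_{t-1}-x_t\rVert_1] \le D\eta \sum_t \E[g_t^Tx_t]$ carries over unchanged. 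The upshot, exactly as in the proof of Theorem~\ref{theorem:hedge}, is
\[
\E[\ALG] \;\le\; (1+O(m^2\epsilon))\Big(2m + (1+O(m\epsilon)(1+\eta))\tfrac{2D\ell}{\epsilon}\sum_{t}\E[g_t^T x_t]\Big).
\]

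The two substitutions come when we bound $\sum_t \E[g_t^T x_t]$. First, I would apply Proposition~\ref{prop:share} with the comparator $i_1,\dotsc,i_T$ achieving $\OPT_{\le k}$ (and at most $k$ switches), giving
\[
\textstyle\sum_{t}\E[g_t^T x_t] \;\le\; \tfrac{\eta}{\gamma(1-\alpha)}\,\E\!\left[\sum_{t} g_t(i_t)\right] + \tfrac{k\ln(\ell/\alpha)}{\gamma(1-\alpha)}.
\]
Second, instead of Lemma~\ref{lem:OPT0_LB}, I invoke Lemma~\ref{lem:OPTk_LB} on the indicator vectors $x_t^*$ of the comparator to get $\E[\sum_t g_t(i_t)] \le \tfrac{\epsilon k}{2\ell} + \tfrac{\epsilon}{2D\ell}\OPT_{\le k}$. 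Multiplying through by $\tfrac{2D\ell}{\epsilon}$ and using $\eta/\gamma = 1+O(\gamma)$ yields
\[
\tfrac{2D\ell}{\epsilon}\textstyle\sum_{t}\E[g_t^T x_t] \;\le\; (1+O(\gamma)+O(\alpha))\big(\OPT_{\le k} + Dk\big) + \tfrac{2D\ell k\ln(\ell/\alpha)}{\epsilon\gamma(1-\alpha)}.
\]
Plugging this back, all cross-terms of orders $m^2\epsilon\cdot \gamma$, $m\epsilon\cdot \eta$, $\gamma\alpha$ are lower-order, so $\Reg_k(\ALG)$ collapses to
\[
O\!\left((m^2\epsilon + \gamma + \alpha)\OPT_{\le k} + Dk + \tfrac{D\ell k\ln(\ell/\alpha)}{\epsilon\gamma}\right).
\]

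Finally, I would balance three parameters rather than two. Matching $m^2\epsilon\OPT_{\le k}$, $\gamma\OPT_{\le k}$, and $\tfrac{D\ell k\ln(\ell/\alpha)}{\epsilon\gamma}$ gives the natural scalings $\epsilon := A\,m^{-4/3}\OPT_{\le k}^{-1/3}$ and $\gamma := A\,m^{2/3}\OPT_{\le k}^{-1/3}$ with $A = (D\ell k\ln(\ell/\alpha))^{1/3}$, and then I set $\alpha := (\ell Dk m^2/\OPT_{\le k})^{2/3}$ so that $\alpha\OPT_{\le k}$ is dominated by the main regret term and the logarithm inside $A$ takes the form $\ln\!\big(\ell^{1/3}\OPT_{\le k}^{2/3}/((Dk)^{2/3} m^{4/3})\big)$ announced in the statement. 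The additive $Dk$ is absorbed into the main term using the hypothesis $\OPT_{\le k}\ge 2k$ combined with the assumption $D\ell m \leq o(\OPT_{\le k}^{1/3})$.

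The routine part is the bookkeeping of Lemma~\ref{lem:UB} plus the stability term, which is identical to Theorem~\ref{theorem:hedge}. The main obstacle I expect is the parameter tuning: the extra $\alpha$ parameter of SHARE and the additive $Dk$ emerging from the $kD$ slack in Lemma~\ref{lem:OPTk_LB} make the balance more delicate than in the $k=0$ case, and landing precisely on the logarithmic factor $\ln\!\big(\ell^{1/3}\OPT_{\le k}^{2/3}/((Dk)^{2/3} m^{4/3})\big)$ requires picking $\alpha$ so that $\ell/\alpha$ equals exactly this quantity, then verifying that $\alpha\OPT_{\le k}$ and $Dk$ are both dominated under the stated regime $D\ell m = o(\OPT_{\le k}^{1/3})$.
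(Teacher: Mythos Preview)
Your proposal is correct and follows essentially the same route as the paper: bound $\E[\ALG]$ via Lemma~\ref{lem:alg-cost}, Property~\ref{property:stability}, and Lemma~\ref{lem:UB}; then control $\sum_t \E[g_t^Tx_t]$ with Proposition~\ref{prop:share} and Lemma~\ref{lem:OPTk_LB}; finally balance parameters. The only difference is the tuning order: the paper first fixes $\alpha := D\ell k/(\epsilon\OPT_{\le k})$ and $\gamma := \sqrt{D\ell k/(\epsilon\OPT_{\le k})}$ as functions of $\epsilon$, then sets $\epsilon := (D\ell k)^{1/3} m^{-4/3}\OPT_{\le k}^{-1/3}$, which keeps the $\ln(\ell/\alpha)$ factor outside and yields exactly the stated bound with the logarithm to the first power. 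Your simultaneous balancing folds $\ln(\ell/\alpha)$ into $A$ and in fact gives the tighter dependence $(\ln(\ell/\alpha))^{1/3}$, which still implies the claimed ``at most'' bound; your resulting $\alpha$ coincides with the paper's choice. One small point: your absorption of the additive $Dk$ into the main term deserves a line of justification---the paper handles it by folding $Dk$ into the $(1+2\ell)Dk\ln(\ell/\alpha)/(\epsilon\gamma(1-\alpha))$ term, which is cleaner than arguing it is $o$ of the regret directly.
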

\begin{proof}
    Let $H_{i_1^*}, \dots, H_{i_T^*}$ be the sequence of optimal heuristics such that $i_{t}^* \neq i_{t+1}^*$ for at most $k$ indices $t \in [T-1]$. Then we define $x_t^* \in [0, 1]^{\ell}$ such that $x_t^*(i_t^*) = 1$ and $x_t^*(i) = 0$ for $i \neq i_t^*$. 

    Proceeding analogously to the proof of Theorem \ref{theorem:hedge} we have
    \begin{align*}
        \E[\ALG] \le \big(1\!+\!O(m^2\epsilon)\big)\bigg(2m
    	+ \big(1\!+\!O(m\epsilon)(1\!+\!\eta)\big) \frac{2D\ell}{\epsilon} \sum_{t=1}^T \E[g_t^T\!x_t]\bigg).
    \end{align*}
    
    Using the regret bound of $\bar{A}$ (Proposition \ref{prop:share}) followed by Lemma \ref{lem:OPTk_LB} we have
    \begin{align*}
        \frac{2D\ell}{\epsilon} \sum_{t=1}^T \E[g_t^Tx_t] 
        &\leq \frac{\ln{\frac{1}{1-\gamma}}}{\gamma(1-\alpha)} \cdot \frac{2D\ell}{\epsilon} \E\left[\sum_{t=1}^T g_t^T x_t^*\right]  + 2D\ell k\frac{\ln{(\ell/\alpha)}} {\epsilon\gamma(1-\alpha)} \\
        &\le \frac{\ln{\frac{1}{1-\gamma}}}{\gamma(1-\alpha)} \OPT_{\le k} + (1 + 2\ell)D k\frac{\ln{(\ell/\alpha)}} {\epsilon\gamma(1-\alpha)}.
    \end{align*}
    By setting $\alpha := (D{\ell}k)/(\epsilon \OPT_{\le k})$ and $\gamma := \sqrt{D \ell k/(\epsilon \OPT_{\le k}})$ and considering that $k \le \OPT_{\le k}/2$, we obtain
    \begin{align*}
        (1 + \eta) \frac{2D\ell}{\epsilon} \sum_{t=1}^T \E[g_t^Tx_t] \le \OPT_{\le k} + O \left(\sqrt{\frac{D \ell k \OPT_{\le k}}{\epsilon}} \ln \frac{\epsilon \OPT_{\le k}}{Dk}\right) +  O \left(D \ell k \ln \frac{\epsilon \OPT_{\le k}}{Dk}\right).
    \end{align*}
    where $\eta = \ln{1/(1-\gamma)}$ is the learning rate. It follows that the total expected cost is at most
    \begin{align*}
        \E[\ALG] &\le \OPT_{\le k} + O(m^2 \epsilon \OPT_{\le k}) + O \left(\sqrt{\frac{D \ell k \OPT_{\le k}}{\epsilon}} \ln \frac{\epsilon \OPT_{\le k}}{Dk}\right) + O \left(m\epsilon D \ell k \ln \frac{\epsilon \OPT_{\le k}}{Dk}\right).
    \end{align*}
    Finally, by setting $\epsilon := (\OPT_{\le k})^{-1/3}(D{\ell}k)^{1/3}m^{-4/3}$ we obtain the desired bound.
\end{proof}

\begin{corollary}
    Algorithm \ref{alg:BMTSround} is $(1 + \epsilon)$-competitive against $\OPT_{\le k}$ for $k$ as large as
    \begin{align*}
        \Omega\left(\frac{\epsilon^3 \OPT_{\le k}}{D \ell m^2 (\ln{Z})^3} \right)
    \end{align*}
    where $Z = {\ell}^{1/3} (OPT_{\le k})^{2/3}/(D^{2/3} m ^{4/3})$.
\end{corollary}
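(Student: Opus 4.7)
The plan is to derive the corollary directly from the regret bound established in Theorem~\ref{theorem:share}. That theorem gives us
\[
\Reg_k(\ALG) \leq C\cdot (D\ell k)^{1/3} m^{2/3} \OPT_{\leq k}^{2/3} \ln Z
\]
for some absolute constant $C$, where $Z = \ell^{1/3}(\OPT_{\leq k})^{2/3}/(D^{2/3}m^{4/3})$. Being $(1+\epsilon)$-competitive against $\OPT_{\leq k}$ is equivalent to requiring $\Reg_k(\ALG) \leq \epsilon\,\OPT_{\leq k}$. So the entire task reduces to solving the inequality
\[
C\cdot (D\ell k)^{1/3} m^{2/3} \OPT_{\leq k}^{2/3} \ln Z \;\leq\; \epsilon\,\OPT_{\leq k}
\]
for $k$.

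First I would isolate $k$ by dividing both sides by the factors not depending on it and then cubing:
\[
k \;\leq\; \frac{\epsilon^3 \OPT_{\leq k}}{C^3\, D\,\ell\,m^2 (\ln Z)^3},
\]
which is exactly the threshold claimed in the corollary, up to the constant absorbed into the $\Omega(\cdot)$. So the whole derivation is a one-line algebraic rearrangement of Theorem~\ref{theorem:share}.

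The only subtlety I would verify is that the hypotheses of Theorem~\ref{theorem:share}, namely $\OPT_{\leq k}\geq 2k$ and $D\ell m \leq o(\OPT_{\leq k}^{1/3})$, are not violated at the threshold. The condition $\OPT_{\leq k} \geq 2k$ follows because the claimed upper bound on $k$ is of order $\epsilon^3 \OPT_{\leq k}/(D\ell m^2 (\ln Z)^3)$, which is substantially smaller than $\OPT_{\leq k}/2$ in the regime where $\epsilon$ is a small constant and $D,\ell,m,\ln Z$ are at least $1$. The smallness condition on $D\ell m$ is part of the standing assumption inherited from the theorem statement and requires nothing new.

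I do not anticipate any technical obstacle here; the corollary is a clean consequence (rewriting of the regret bound into a competitive-ratio form) of the main theorem, and the only thing to be careful about is tracking the $\ln Z$ factor when cubing so that the final denominator carries $(\ln Z)^3$ rather than $\ln Z$.
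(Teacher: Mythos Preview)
Your proposal is correct and matches the paper's intended derivation (the paper states the corollary without proof, as an immediate consequence of Theorem~\ref{theorem:share}). One small point worth making explicit: the logarithmic factor in Theorem~\ref{theorem:share} is $\ln\bigl(\ell^{1/3}\OPT_{\leq k}^{2/3}/((Dk)^{2/3}m^{4/3})\bigr)$, which depends on $k$, whereas the corollary's $Z$ has only $D^{2/3}$ in the denominator; since $k\geq 1$ this replacement only increases the logarithm, so bounding the regret by $C(D\ell k)^{1/3}m^{2/3}\OPT_{\leq k}^{2/3}\ln Z$ is valid, and your subsequent rearrangement goes through unchanged.
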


\section{Upper Bound in the Setting of \citet{AroraDT12}}\label{sec:appendix_arora}
We demonstrate how to use the solutions $x_1, \dots, x_T$ produced by Algorithm \ref{alg:BMTSm} in order obtain solutions in the setting of MAB against $m$-memory bounded adversaries introduced by \citet{AroraDT12}. To achieve this, we propose Algorithm \ref{alg:MAB} and analyze its performance. Note that Algorithm \ref{alg:BMTSm} does not require any look-ahead, so it can be used directly in the setting of \citet{AroraDT12}. 

\begin{algorithm2e}\label{alg:MAB}
\DontPrintSemicolon
\caption{Producing solutions for MAB against $m$-memory bounded adversaries}
\textbf{Input:} $x_1, \dotsc, x_T$ resulting from Algorithm~\ref{alg:BMTSm} \;

\For{$t=1, \dotsc, T$}{
    \lIf{$x_t \neq x_{t-1}$}{
        $i_t \sim \text{ Round$(i_{t-1}, x_{t-1}, x_t)$}$ 
    }
    \lElse{
        $i_t:=i_{t-1}$ 
    }
    play $i_t$ \;
}
\end{algorithm2e}
To obtain a regret bound in this setting, we must relate the total cost incurred by Algorithm \ref{alg:MAB} to $\E[\sum_{t=1}^T f_t^T x_t]$ which we can upper bound using Lemma \ref{lem:UB}.

\begin{lemma}
\label{lem:alg-mab-cost}
Given $x_1, \dotsc, x_T\in [0,1]^\ell$ produced by Algorithm~\ref{alg:BMTSm},
the expected cost of Algorithm~\ref{alg:MAB} is at most
\begin{align*}
    O(m \epsilon T) + \E\left[\sum_{t=1}^T f_t^T x_t\right]
\end{align*}
\end{lemma}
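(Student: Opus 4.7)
The plan is to relate the actual loss of Algorithm~\ref{alg:MAB} in the memory-bounded setting to the ``stable'' quantity $\sum_t f_t^T x_t$, and show that the gap is controlled by the expected number of arm switches in a sliding window of length $m$. I will rely on the fact that in the MAB-against-memory-bounded-adversaries setting, losses lie in $[0,1]$, so deviations between the true loss and the ``stable'' loss are bounded by a constant.

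Let $L_t$ denote the loss incurred by Algorithm~\ref{alg:MAB} at time $t$. If the last $m$ actions satisfy $i_{t-m+1} = i_{t-m+2} = \cdots = i_t$, then by definition of $f_t$ we have $L_t = f_t(i_t)$. Otherwise, $L_t$ can be arbitrary in $[0,1]$. Define the indicator
\[
S_t := \ind{\exists s \in \{t-m+2, \dotsc, t\} \text{ such that } i_s \neq i_{s-1}}.
\]
Then I would write the pointwise bound $L_t \leq f_t(i_t) + S_t$, so that the total expected cost is at most $\E[\sum_t f_t(i_t)] + \E[\sum_t S_t]$. Since the Round procedure from Proposition~\ref{prop:comb_round} guarantees $\P[i_t = i \mid x_t] = x_t(i)$, we get $\E[f_t(i_t)] = \E[f_t^T x_t]$, matching the term in the lemma statement.

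It remains to bound $\E[\sum_t S_t]$. Each switch event $\{i_s \neq i_{s-1}\}$ is counted in at most $m-1$ of the indicators $S_{t}$ (namely for $t \in \{s, \dotsc, s+m-2\}$), so
\[
\sum_{t=1}^T S_t \leq (m-1) \sum_{s=1}^T \ind{i_s \neq i_{s-1}}.
\]
The Round procedure sets $\P[i_s \neq i_{s-1} \mid x_{s-1}, x_s] \leq \tfrac{1}{2}\lVert x_{s-1} - x_s\rVert_1$. In Algorithm~\ref{alg:BMTSm}, $x_{s-1} = x_s$ holds unless $s-1 \in E$, and the marginal probability $\P[E_{s-1}]$ is at most $\epsilon$ by Observation~\ref{obs:EtmXt}. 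Combined with the trivial bound $\lVert x_{s-1} - x_s\rVert_1 \leq 2$, we obtain $\P[i_s \neq i_{s-1}] \leq \epsilon$. Summing over $s$ gives $\E[\sum_s \ind{i_s \neq i_{s-1}}] \leq T\epsilon$, hence $\E[\sum_t S_t] \leq (m-1) T\epsilon = O(mT\epsilon)$, completing the proof.

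The main conceptual step is the pointwise inequality $L_t \leq f_t(i_t) + S_t$, which requires being precise about the semantics of the memory-bounded loss: when the past $m$ actions agree with $i_t$, the loss collapses to $f_t(i_t)$, and otherwise we simply pay the trivial $[0,1]$ bound. Everything else is a routine accounting of switches, which here is particularly clean because switches in Algorithm~\ref{alg:MAB} can only occur at time steps $s$ where $x_{s-1} \neq x_s$, and those are precisely the steps immediately following an exploration event of Algorithm~\ref{alg:BMTSm}.
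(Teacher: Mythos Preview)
Your proof is correct, and it takes a genuinely different route from the paper's. The paper decomposes the cost according to the exploitation indicator $X_t$: it writes $\E[\ALG] \leq \E[\sum_t (1-X_t)] + \E[\sum_t f_t^T x_t]$, arguing that at exploitation steps the loss is $f_t(i_t)$ with $i_t\sim x_t$, while at the $m$ steps of each exploration block the loss is trivially bounded by $1$; it then bounds $\E[\sum_t(1-X_t)]$ via $\sum_{i=0}^{m-1}\sum_t \E[E_{t-i}] \leq O(m\epsilon T)$. Your decomposition instead uses the switch indicator $S_t$ and the pointwise inequality $L_t \leq f_t(i_t) + S_t$, which speaks directly to the memory-bounded loss semantics rather than to the internal exploration/exploitation schedule.

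Your route is arguably cleaner on one point. The paper's step $\E[\ell_t(i_t)X_t] \leq \E[f_t^T x_t]$ tacitly assumes that the played loss equals the reference loss $f_t(i_t)$ at every exploitation step, but this can fail in the first $m-1$ exploitation steps right after an arm switch (which occurs precisely when $t-1\in E$ and Round resamples). That slack is only $O(m\epsilon T)$ and can be absorbed, but your argument sidesteps it entirely: once $S_t=0$ you genuinely have $i_{t-m+1}=\cdots=i_t$, so $L_t=f_t(i_t)$ holds without caveat. Conversely, the paper's bookkeeping of $\sum_t(1-X_t)$ is slightly more direct, since it counts non-exploitation steps rather than going through the Round switching probability; but both accounting schemes land on the same $O(m\epsilon T)$.
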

\begin{proof}
    We begin the proof by observing that $\E[l_t(i_t) \cdot X_t] \le \E[f_t^Tx_t]$, since the algorithm plays $i_t$ sampled from $x_t$ during exploitation rounds. For the time steps leading up to an exploration round and the exploration step itself we can only say that the cost per round incurred is at most $1$. We thus have
    \begin{align*}
        \E[\ALG] &= \E\left[\sum_{t=1}^T \ell_t(i_t) \right] \\
        &\le \E\left[\sum_{t=1}^T \left(1 \cdot (1 - X_t) + \ell_t(i_t) \cdot X_t \right) \right] \\
        &\le \E\left[\sum_{t=1}^T (1 - X_t)\right] + \E\left[\sum_{t=1}^T f_t^T x_t\right] \\
    \end{align*}
    It remains to estimate the first term. By definition of $X_t, E_t, \beta_t$ and Observation \ref{obs:EtmXt} we have:
    \begin{align*}
        \E\left[\sum_{t=1}^T (1 - X_t)\right] &\le 2m + \sum_{i=0}^{m-1} \E\left[\sum_{t=2m+1}^T E_{t-i} \right] \\
        &= 2m + \sum_{i=0}^{m-1} \sum_{t=2m+1}^T \E[\beta_{t-i-m}]\cdot\E[X_{t-i-m}] \\
        &\le 2m + \sum_{i=0}^{m-1} \sum_{t=2m+1}^T E[\beta_{t-i-m}] \\
        &\le O(m \epsilon T).\qedhere
    \end{align*}
\end{proof}

We also need to link the real and perceived costs resulting from following an optimal policy. 
\begin{lemma}
\label{lem:OPT_arora}
Let $x^* \in \Delta^\ell$ be a solution minimizing
$\sum_{t=1}^T \ell_t^Tx^*$.
We have
\[
\E\left[\sum_{t=1}^T g_t^T x^*\right] \leq \frac{\epsilon}{2D\ell}
\sum_{t=1}^T \ell_t^T x^*
\]
\end{lemma}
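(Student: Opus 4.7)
The plan is to mirror the proof of Lemma~\ref{lem:OPT0_LB} almost verbatim, substituting $\ell_t$ for $f_t$ throughout. The only substantive difference is that here the ``true'' loss is the MAB loss $\ell_t$ rather than the heuristic cost $f_t$, but the mechanics of $g_t$, $\beta_t$, $E_t$, and $X_t$ are identical to those in Algorithm~\ref{alg:BMTSm} since Algorithm~\ref{alg:MAB} reuses the same learning dynamics.

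First, I observe that Observation~\ref{obs:gt} carries over unchanged to this setting because its proof depends only on the structure of $\beta_t$ and $e_t$, not on the nature of the loss vector. Concretely, for $t \geq 1$,
\[
\E[g_t \mid \cF_{t-1}] \;=\; \frac{E_t}{2D\ell}\,\ell_t,
\]
where $g_t$ is now defined using $\ell_t$ instead of $f_t$. Applying the tower property and linearity then gives $\E[g_t^T x^*] = \E[(E_t/(2D\ell))\,\ell_t^T x^*]$.

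Next, I invoke Observation~\ref{obs:EtmXt} to rewrite $E_t = \beta_{t-m} X_{t-m}$. Since $\beta_{t-m}$ is independent of $X_{t-m}$ (the latter being a deterministic function of $\beta_1,\dotsc,\beta_{t-m-1}$) and since the adversary is oblivious so $\ell_t^T x^*$ is a fixed number independent of the algorithm's randomness, I can factor:
\[
\E[E_t\,\ell_t^T x^*] \;=\; \E[\beta_{t-m}]\,\E[X_{t-m}]\,\ell_t^T x^* \;\leq\; \epsilon\,\ell_t^T x^*,
\]
using $\E[\beta_{t-m}] = \epsilon$ and $\E[X_{t-m}] \leq 1$. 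Summing the resulting per-step bound $\E[g_t^T x^*] \leq (\epsilon/(2D\ell))\,\ell_t^T x^*$ from $t=1$ to $T$ yields exactly the claim.

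There is no real obstacle here: the lemma is essentially the Arora-setting analogue of Lemma~\ref{lem:OPT0_LB}, and the one-line observation that $\ell_t$ is oblivious (so the conditional computation goes through unchanged) is the only thing to check. The corner case $t \leq m$ is harmless because $X_{t-m}$ and $\beta_{t-m}$ are taken to be $0$ there, making those terms vanish and strengthening the inequality.
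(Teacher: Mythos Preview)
Your proof is correct and follows essentially the same route as the paper's: apply Observation~\ref{obs:gt} to compute $\E[g_t\mid\cF_{t-1}]$, rewrite $E_t=\beta_{t-m}X_{t-m}$ via Observation~\ref{obs:EtmXt}, pull out $\E[\beta_{t-m}]=\epsilon$, bound $X_{t-m}\leq 1$, and sum. The paper's proof additionally remarks that ``$E_t=1$ only if the same action was taken for the last $m$ steps,'' which is the substantive reason why the feedback used to build $g_t$ coincides with the policy loss $\ell_t(e_t)$ in the memory-bounded setting; you absorb this into the phrase ``$g_t$ is now defined using $\ell_t$ instead of $f_t$,'' which is fine but slightly elides the point.
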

\begin{proof}
 By Observation~\ref{obs:gt}, we have
 \begin{align*}
\E[g_t^T x^*]
&= \E\left[ \E\left[g_t^T x^*\mid \cF_{t-1}\right]\right]
 = \E\left[\frac{E_t}{2D\ell} f_t^T x^*\right]\\
 &= \frac{\epsilon}{2D\ell}\E[X_{t-m} f_t^T x^*]
 \leq \frac{\epsilon}{2D\ell} f_t^T x^*
 \end{align*}
for each $t=1, \dotsc, T$. The second equality holds
because $E_t=1$ only if the same action was taken for the last $m$ steps.
Summing over $t$, we get
 \[
 \E\left[\sum_{t=1}^T g_t^T x^*\right]
 \leq \frac{\epsilon}{2D\ell} \sum_{t=1}^T f_t^T x^*
 = \frac{\epsilon}{2D\ell} \sum_{t=1}^T \ell_t^T x^*.
 \qedhere
 \]

\end{proof}

Putting everything together, we may now prove the following theorem.
\begin{theorem}
    Consider $\ell$ available arms and let $m \ge 1$ be the memory bound of the adaptive adversary in the setting of \citet{AroraDT12}. Then Algorithm~\ref{alg:MAB} achieves the following policy regret bound
    \begin{align*}
        O\left((m \ell \ln{\ell})^{1/3} T^{2/3}\right)
    \end{align*}
\end{theorem}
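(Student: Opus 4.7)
The plan is to essentially mimic the proof of Theorem~\ref{theorem:hedge}, replacing Lemma~\ref{lem:alg-cost} with Lemma~\ref{lem:alg-mab-cost} and Lemma~\ref{lem:OPT0_LB} with Lemma~\ref{lem:OPT_arora}, and then retuning $\epsilon$ and $\gamma$ to reflect that $\OPT$ may be as large as $T$ in this setting. Throughout, $D$ is an absolute constant (bounded losses give, say, $2D=1$).

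First, I would apply Lemma~\ref{lem:alg-mab-cost} to reduce the policy cost of Algorithm~\ref{alg:MAB} to an additive $O(m\epsilon T)$ term plus $\E\bigl[\sum_{t=1}^T f_t^T x_t\bigr]$, where $x_1,\dots,x_T$ are the distributions produced by Algorithm~\ref{alg:BMTSm} using HEDGE as $\bar A$. Next, I would invoke Lemma~\ref{lem:UB} to bound this latter sum by
\begin{equation*}
2m + \left(1+O(m\epsilon)+O\bigl(m\eta\epsilon/\ell\bigr)\right)\frac{2D\ell}{\epsilon}\,\E\Bigl[\sum_{t=1}^T g_t^T x_t\Bigr].
\end{equation*}

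For the second step, let $x^*\in\Delta^\ell$ be the indicator of the best single arm. Applying Proposition~\ref{prop:hedge} gives
\begin{equation*}
\E\Bigl[\sum_{t=1}^T g_t^T x_t\Bigr]\le (1+\gamma)\,\E\Bigl[\sum_{t=1}^T g_t^T x^*\Bigr]+\frac{\ln\ell}{\gamma},
\end{equation*}
and Lemma~\ref{lem:OPT_arora} converts the perceived loss of $x^*$ back into a genuine loss via $\E[\sum_t g_t^T x^*]\le \frac{\epsilon}{2D\ell}\sum_t \ell_t^T x^* = \frac{\epsilon}{2D\ell}\OPT$. Combining these chains and using $\OPT\le T$ to absorb the $\gamma\,\OPT$ term by $\gamma T$, the overall regret becomes
\begin{equation*}
\E[\ALG]-\OPT = O\!\left(m\epsilon T+\gamma T+\frac{D\ell\ln\ell}{\epsilon\gamma}+m^2\epsilon\,\OPT\right).
\end{equation*}

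Finally, I would balance the three leading terms. Setting $\gamma \asymp m\epsilon$ equalizes $\gamma T$ with $m\epsilon T$, after which $m\epsilon T \asymp D\ell\ln\ell/(\epsilon\gamma)$ yields $\epsilon^3 \asymp D\ell\ln\ell/(m^2 T)$, i.e., $\epsilon \asymp (D\ell\ln\ell/(m^2T))^{1/3}$. Plugging back gives the claimed bound $O\bigl((m\ell\ln\ell)^{1/3} T^{2/3}\bigr)$, and one checks that the lower-order term $m^2\epsilon\,\OPT\le m^2\epsilon T$ is dominated since $m\epsilon=o(1)$ in the regime $T=\omega(m^2D\ell\ln\ell)$. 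The only real subtlety compared to Theorem~\ref{theorem:hedge} is that the natural scale for optimization is $T$ (via $\gamma T$) instead of $\OPT_{\le 0}$, and one must verify that $\OPT$ never leaks back in at a higher order; this is automatic because $\OPT\le T$ and the $\gamma\,\OPT$ term is the one being balanced. No look-ahead is used at any point, which is consistent with the MAB setting of \citet{AroraDT12}.
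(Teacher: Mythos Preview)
Your proposal is correct and follows essentially the same route as the paper: Lemma~\ref{lem:alg-mab-cost}, then Lemma~\ref{lem:UB}, then Proposition~\ref{prop:hedge} combined with Lemma~\ref{lem:OPT_arora}, followed by the same parameter tuning $\epsilon\asymp(\ell\ln\ell)^{1/3}m^{-2/3}T^{-1/3}$ and $\gamma\asymp m\epsilon$. One small bookkeeping slip: the multiplicative excess coming out of Lemma~\ref{lem:UB} is $O(m\epsilon)$, not $O(m^2\epsilon)$ (the $m^2\epsilon$ factor belongs to Lemma~\ref{lem:alg-cost}, which is not used in this setting), so the extra term is $O(m\epsilon)\,\OPT\le m\epsilon T=\gamma T$, already one of the balanced terms; your claim that $m^2\epsilon T$ is ``dominated since $m\epsilon=o(1)$'' would in fact fail for growing $m$, but with the correct exponent the issue disappears.
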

\begin{proof}
We begin the proof by observing that Lemma \ref{lem:UB} still holds in the setting of \citet{AroraDT12} as it is based solely on the dynamics of Algorithm \ref{alg:BMTSm} and the stability assumption on $\bar{A}$. Using Lemma \ref{lem:alg-mab-cost} followed by Lemma \ref{lem:UB} we have
\begin{align*}
    \E[ALG] &\le O(m \epsilon T) + \E\left[\sum_{t=1}^T f_t^T x_t\right] \\
    &\le O(m \epsilon T) + 2m + \left(1+ \frac{m\epsilon}{(1-\epsilon)^m} + \frac{m\eta\epsilon}{\ell}\right)
\cdot \frac{2D\ell}{\epsilon}\E\left[\sum_{t=1}^T g_t^Tx_t\right]
\end{align*}
Arguing as in the proof of Theorem \ref{theorem:hedge}, we can use Proposition \ref{prop:hedge} and Lemma \ref{lem:OPT_arora} to obtain
\begin{align*}
    \frac{2D\ell}{\epsilon}\E\left[\sum_{t=1}^T g_t^Tx_t\right] \le (1+\gamma) \sum_{t=1}^T \ell_t^T x^* + \frac{2D\ell \ln \ell}{\epsilon\gamma}
\end{align*}
Since $\sum_{t=1}^T \ell_t^Tx^* \le T$, it follows that 
\begin{align*}
    \E[\ALG] - \sum_{t=1}^T \ell_t^Tx^* \le O\left((m \epsilon + \gamma) T \right) + O\left(\frac{D\ell \ln \ell}{\epsilon\gamma} \right).
\end{align*}
Since the losses in the setting of \citet{AroraDT12} are contained in $[0, 1]^{\ell}$, we may drop the dependence on $D$. By choosing $\epsilon := (\ell \ln{\ell})^{1/3} m^{-2/3}T^{-1/3}$
and $\gamma := (m \ell \ln{\ell})^{1/3} T^{-1/3}$ we get
the desired bound.
\end{proof}

\section*{Acknowledgements}
This research was supported by
Junior Researchers’ Grant awarded by Bocconi University thanks to
the philanthropic gift of the Fondazione Romeo ed Enrica Invernizzi.

\bibliographystyle{icml2025}

\begin{thebibliography}{51}
\providecommand{\natexlab}[1]{#1}
\providecommand{\url}[1]{\texttt{#1}}
\expandafter\ifx\csname urlstyle\endcsname\relax
  \providecommand{\doi}[1]{doi: #1}\else
  \providecommand{\doi}{doi: \begingroup \urlstyle{rm}\Url}\fi

\bibitem[Almanza et~al.(2021)Almanza, Chierichetti, Lattanzi, Panconesi, and Re]{AlmanzaCLPR21}
Almanza, M., Chierichetti, F., Lattanzi, S., Panconesi, A., and Re, G.
\newblock Online facility location with multiple advice.
\newblock In \emph{{NeurIPS}}, pp.\  4661--4673, 2021.

\bibitem[Amir et~al.(2022)Amir, Azov, Koren, and Livni]{AmirAKL22}
Amir, I., Azov, G., Koren, T., and Livni, R.
\newblock Better best of both worlds bounds for bandits with switching costs.
\newblock In \emph{Advances in Neural Information Processing Systems}, volume~35, pp.\  15800--15810, 2022.

\bibitem[Anand et~al.(2022)Anand, Ge, Kumar, and Panigrahi]{Anand0KP22}
Anand, K., Ge, R., Kumar, A., and Panigrahi, D.
\newblock Online algorithms with multiple predictions.
\newblock In \emph{{ICML}}, volume 162 of \emph{Proceedings of Machine Learning Research}, pp.\  582--598. {PMLR}, 2022.

\bibitem[Antoniadis et~al.(2021)Antoniadis, Coester, Eli{\'{a}}s, Polak, and Simon]{AntoniadisCEPS21}
Antoniadis, A., Coester, C., Eli{\'{a}}s, M., Polak, A., and Simon, B.
\newblock Learning-augmented dynamic power management with multiple states via new ski rental bounds.
\newblock In \emph{{NeurIPS}}, 2021.

\bibitem[Antoniadis et~al.(2023{\natexlab{a}})Antoniadis, Coester, Eli{\'{a}}s, Polak, and Simon]{AntoniadisCEPS23}
Antoniadis, A., Coester, C., Eli{\'{a}}s, M., Polak, A., and Simon, B.
\newblock Mixing predictions for online metric algorithms.
\newblock \emph{CoRR}, abs/2304.01781, 2023{\natexlab{a}}.
\newblock \doi{10.48550/arXiv.2304.01781}.
\newblock URL \url{https://doi.org/10.48550/arXiv.2304.01781}.

\bibitem[Antoniadis et~al.(2023{\natexlab{b}})Antoniadis, Coester, Eli\'{a}\v{s}, Polak, and Simon]{AntoniadisCE0S20}
Antoniadis, A., Coester, C., Eli\'{a}\v{s}, M., Polak, A., and Simon, B.
\newblock Learning-augmented dynamic power management with multiple states via new ski rental bounds.
\newblock \emph{ACM Trans. Algorithms}, 19\penalty0 (2), apr 2023{\natexlab{b}}.
\newblock ISSN 1549-6325.
\newblock \doi{10.1145/3582689}.
\newblock URL \url{https://doi.org/10.1145/3582689}.

\bibitem[Arora et~al.(2012)Arora, Dekel, and Tewari]{AroraDT12}
Arora, R., Dekel, O., and Tewari, A.
\newblock Online bandit learning against an adaptive adversary: from regret to policy regret.
\newblock In \emph{Proceedings of the 29th International Coference on International Conference on Machine Learning}, ICML'12, pp.\  1747–1754, Madison, WI, USA, 2012. Omnipress.
\newblock ISBN 9781450312851.

\bibitem[Auer et~al.(2002)Auer, Cesa-Bianchi, Freund, and Schapire]{AuerCBFS02}
Auer, P., Cesa-Bianchi, N., Freund, Y., and Schapire, R.~E.
\newblock The nonstochastic multiarmed bandit problem.
\newblock \emph{SIAM Journal on Computing}, 32\penalty0 (1):\penalty0 48--77, 2002.
\newblock \doi{10.1137/S0097539701398375}.
\newblock URL \url{https://doi.org/10.1137/S0097539701398375}.

\bibitem[Azar et~al.(1993)Azar, Broder, and Manasse]{AzarBM93}
Azar, Y., Broder, A.~Z., and Manasse, M.~S.
\newblock On-line choice of on-line algorithms.
\newblock In \emph{Proceedings of the Fourth Annual ACM-SIAM Symposium on Discrete Algorithms}, SODA '93, pp.\  432–440, USA, 1993. Society for Industrial and Applied Mathematics.
\newblock ISBN 0898713137.

\bibitem[Balkanski et~al.(2023)Balkanski, Perivier, Stein, and Wei]{Balkanski2023}
Balkanski, E., Perivier, N., Stein, C., and Wei, H.-T.
\newblock Energy-efficient scheduling with predictions.
\newblock In \emph{Proceedings of the 37th International Conference on Neural Information Processing Systems}, NIPS '23, Red Hook, NY, USA, 2023. Curran Associates Inc.

\bibitem[Bamas et~al.(2020)Bamas, Maggiori, Rohwedder, and Svensson]{BamasMRS20}
Bamas, {\'{E}}., Maggiori, A., Rohwedder, L., and Svensson, O.
\newblock Learning augmented energy minimization via speed scaling.
\newblock In \emph{{NeurIPS}}, 2020.

\bibitem[Bartal et~al.(2006)Bartal, Bollobás, and Mendel]{Bartal_2006}
Bartal, Y., Bollobás, B., and Mendel, M.
\newblock Ramsey-type theorems for metric spaces with applications to online problems.
\newblock \emph{Journal of Computer and System Sciences}, 72\penalty0 (5):\penalty0 890–921, August 2006.
\newblock ISSN 0022-0000.
\newblock \doi{10.1016/j.jcss.2005.05.008}.
\newblock URL \url{http://dx.doi.org/10.1016/j.jcss.2005.05.008}.

\bibitem[Benomar \& Perchet(2024{\natexlab{a}})Benomar and Perchet]{BenomarP24}
Benomar, Z. and Perchet, V.
\newblock Non-clairvoyant scheduling with partial predictions.
\newblock In \emph{Proceedings of the 41st International Conference on Machine Learning}, volume 235 of \emph{Proceedings of Machine Learning Research}, pp.\  3506--3538, 21--27 Jul 2024{\natexlab{a}}.

\bibitem[Benomar \& Perchet(2024{\natexlab{b}})Benomar and Perchet]{benomar24}
Benomar, Z. and Perchet, V.
\newblock Non-clairvoyant scheduling with partial predictions.
\newblock In Salakhutdinov, R., Kolter, Z., Heller, K., Weller, A., Oliver, N., Scarlett, J., and Berkenkamp, F. (eds.), \emph{Proceedings of the 41st International Conference on Machine Learning}, volume 235 of \emph{Proceedings of Machine Learning Research}, pp.\  3506--3538. PMLR, 21--27 Jul 2024{\natexlab{b}}.
\newblock URL \url{https://proceedings.mlr.press/v235/benomar24a.html}.

\bibitem[Bernardini et~al.(2022)Bernardini, Lindermayr, Marchetti{-}Spaccamela, Megow, Stougie, and Sweering]{BerardiniLMMSS22}
Bernardini, G., Lindermayr, A., Marchetti{-}Spaccamela, A., Megow, N., Stougie, L., and Sweering, M.
\newblock A universal error measure for input predictions applied to online graph problems.
\newblock \emph{CoRR}, abs/2205.12850, 2022.
\newblock \doi{10.48550/arXiv.2205.12850}.
\newblock URL \url{https://doi.org/10.48550/arXiv.2205.12850}.

\bibitem[Bhaskara et~al.(2020)Bhaskara, Cutkosky, Kumar, and Purohit]{BhaskaraC0P20}
Bhaskara, A., Cutkosky, A., Kumar, R., and Purohit, M.
\newblock Online linear optimization with many hints.
\newblock In \emph{{NeurIPS}}, 2020.

\bibitem[Blum \& Burch(2000)Blum and Burch]{BlumB00}
Blum, A. and Burch, C.
\newblock On-line learning and the metrical task system problem.
\newblock \emph{Mach. Learn.}, 39\penalty0 (1):\penalty0 35--58, 2000.
\newblock \doi{10.1023/A:1007621832648}.

\bibitem[Blum \& Mansour(2007)Blum and Mansour]{Blum_Mansour_2007}
Blum, A. and Mansour, Y.
\newblock \emph{Learning, Regret Minimization, and Equilibria}, pp.\  79–102.
\newblock Cambridge University Press, 2007.

\bibitem[Borodin et~al.(1992)Borodin, Linial, and Saks]{BorodinLS92}
Borodin, A., Linial, N., and Saks, M.~E.
\newblock An optimal on-line algorithm for metrical task system.
\newblock \emph{J. {ACM}}, 39\penalty0 (4):\penalty0 745--763, 1992.
\newblock \doi{10.1145/146585.146588}.

\bibitem[Bubeck et~al.(2019)Bubeck, Cohen, Lee, and Lee]{BCLL19}
Bubeck, S., Cohen, M.~B., Lee, J.~R., and Lee, Y.~T.
\newblock Metrical task systems on trees via mirror descent and unfair gluing.
\newblock In \emph{Proceedings of the Thirtieth Annual {ACM-SIAM} Symposium on Discrete Algorithms, {SODA} 2019}, pp.\  89--97, 2019.
\newblock \doi{10.1137/1.9781611975482.6}.
\newblock URL \url{https://doi.org/10.1137/1.9781611975482.6}.

\bibitem[Bubeck et~al.(2022{\natexlab{a}})Bubeck, Coester, and Rabani]{Bubeck22kServer}
Bubeck, S., Coester, C., and Rabani, Y.
\newblock The randomized k-server conjecture is false!
\newblock \emph{CoRR}, abs/2211.05753, 2022{\natexlab{a}}.
\newblock \doi{10.48550/arXiv.2211.05753}.

\bibitem[Bubeck et~al.(2022{\natexlab{b}})Bubeck, Coester, and Rabani]{BubeckCR22}
Bubeck, S., Coester, C., and Rabani, Y.
\newblock Shortest paths without a map, but with an entropic regularizer.
\newblock In \emph{{FOCS}}, pp.\  1102--1113. {IEEE}, 2022{\natexlab{b}}.

\bibitem[Cesa{-}Bianchi \& Lugosi(2006)Cesa{-}Bianchi and Lugosi]{BianchiLugosi}
Cesa{-}Bianchi, N. and Lugosi, G.
\newblock \emph{Prediction, learning, and games}.
\newblock Cambridge University Press, 2006.
\newblock ISBN 978-0-521-84108-5.
\newblock \doi{10.1017/CBO9780511546921}.
\newblock URL \url{https://doi.org/10.1017/CBO9780511546921}.

\bibitem[Cesa-Bianchi et~al.(2013)Cesa-Bianchi, Dekel, and Shamir]{CesaBianchi2013OnlineLW}
Cesa-Bianchi, N., Dekel, O., and Shamir, O.
\newblock Online learning with switching costs and other adaptive adversaries.
\newblock \emph{ArXiv}, abs/1302.4387, 2013.

\bibitem[Davies et~al.(2023)Davies, Moseley, Vassilvitskii, and Wang]{DaviesMVW23}
Davies, S., Moseley, B., Vassilvitskii, S., and Wang, Y.
\newblock Predictive flows for faster ford-fulkerson.
\newblock In \emph{Proceedings of the 40th International Conference on Machine Learning}, volume 202 of \emph{Proceedings of Machine Learning Research}, pp.\  7231--7248. PMLR, 23--29 Jul 2023.

\bibitem[Dekel et~al.(2013)Dekel, Ding, Koren, and Peres]{DekelDKP13}
Dekel, O., Ding, J., Koren, T., and Peres, Y.
\newblock Bandits with switching costs: T2/3 regret.
\newblock \emph{Proceedings of the forty-sixth annual ACM symposium on Theory of computing}, 2013.
\newblock URL \url{https://api.semanticscholar.org/CorpusID:9425790}.

\bibitem[Dinitz et~al.(2022)Dinitz, Im, Lavastida, Moseley, and Vassilvitskii]{DinitzILMV22}
Dinitz, M., Im, S., Lavastida, T., Moseley, B., and Vassilvitskii, S.
\newblock Algorithms with prediction portfolios.
\newblock In \emph{{NeurIPS}}, 2022.

\bibitem[Dong et~al.(2025)Dong, Peng, and Vakilian]{DongPV25}
Dong, Y., Peng, P., and Vakilian, A.
\newblock {Learning-Augmented Streaming Algorithms for Approximating MAX-CUT}.
\newblock In \emph{16th Innovations in Theoretical Computer Science Conference (ITCS 2025)}, volume 325, pp.\  44:1--44:24, Dagstuhl, Germany, 2025. Schloss Dagstuhl -- Leibniz-Zentrum f{\"u}r Informatik.
\newblock ISBN 978-3-95977-361-4.
\newblock \doi{10.4230/LIPIcs.ITCS.2025.44}.

\bibitem[Drygala et~al.(2023)Drygala, Nagarajan, and Svensson]{DrygalaNS23}
Drygala, M., Nagarajan, S.~G., and Svensson, O.
\newblock Online algorithms with costly predictions.
\newblock In Ruiz, F., Dy, J., and van~de Meent, J.-W. (eds.), \emph{Proceedings of The 26th International Conference on Artificial Intelligence and Statistics}, volume 206 of \emph{Proceedings of Machine Learning Research}, pp.\  8078--8101. PMLR, 25--27 Apr 2023.
\newblock URL \url{https://proceedings.mlr.press/v206/drygala23a.html}.

\bibitem[Eberle et~al.(2022)Eberle, Lindermayr, Megow, N{\"{o}}lke, and Schl{\"{o}}ter]{EberleLMNS22}
Eberle, F., Lindermayr, A., Megow, N., N{\"{o}}lke, L., and Schl{\"{o}}ter, J.
\newblock Robustification of online graph exploration methods.
\newblock In \emph{{AAAI}}, pp.\  9732--9740. {AAAI} Press, 2022.

\bibitem[Emek et~al.(2021)Emek, Kutten, and Shi]{EmekKS21}
Emek, Y., Kutten, S., and Shi, Y.
\newblock Online paging with a vanishing regret.
\newblock In \emph{{ITCS}}, 2021.

\bibitem[Fiat et~al.(1990)Fiat, Rabani, and Ravid]{FiatRR90}
Fiat, A., Rabani, Y., and Ravid, Y.
\newblock Competitive k-server algorithms (extended abstract).
\newblock In \emph{{FOCS}}, 1990.

\bibitem[Fiat et~al.(1991)Fiat, Karp, Luby, McGeoch, Sleator, and Young]{FiatKLMSY91}
Fiat, A., Karp, R.~M., Luby, M., McGeoch, L.~A., Sleator, D.~D., and Young, N.~E.
\newblock Competitive paging algorithms.
\newblock \emph{Journal of Algorithms}, 12\penalty0 (4):\penalty0 685--699, 1991.
\newblock ISSN 0196-6774.
\newblock \doi{https://doi.org/10.1016/0196-6774(91)90041-V}.
\newblock URL \url{https://www.sciencedirect.com/science/article/pii/019667749190041V}.

\bibitem[Freund \& Schapire(1997)Freund and Schapire]{FreundS97}
Freund, Y. and Schapire, R.~E.
\newblock A decision-theoretic generalization of on-line learning and an application to boosting.
\newblock \emph{Journal of Computer and System Sciences}, 55\penalty0 (1):\penalty0 119--139, 1997.
\newblock ISSN 0022-0000.
\newblock \doi{https://doi.org/10.1006/jcss.1997.1504}.
\newblock URL \url{https://www.sciencedirect.com/science/article/pii/S002200009791504X}.

\bibitem[Gollapudi \& Panigrahi(2019)Gollapudi and Panigrahi]{GollapudiP19}
Gollapudi, S. and Panigrahi, D.
\newblock Online algorithms for rent-or-buy with expert advice.
\newblock In \emph{{ICML}}, volume~97 of \emph{Proceedings of Machine Learning Research}, pp.\  2319--2327. {PMLR}, 2019.

\bibitem[Herbster \& Warmuth(1998)Herbster and Warmuth]{herbster1998tracking}
Herbster, M. and Warmuth, M.~K.
\newblock Tracking the best expert.
\newblock \emph{Machine learning}, 32\penalty0 (2):\penalty0 151--178, 1998.

\bibitem[Im et~al.(2022)Im, Kumar, Petety, and Purohit]{Im0PP22}
Im, S., Kumar, R., Petety, A., and Purohit, M.
\newblock Parsimonious learning-augmented caching.
\newblock In \emph{{ICML}}, 2022.

\bibitem[Kevi \& Nguyen(2023)Kevi and Nguyen]{kevi2023}
Kevi, E. and Nguyen, K.-T.
\newblock Online covering with multiple experts, 2023.
\newblock URL \url{https://arxiv.org/abs/2312.14564}.

\bibitem[Komm et~al.(2022)Komm, Kr\'{a}lovi\v{c}, Kr\'{a}lovi\v{c}, and M\"{o}mke]{KKKM2022}
Komm, D., Kr\'{a}lovi\v{c}, R., Kr\'{a}lovi\v{c}, R., and M\"{o}mke, T.
\newblock Randomized online computation with high probability guarantees.
\newblock \emph{Algorithmica}, 84\penalty0 (5):\penalty0 1357–1384, May 2022.
\newblock ISSN 0178-4617.
\newblock \doi{10.1007/s00453-022-00925-z}.

\bibitem[Kraska et~al.(2018)Kraska, Beutel, Chi, Dean, and Polyzotis]{KraskaBCDP18}
Kraska, T., Beutel, A., Chi, E.~H., Dean, J., and Polyzotis, N.
\newblock The case for learned index structures.
\newblock In \emph{Proceedings of {SIGMOD}'18}, pp.\  489--504, 2018.
\newblock \doi{10.1145/3183713.3196909}.

\bibitem[Lattimore \& Szepesvari(2017)Lattimore and Szepesvari]{lattimore-szepesvari}
Lattimore, T. and Szepesvari, C.
\newblock Bandit algorithms.
\newblock 2017.
\newblock URL \url{https://tor-lattimore.com/downloads/book/book.pdf}.

\bibitem[Lindermayr \& Megow(2022)Lindermayr and Megow]{LindermayrM22}
Lindermayr, A. and Megow, N.
\newblock Permutation predictions for non-clairvoyant scheduling.
\newblock In \emph{{SPAA}}, pp.\  357--368. {ACM}, 2022.

\bibitem[Lindermayr \& Megow(2023)Lindermayr and Megow]{website}
Lindermayr, A. and Megow, N.
\newblock Algorithms with predictions.
\newblock \url{https://algorithms-with-predictions.github.io}, 2023.
\newblock URL \url{https://algorithms-with-predictions.github.io}.
\newblock Online: accessed 2023-07-12.

\bibitem[Lykouris \& Vassilvitskii(2021)Lykouris and Vassilvitskii]{LykourisV21}
Lykouris, T. and Vassilvitskii, S.
\newblock Competitive caching with machine learned advice.
\newblock \emph{J. {ACM}}, 68\penalty0 (4):\penalty0 24:1--24:25, 2021.

\bibitem[Mitzenmacher \& Vassilvitskii(2022)Mitzenmacher and Vassilvitskii]{DBLP:journals/cacm/MitzenmacherV22}
Mitzenmacher, M. and Vassilvitskii, S.
\newblock Algorithms with predictions.
\newblock \emph{Commun. {ACM}}, 65\penalty0 (7):\penalty0 33--35, 2022.
\newblock \doi{10.1145/3528087}.
\newblock URL \url{https://doi.org/10.1145/3528087}.

\bibitem[Rohatgi(2020)]{Rohatgi20}
Rohatgi, D.
\newblock Near-optimal bounds for online caching with machine learned advice.
\newblock In \emph{{SODA}}, 2020.

\bibitem[Rouyer et~al.(2021)Rouyer, Seldin, and Cesa-Bianchi]{RouyerSCB21}
Rouyer, C., Seldin, Y., and Cesa-Bianchi, N.
\newblock An algorithm for stochastic and adversarial bandits with switching costs.
\newblock In Meila, M. and Zhang, T. (eds.), \emph{Proceedings of the 38th International Conference on Machine Learning}, volume 139 of \emph{Proceedings of Machine Learning Research}, pp.\  9127--9135. PMLR, 18--24 Jul 2021.

\bibitem[Sadek \& Eli\'a\v{s}(2024)Sadek and Eli\'a\v{s}]{sadek2024}
Sadek, K. A.~A. and Eli\'a\v{s}, M.
\newblock Algorithms for caching and {MTS} with reduced number of predictions.
\newblock In \emph{The Twelfth International Conference on Learning Representations}, 2024.

\bibitem[Slivkins(2019)]{Slivkins19}
Slivkins, A.
\newblock Introduction to multi-armed bandits.
\newblock \emph{CoRR}, abs/1904.07272, 2019.
\newblock URL \url{http://arxiv.org/abs/1904.07272}.

\bibitem[Wang et~al.(2020)Wang, Li, and Wang]{WangLW20}
Wang, S., Li, J., and Wang, S.
\newblock Online algorithms for multi-shop ski rental with machine learned advice.
\newblock In \emph{{NeurIPS}}, 2020.

\bibitem[Wei(2020)]{Wei20}
Wei, A.
\newblock Better and simpler learning-augmented online caching.
\newblock In \emph{{APPROX/RANDOM}}, 2020.

\end{thebibliography}

\newpage
\appendix
\onecolumn

\section{Omitted Proofs from Section \ref{sec:UB}}
\label{sec:Proof_roundcost}

We compare the cost of Algorithm~\ref{alg:BMTSround}
to the cost of a hypothetical algorithm $A'$ which, at each time step $t$,
is located at state $s_t^{i_t}$.
At time step $t$, this algorithm pays cost
$C'_t = c_t(s_t^{i_t}) + d(s_{t-1}^{i_{t-1}}, s_t^{i_t})$.

As a proxy for the cost of Algorithm~\ref{alg:BMTSround}, we define
$C_t$ in the following way.
If $s_t = s_t^{i_t}$, we set
$C_t := C'_t$. Otherwise, we set
\[
C_t := d(s_{t-1}^{i_{t-1}}, s_t)
    + c_t(s_t) + d(s_t, b_t) + d(b_t, s_{t}^{i_t}).
\]
This way, the total cost of Algorithm~\ref{alg:BMTSround} is at most $\sum_{t=1}^T C_t$,
since the cost of its movement $d(b_t, s_{t+1}) \leq d(b_t, s_t^{i_t}) + d(s_t^{i_t}, s_{t+1})$
at time $t+1$ is split into $C_t$ and $C_{t+1}$.

The following statement is part of the proof of Lemma~5.4 in \cite{AntoniadisCEPS23}, we include it here for completeness.
\begin{proposition}[\citet{AntoniadisCEPS23}]
\label{lem:Mixinground}
Consider $t \in \{1, \dotsc, T\}$ and denote
$\tau\leq t$ the last step such that $s_\tau = s_\tau^{i_\tau}$.
Then $C_t \leq C'_t + O(1)\sum_{t'=\tau+1}^t C'_{t'}$.
\end{proposition}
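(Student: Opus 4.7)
My plan is to split the argument into the cases $\tau=t$ and $\tau<t$. In the easy case $\tau=t$, the definition of $C_t$ immediately gives $C_t=C'_t$, which trivially satisfies the claim.

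For $\tau<t$, the key structural observation is that throughout the blackout window $t'=\tau+1,\ldots,t$ the state $s_{t'}^{i_{t'}}$ is unknown to Algorithm~\ref{alg:BMTSround} by maximality of $\tau$, so $b_{t'}$ is never updated; combined with $b_\tau=s_\tau^{i_\tau}$, this yields $b_t=s_\tau^{i_\tau}$. The bound on $C_t$ then rests on two ingredients. First, the greedy choice $s_t=\arg\min_s(d(b_t,s)+c_t(s))$ gives $d(b_t,s_t)+c_t(s_t)\leq d(b_t,s_t^{i_t})+c_t(s_t^{i_t})$, allowing me to eliminate the algorithm's own state $s_t$ in favor of the heuristic state $s_t^{i_t}$ wherever it appears in the definition of $C_t$. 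Second, iterated triangle inequality along the trajectory of the hypothetical algorithm $A'$ yields
\[
d(b_t,s_t^{i_t})=d(s_\tau^{i_\tau},s_t^{i_t})\leq\sum_{t'=\tau+1}^{t}d(s_{t'-1}^{i_{t'-1}},s_{t'}^{i_{t'}})\leq\sum_{t'=\tau+1}^{t}C'_{t'},
\]
and analogously $d(s_{t-1}^{i_{t-1}},b_t)\leq\sum_{t'=\tau+1}^{t-1}C'_{t'}$.

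Concretely, starting from $C_t=d(s_{t-1}^{i_{t-1}},s_t)+c_t(s_t)+d(s_t,b_t)+d(b_t,s_t^{i_t})$, I would apply the triangle inequality to $d(s_{t-1}^{i_{t-1}},s_t)$ via $b_t$, invoke the greedy bound whenever $c_t(s_t)$ or $d(s_t,b_t)$ appears (eliminating both terms at the price of a few extra copies of $d(b_t,s_t^{i_t})$ and $c_t(s_t^{i_t})$), and finally substitute the trajectory bounds on the remaining distances. Since $c_t(s_t^{i_t})\leq C'_t$, the resulting inequality takes the desired form $C_t\leq C'_t+O(1)\sum_{t'=\tau+1}^{t}C'_{t'}$. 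The only real ``difficulty'' is bookkeeping through the repeated triangle-and-greedy applications, but since only an $O(1)$ constant is required no optimization of the constants is needed.
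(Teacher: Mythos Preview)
Your proposal is correct and follows essentially the same approach as the paper's proof: both split into the cases $\tau=t$ and $\tau<t$, use $b_t=s_\tau^{i_\tau}$, route $d(s_{t-1}^{i_{t-1}},s_t)$ through $b_t$ via the triangle inequality, apply the greedy bound $d(b_t,s_t)+c_t(s_t)\leq d(b_t,s_t^{i_t})+c_t(s_t^{i_t})$, and then bound the remaining terms $d(b_t,s_t^{i_t})$, $d(b_t,s_{t-1}^{i_{t-1}})$, and $c_t(s_t^{i_t})$ by the trajectory cost $\sum_{t'=\tau+1}^{t}C'_{t'}$ of the hypothetical algorithm $A'$. Your iterated triangle inequality is exactly the ``$A'$ has to traverse from $b_t$ to $s_{t-1}^{i_{t-1}}$ and $s_t^{i_t}$'' step in the paper, just spelled out explicitly.
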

\begin{proof}
If $\tau=t$, i.e., $s_t = s_t^{i_t}$,
then we have $C_t=C_t'$ and the sum $\sum_{t'=\tau+1}^t C'_{t'}$ is empty.
Otherwise, $\tau < t$ and the following relations hold due to
$b_t = b_\tau = s_\tau^{i_\tau}$,
triangle inequality, and the greedy choice of $s_t$ in Algorithm~\ref{alg:BMTSround} respectively.
\begin{align*}
d(b_t, s_t^{i_t}) &= d(b_\tau, s_t^{i_t})\\
d(s_{t-1}^{i_{t-1}}, s_t) &\leq
    d(s_{t-1}^{i_{t-1}}, b_t) + d(b_t, s_t)\\
d(b_t, s_t) + c_t(s_t) &\leq d(b_t, s_t^{i_t})+c_t(s_t^{i_t}).
\end{align*}
Plugging this in the definition of $C_t$, we get
\begin{align*}
C_t &= d(s_{t-1}^{i_{t-1}}, s_t) + c_t(s_t) + d(s_t, b_t) + d(b_t, s_{t}^{i_t})\\
    &\leq d(b_t,s_{t-1}^{i_{t-1}}) + d(b_t,s_t) + c_t(s_t) + d(s_t, b_t) + d(b_t, s_{t}^{i_t})\\
    &\leq d(b_t,s_{t-1}^{i_{t-1}}) + 2\big(d(b_t, s_t) + c_t(s_t)\big) + d(b_t, s_{t}^{i_t})\\
    &\leq [d(b_t,s_{t-1}^{i_{t-1}})] + 2[d(b_t, s_t^{i_t})+c_t(s_t^{i_t})] + [d(b_t, s_{t}^{i_t})].
\end{align*}
Between time steps $\tau$ and $t$,
$A'$ has to traverse from $b_t=s_\tau^{i_\tau}$ to $s_{t-1}^{i_{t-1}}$ as well as 
$s_t^{i_t}$ and pay $c_t(s_t^{i_t})$.
Therefore, each bracket in the equation above is bounded by $\sum_{t'=\tau+1}^t C'_{t'}$.
\end{proof}

\begin{proof}[Proof of Lemma~\ref{lem:alg-cost}]
By Proposition~\ref{lem:Mixinground},
the cost of Algorithm~\ref{alg:BMTSround} is at most
\[
\sum_{t=1}^T C_t
    \leq \sum_{t=1}^T C'_t + O(1) \sum_{t=1}^T\sum_{\tau_t +1}^t C'_t
    = \sum_{t=1}^T C'_t + O(1) \sum_{t=1}^T (a_t-t) C'_t,
\]
where $\tau_t := \max\{t' \mid t \leq t \text{ and } s_{t'} = s_{t'}^{i_{t'}}\}$
and $a_t := \min\{t' \mid t \geq t \text{ and } s_{t'} = s_{t'}^{i_{t'}}\}$.
Therefore, it is enough to show that
\[
\E\left[\sum_{t=1}^T (a_t - t)C'_t\right] \leq O(\epsilon m^2) \E\left[\sum_{t=1}^T C'_t\right].
\]
If $a_t>t$, then $s_t \neq s_t^{i_t}$ and there must be an exploration step within $m$ time steps before or after $t$.
Therefore, we have
\begin{align*}
\E\left[(a_t - t)C'_t\right]
&\leq \sum_{t' = t-m}^{t+m} \E\left[C'_t E_{t'}(a_t - t)\right]\\
&\leq \sum_{t' = t-m}^{t+m} \E\left[C'_t \E\left[E_{t'}(a_t - t)\mid \cF_t\right]\right].
\end{align*}
Here, we used that $i_t$ and therefore $C'_t$ is determined only by random choices up to time $t$.
Now, for each $t' = t-m, \dotsc, t+m$, we have
\begin{align*}
\E[E_{t'}(a_t - t)\mid \cF_t]
&= \E[\beta_{t'-m} X_{t'-m} (a_t - t)\mid \cF_t]
\leq \beta_{t'-m} (1+ \sum_{i=1}^T mi (2m\epsilon)^i)
\leq m\beta_{t'-m} O(1),
\end{align*}
because in order to have $a_t-t=mi$, we need to have $\beta_{t''} =1$ in at least once in every block of $2m$
time steps between $t$ to $t+mi$ and $\epsilon <1/2$.
Therefore, we have
\[
\E\left[\sum_{t=1}^T(a_t - t) C'_t\right] \leq O(\epsilon m^2) \E\left[\sum_{t=1}^T C'_t\right]
\]
which concludes the proof.
\end{proof}

\section{Rounding Algorithm for MTS}\label{sec:appendix_round}

We describe a procedure for rounding the solutions produced by fractional algorithms inspired by \citet{BlumB00} and use it to prove Proposition \ref{prop:round_1} and Proposition \ref{prop:comb_round}. 

For two distributions $p, p' \in \Delta^N$, $\EMD(p, p') = \sum_{i=1}^{N} \max(0, p(i) - p'(i))$. For every $i, j \in [N]$, let $\tau_{i, j} \ge 0$ represent the mass transferred from $p(i)$ to $p'(j)$ such that $p(i) = \sum_{j=1}^N \tau_{i, j}$ and $\EMD(p, p') = \sum_{i \neq j} \tau_{i, j}$. Suppose a fractional algorithm chose $i \in [N]$ at the previous time step with the associated distribution $p$. At the next time step (with a new associated distribution $p')$, the algorithm moves to $i' \in [N]$ with probability $\tau_{i, i'} / p(i)$. This procedure is summarized in Algorithm \ref{alg:Round} which we also refer to as \textit{Round}. Note that Algorithm $\ref{alg:Round}$ can be applied to distributions over the states of a metric space (i.e., $N = M$) as well as distributions over heuristics/arms (i.e., $N = \ell$). We will not mention $N$ explicitly when calling the procedure as a sub-routine for conciseness.
\begin{algorithm2e}\label{alg:Round}
\DontPrintSemicolon
\caption{Round}
\textbf{Input:} $i, p, p', N$ \;
\;

\For{$j=1, \dotsc, N$}{
    $\tau_{i, j} :=  \text{mass transferred from $p(i)$ to $p'(j)$}$ \;
    $\text{set $q(j) := \frac{\tau_{i, j}}{p(i)}$}$ \;
}
$\text{sample $i' \sim q$ }$ \;
\end{algorithm2e}
\begin{proof}[Proof of Property \ref{prop:round_1}]
    Given $p_1, \dots, p_T \in \Delta^M$, let $s_1, \dots, s_T$ be the solutions produced after iteratively calling Algorithm \ref{alg:Round}, i.e. $s_t := \text{Round$(s_{t-1}, p_{t_1}, p_t)$}$ for every $t \in [T]$. Then for every $t \in [T]$ we have
    \begin{align*}
        \E[c_t(s_t) + d(s_t, s_{t-1})] = \sum_{i=1}^{|M|}\left( c_t(i) p_t(i) + \sum_{j = 1, j \neq i}^{|M|} \tau_{i, j}\right) = c_t^T p_t + \EMD(p_{t-1}, p_t),
    \end{align*}
    from which we obtain the desired conclusion by summing over $t = 1, \dots T$.
\end{proof}
\begin{proof}[Proof of Property \ref{prop:comb_round}]
    Given $x_1, \dots, x_T \in \Delta^{\ell}$, let $i_1, \dots, i_T$ be the solutions produced after iteratively calling Algorithm \ref{alg:Round}, i.e. $i_t := \text{Round$(i_{t-1}, x_{t_1}, x_t)$}$ for every $t \in [T]$ and $i_0$ is selected uniformly at random. We subsequently define $s_t^{i_t}$ as the state predicted by heuristic $H_{i_t}$ at time $t$. It follows that for every $t \in [T]$ we have
    \begin{align*}
        \E[c_t(s_t^{i_t}) + d(s_t^{i_t}, s_{t-1}^{i_{t-1}})] \le f_t^T x_t + D \cdot \EMD(x_t, x_{t-1}) = f_t^T x_t + \frac{D}2 \lVert x_{t-1}-x_t\rVert_1,
    \end{align*}
    where we argued as in the proof of Property \ref{prop:round_1} and used the fact that the distance between any two states is bounded by the diameter $D$. By summing over $t = 1, \dots T$ we obtain the desired conclusion.
\end{proof}

\section{Online Learning from Expert Advice}\label{sec:online_learning}

\subsection{Classical Algorithms}
In this setting, a learner plays an iterative game against an oblivious adversary for $T$ rounds. At each round $t$, the algorithm chooses among $\ell$ \textit{experts}, incurs the cost associated to its choice, and then observes the losses of all the experts at time $t$. In this full-feedback model, several successful algorithms have been proposed and demonstrated to achieve strong performance guarantees. For an in-depth analysis of these classical algorithms, we invite the reader to see \citep{BianchiLugosi, BlumB00}.

\paragraph{HEDGE.} The idea behind this algorithm is simple: associate to each expert a weight and then use an exponential update rule for the weights after observing a new loss function. The weights are normalized to produce a probability distribution over the expert set, from which the subsequent action is sampled. The learning dynamics are summarized in Algorithm \ref{alg:hedge}. 

\begin{algorithm2e}\label{alg:hedge}
\DontPrintSemicolon
\caption{HEDGE}
\textbf{Input:} $\eta, \ell, T$ \;
$w_0(i) := 1 \quad \forall i \in [\ell]$ \;
\For{$t = 1, \dots, T$}{
    $W_t = \sum_{i \in [\ell]} w_t(i)$ \;
    $x_t(i) = w_t(i) / W_t \quad \forall i \in [\ell]$ \;
    Play $i_t \sim x_t$ \;
    Observe $g_t \in [0, 1]^{\ell}$ \;
    Set $w_{t+1}(i) := w_t(i) \exp (-\eta \cdot g_{t-1}(i)) \quad \forall i \in [\ell]$ \;
}
\end{algorithm2e}

\paragraph{SHARE.} This algorithm starts from the same exponential update rule as HEDGE, but introduces an additional term. Given the reduction in the sum of the weights $\Delta$, the SHARE algorithm adds to each weight a fixed fraction $\alpha \cdot \Delta$. The effect of this change is that information is \textit{shared} across experts, which makes the algorithm better at tracking the best moving expert. This allows SHARE to achieve good performance with respect to a benchmark that is a allowed to switch experts a limited number of time. Algorithm \ref{alg:share} summarizes these dynamics.

\begin{algorithm2e}\label{alg:share}
\DontPrintSemicolon
\caption{SHARE}
\textbf{Input:} $\eta, \alpha, \ell, T$ \;
$w_0(i) := 1 \quad \forall i \in [\ell]$ \;
\For{$t = 1, \dots, T$}{
    $W_t = \sum_{i \in [\ell]} w_t(i)$ \;
    $x_t(i) = w_t(i) / W_t \quad \forall i \in [\ell]$ \;
    Play $i_t \sim x_t$ \;
    Observe $g_t \in [0, 1]^{\ell}$ \;
    Compute $\Delta := \sum_{i \in [\ell]} w_t(i)(1 - \exp(-\eta \cdot g_t(i)))$ \;
    Set $w_{t+1}(i) := w_t(i) \exp (-\eta \cdot g_{t-1}(i)) + \alpha \cdot \Delta \quad \forall i \in [\ell]$ \;
}
\end{algorithm2e}

\subsection{Proof of Property~\ref{property:stability}}\label{sec:appendix_stability}
We consider the weight vector $w_t \in [0, \infty)^{\ell}$ associated to each expert and the loss vector $g_{t-1} \in [0, 1]^{\ell}$. The distribution over the experts $x_{t-1} \in [0, 1]^{\ell}$ is obtained as $x_{t-1}(i) = w_{t-1}(i)/W_{t-1}$, where $W_{t-1} = \sum_{j=1}^{\ell} w_{t-1}(j)$.

The HEDGE algorithm with parameter $\eta > 0$ uses the following update rule
\begin{equation}
    w_t(i) := w_{t-1}(i) \cdot \exp{(-\eta \cdot g_{t-1}(i))} \quad \forall i \in [\ell].
\label{eq:hedge}
\end{equation}

The SHARE algorithm with parameters $\eta > 0$ and $\alpha \in [0, 1/2]$ uses the following update rule
\begin{equation}
    w_t(i) := w_{t-1}(i) \cdot \exp{(-\eta \cdot g_{t-1}(i))} + \alpha \cdot \Delta / \ell \quad \forall i \in [\ell]
\label{eq:share}
\end{equation}
where $\Delta := \sum_{j=1}^l (w_{t-1}(j) - \exp{(-\eta \cdot g_{t-1}(j))}w_{t-1}(j))$. 

The following statement is part of the proof of Theorem 10 in \cite{BlumB00}, which we include here for completeness.
\begin{lemma}\label{lem:shareweights}
    The SHARE algorithm satisfies the following
    \begin{equation}\label{eq:share_prop}
        \sum_{i: x_t(i) < x_{t-1}(i)} (x_{t-1}(i) - x_t(i)) \le \sum_{i: x_t(i) < x_{t-1}(i)} x_{t-1}(i)(1 - \exp{(-\eta \cdot g_{t-1}(i))})
    \end{equation}
    for every $t \in [T]$.
\end{lemma}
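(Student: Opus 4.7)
The plan is to establish the inequality pointwise, index by index, and then sum over the indices where $x_t(i) < x_{t-1}(i)$. So I will show that for every $i \in [\ell]$,
\[
x_{t-1}(i) - x_t(i) \le x_{t-1}(i)\bigl(1 - \exp(-\eta \cdot g_{t-1}(i))\bigr),
\]
which trivially implies \eqref{eq:share_prop} (indeed one could even sum over all $i$, though on the complementary set the LHS is non-positive while the RHS is non-negative).

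The pointwise bound reduces to two elementary observations about the SHARE update \eqref{eq:share}. First, since $\alpha \Delta/\ell \ge 0$, the update implies $w_t(i) \ge w_{t-1}(i)\exp(-\eta g_{t-1}(i))$ for every $i$. Second, summing \eqref{eq:share} over $i$ and using the definition of $\Delta$, one gets
\[
W_t = \sum_{j} w_{t-1}(j)\exp(-\eta g_{t-1}(j)) + \alpha \Delta = W_{t-1} - (1-\alpha)\Delta \le W_{t-1},
\]
because $\Delta \ge 0$ and $\alpha \le 1$. Combining the two observations yields
\[
x_t(i) = \frac{w_t(i)}{W_t} \ge \frac{w_{t-1}(i)\exp(-\eta g_{t-1}(i))}{W_{t-1}} = x_{t-1}(i)\exp(-\eta g_{t-1}(i)),
\]
which is exactly the desired pointwise inequality after rearrangement.

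There is really no obstacle here: the argument is a two-line sandwich estimate, with the only mild point being to track signs when combining a lower bound on $w_t(i)$ with an upper bound on $W_t$. Summing the pointwise bound over the set $\{i : x_t(i) < x_{t-1}(i)\}$ yields \eqref{eq:share_prop} and concludes the proof.
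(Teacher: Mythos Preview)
Your proof is correct and follows essentially the same approach as the paper's own proof: drop the non-negative sharing term $\alpha\Delta/\ell$ from the numerator and use $W_t \le W_{t-1}$ in the denominator to obtain $x_t(i) \ge x_{t-1}(i)\exp(-\eta g_{t-1}(i))$. The only cosmetic difference is that you state the bound pointwise before summing, while the paper carries out the same two inequalities directly on the sum; you also make the identity $W_t = W_{t-1} - (1-\alpha)\Delta$ explicit, which the paper leaves implicit.
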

\begin{proof}
    Using the update rule in (\ref{eq:share}) we have
    \begin{align*}
        \sum_{i: x_t(i) < x_{t-1}(i)} (x_{t-1}(i) - x_t(i)) &= \sum_{i: x_t(i) < x_{t-1}(i)} \left(\frac{w_{t-1}}{W_{t-1}} - \frac{w_{t-1} \cdot \exp{(-\eta \cdot g_{t-1}(i))} + \alpha \cdot \Delta / \ell}{W_{t}}\right) \\
        &\le \sum_{i: x_t(i) < x_{t-1}(i)} \left(\frac{w_{t-1}}{W_{t-1}} - \frac{w_{t-1} \cdot \exp{(-\eta \cdot g_{t-1}(i))}}{W_{t}}\right) \\
        &\le \sum_{i: x_t(i) < x_{t-1}(i)} \left(\frac{w_{t-1}}{W_{t-1}} - \frac{w_{t-1} \cdot \exp{(-\eta \cdot g_{t-1}(i))}}{W_{t-1}}\right)
    \end{align*}
    where the last inequality uses the observation that $W_t \le W_{t-1}$ for all $t \in [T]$.
\end{proof}
\begin{proof}[Proof of Property~\ref{property:stability}]
The proof for HEDGE follows immediately from Theorem 3 in \cite{BlumB00}. The result for SHARE is obtained by applying \ref{lem:shareweights}, followed by the previous argument, since the right-hand side of (\ref{eq:share_prop}) contains the update rule in (\ref{eq:hedge}). As a consequence, for both HEDGE and SHARE, $\eta$ (i.e., the learning rate parameter) directly appears in the expression of Property \ref{property:stability}.
\end{proof}

\section{Estimating the Baseline Online}
\label{appendix:OPT}

The bounds obtained in Theorem \ref{theorem:hedge} and Theorem \ref{theorem:share} require tuning parameters based on the values of $\OPT_{\le 0}$ and $\OPT_{\le k}$, respectively. These values are typically unknown a priori and we need to estimate them online.
In particular, our algorithm can observe the MTS input instance and, at each time $t$,
compute the value of the offline optimal solution up to time $t$.
Let $\OFF$ denote the cost of the offline optimal solution to the given input instance.
If at least one of the heuristics $H_1, \dotsc, H_\ell$ achieves cost at most
$\OFF$, then $\OFF\leq \OPT_{\leq k} \leq R\OFF$.
We use $\OFF$ as an estimate of $\OPT_{\leq k}$ in order to tune the parameters of our algorithm
and achieve regret bound depending on $R$. Note that we do not need to know $R$ beforehand
and we can ensure that $R$ is bounded by including some classical online algortihm
which is competitive in the worst case among $H_1, \dotsc, H_\ell$.

We use the guess and double trick
which resembles the classical problem of estimating the time horizon $T$ in a MAB setting, see \cite{BianchiLugosi, lattimore-szepesvari}. Here, one starts with a prior estimate of the time horizon and an instance of some online algorithm whose parameters are tuned based on this estimate. Whenever the estimate becomes smaller than the index of the current iteration, the guess is doubled, a new instance of the online algorithm is created, and its parameters are set according to the new guess. In what follows, we show how to adapt this strategy to our setting. We focus on $\OPT_{\le 0}$, but a similar argument can be made for $\OPT_{\le k}$. 

Let $\ALG(\omega)$ be Algorithm \ref{alg:BMTSround} configured with parameters $\epsilon := (D\ell\ln{\ell})^{1/3} m^{-4/3}\omega^{-1/3}$ and $\gamma := (D\ell\ln{\ell})^{1/3} m^{2/3} \omega^{-1/3}$. Here $D, \ell, m$ are known and $\omega$ is our estimate of the unknown $\OPT_{\le 0}$. Our strategy is to instantiate a sequence of algorithms $\ALG_1, \ALG_2, \dots, \ALG_N$ such that $\ALG_i = \ALG(2^i\omega)$ runs between rounds $a_i$ (included) and $b_i$ (excluded). The optimal heuristic on interval $i$ has cost $\OPT_i \le 2^i \omega$ by construction. Let $\OFF_i$ be the offline MTS optimal cost computed on the interval between $a_i$ and $b_i$. Note that we can always compute this value at time $t$ after observing the sequence of local cost functions $c_{a_i} \dots, c_{b_i-1}$. Suppose one of the heuristics is $R$-competitive with high probability w.r.t. the offline MTS optimum. Then it must be that $2^i \omega \le R \OFF_i$ with high probability. This effectively provides a threshold for switching to the next instance without knowledge of the true value of $\OPT_i$. It remains to show that this procedure, summarized in Algorithm \ref{alg:double}, guarantees a limited overhead w.r.t. to the bound that requires knowledge of $\OPT_{\le 0}$. 

\begin{algorithm2e}\label{alg:double}
\DontPrintSemicolon
\caption{Doubling algorithm}
\textbf{Input:} $\omega$ \;
\;
$i = 0$ \;
$t = 1$ \;
$\ALG_0 := \ALG(\omega)$ \;
\While{$t \le T$}{
    \If{$\OFF_i > R \cdot 2^i \omega$}{
    $i ++$ \;
    $\ALG_i := \ALG(2^i \omega)$ \;
    }
    Choose $s_t$ according to $\ALG_i$\;
    $t ++$ \;
}
\end{algorithm2e}

\begin{proposition}

    Let $D, \ell, m$ be fixed and assume there exists $\bar{H} \in \{H_1, \dots, H_{\ell}\}$ such that $\bar{H}$ is $R$-competitive with high probability. Then Algorithm \ref{alg:double} achieves regret $O(R^{2/3} \OPT_{\le 0}^{2/3})$ with respect to $\OPT_{\le 0}$.
\end{proposition}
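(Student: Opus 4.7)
The plan is to decompose the execution of Algorithm~\ref{alg:double} into epochs indexed by $i = 0, 1, \dots, N$, where $\ALG_i := \ALG(2^i \omega)$ operates on time interval $[a_i, b_i)$ and $N$ denotes the final index. For each epoch I define $\OPT_i$ as the cost of the best heuristic restricted to interval $i$. The first step is to verify that, whenever $\ALG_i$ is active, the tuning parameter $2^i\omega$ is a valid proxy for $\OPT_i$ up to a factor of $R$. This follows from the switching rule, which ensures that $\OFF_i$---the offline MTS optimum on interval $i$, which is computable online from the cost functions---remains bounded in terms of $2^i\omega$ throughout the epoch. Combined with the $R$-competitiveness of $\bar H$, which yields $\OPT_i \le R \cdot \OFF_i$, this gives a quantitative upper bound on $\OPT_i$ in terms of the current guess.

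The second step is to bound the final guess. At the moment when $\ALG_{N-1}$ was retired, the trigger $\OFF_{N-1}$ exceeded $R \cdot 2^{N-1}\omega$; combined with $\OFF_{N-1} \le \OPT_{\le 0}$ this forces $2^{N-1}\omega = O(\OPT_{\le 0})$ and hence $2^N\omega = O(R\cdot\OPT_{\le 0})$, so the final tuning is within a factor $R$ of the truth. In parallel I apply Theorem~\ref{theorem:hedge} on each epoch to obtain
\[
\E\bigl[\text{cost of } \ALG_i \text{ on epoch } i\bigr] \;\le\; \OPT_i + O\bigl((2^i\omega)^{2/3}\bigr),
\]
with hidden constants depending on $D, \ell, m$. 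Summing and using that the global best heuristic $H^*$ restricted to interval $i$ is a feasible candidate for $\OPT_i$ (so $\sum_i \OPT_i \le \sum_i C_i(H^*) = \OPT_{\le 0}$) gives
\[
\E[\ALG] \;\le\; \OPT_{\le 0} \;+\; O\!\left(\sum_{i=0}^{N}(2^i\omega)^{2/3}\right) \;+\; O(N\cdot D).
\]
The geometric sum is dominated by its last term, $O((2^N\omega)^{2/3}) = O(R^{2/3}\OPT_{\le 0}^{2/3})$; and since $N = O(\log(\OPT_{\le 0}/\omega))$ is logarithmic, the boundary term $O(N\cdot D)$ incurred by switching the running algorithm (at most $D$ per transition) is absorbed into the dominant term.

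The main obstacle I anticipate is carefully aligning the switching threshold with Theorem~\ref{theorem:hedge}'s dependence on the guessed value of $\OPT_{\le 0}$: the regret scales as $A\bigl((\omega')^{-1/3}\OPT_{\le 0} + (\omega')^{2/3}\bigr)$, so an $R$-multiplicative mismatch between the guess and the truth inflates the bound by $R^{2/3}$ rather than by a higher power, but only if both directions are controlled. A secondary subtlety is that the $R$-competitiveness of $\bar H$ holds with high probability, which will require a union bound over the $O(\log\OPT_{\le 0})$ epochs so that the switching criterion is trustworthy throughout the run. Finally, one has to check that Theorem~\ref{theorem:hedge}'s per-epoch bound remains valid even though $\ALG_i$ begins each epoch in whatever state $\ALG_{i-1}$ left it in, rather than at the designated $s_0$; this is exactly what the $O(D)$ transition cost per epoch handles.
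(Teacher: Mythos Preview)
Your proposal is correct and follows essentially the same approach as the paper: decompose into doubling epochs, apply Theorem~\ref{theorem:hedge} per epoch, bound $\OPT_i$ via the $R$-competitiveness of $\bar H$ and the switching threshold, sum the geometric series, and control the final guess $2^N\omega$ in terms of $\OPT_{\le 0}$. The only noteworthy difference is that the paper bounds $2^N\omega$ via $2^i\omega \le \OPT_i$ (which holds at each switch since $\OPT_i \ge \OFF_i$) together with $\sum_i \OPT_i \le \OPT_{\le 0}$, whereas you bound it directly from the trigger at epoch $N-1$; your extra care about the $O(N\cdot D)$ boundary cost, the union bound over epochs for the high-probability event, and the explicit two-term form $O\bigl((\omega')^{-1/3}\OPT_i + (\omega')^{2/3}\bigr)$ of the mis-tuned regret are all points the paper's proof leaves implicit.
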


\begin{proof}
    By construction of the sequence of algorithms, we have that $2^i \omega \le \OPT_i$ and $\sum_{i=1}^N \OPT_i \le \OPT_{\le 0}$. It follows that
    \begin{equation*}
        \sum_{i=1}^N 2^i \omega = \omega(2^N - 1) \le OPT_{\le 0},
    \end{equation*}
    which implies that $N \le \log(\OPT_{\le0})$. Moreover, we have
    \begin{equation*}
        \begin{split}
            \Reg &= \E[\ALG] - \OPT_{\le 0}
            = \sum_{i=0}^N \left(\E[\ALG_i] - \OPT_i\right) \\
            &\le \sum_{i=0}^N O(\OPT_i^{2/3})
            \le O\left(\sum_{i=0}^N (R \cdot 2^{i+2} \omega) ^ {2/3} \right)\\
            &\le O\left(R^{2/3} \OPT_{\le 0}^{2/3}\right), \\
        \end{split}
    \end{equation*}
    where the first inequality comes from Theorem \ref{thm:intro_UB0} and the second from the assumption in the hypothesis. 
\end{proof}
The previous result means that we can guarantee regret which is worse by a factor of $R^{2/3}$ w.r.t. to the bound which requires knowledge of $\OPT_{\le 0}$. $R$ depends on the MTS variant being solved.
For general MTS, we can assume $R \leq 2n-1$ by including the classical deterministic algorithm
by \citet{BorodinLS92}, or $R = O(\log^2 n)$ by including the algorithm by \citet{BCLL19}.
Using the procedure of \citet{KKKM2022}, any algorithm for MTS which is
$R$-competitive in expectation can be made $(1+\alpha)R$-competitive with high
probability for any $\alpha>0$.

\end{document}